\newcommand{\UTIASmultititle}{Informed Sampling for Asymptotically Optimal Path Planning (Consolidated Version)}
\newcommand{\UTIASsingletitle}{Informed Sampling for Asymptotically Optimal Path Planning (Consolidated Version)}
\newcommand{\UTIASauthor}{\texorpdfstring{%
                                             Gammell \MakeLowercase{\textit{et al.}}%
                                         }{%
                                             Gammell et al.%
                                         }}
\newcommand{\UTIASfooter}{\bf This document consolidates the published paper \cite{gammell_tro18} with its supplementary material and presents it as reviewed.}
\setlist{nosep} %
\theoremstyle{plain}
\newtheorem{lem}{Lemma}
\newtheorem{thm}[lem]{Theorem} %
\newtheorem{cor}[lem]{Corollary} %
\newtheorem{defn}[lem]{Definition} %
\newcommand{\escapeParagraph}{\vspace{1.5ex}}
\newcommand{\revisit}[1]{\tag{\ref{#1} redux}}
\newcommand{\captionbreak}[0]{\\\hspace{\textwidth}}
\newcommand{\algorithmStyle}[0]{\renewcommand{\algorithmcfname}{Alg.}\algorithmCapStart} %
\newcommand{\algo}[1]{Alg.~\ref{#1}}
\newcommand{\algos}[2]{Algs.~\ref{#1}--\ref{#2}}
\newcommand{\algoAnd}[2]{Algs.~\ref{#1} and \ref{#2}}
\newcommand{\algoline}[2]{\algo{#1}, Line~\ref{#1:#2}}
\newcommand{\algolines}[3]{\algo{#1}, Lines~\ref{#1:#2}--\ref{#1:#3}}
\newcommand{\newAlgoLine}[1]{ {\color{BrickRed} #1} }
\let\oldnl\nl%
\newcommand{\skipln}{\renewcommand{\nl}{\let\nl\oldnl}}
\newcommand{\algorithmCapStart}[0]%
{%
    \let\oldhypcapspace\hypcapspace%
    \renewcommand{\hypcapspace}{1.5\baselineskip}%
    \capstart%
    \renewcommand{\hypcapspace}{\let\hypcapspace\oldhypcapspace}%
}
\newcommand{\squeezeWidowedWords}{\looseness=-1}
\newcommand{\resetDisplaySpacing}%
{%
    \setlength \abovedisplayskip{1.5ex plus 4pt minus 2pt}%
    \setlength \abovedisplayshortskip{0pt plus 4pt}%
    \setlength \belowdisplayskip{\abovedisplayskip}%
    \setlength \belowdisplayshortskip{\belowdisplayskip}%
}
\newcommand{\squeezeDisplaySpacing}[1]%
{%
    \squeezeAboveDisplaySpacing{#1}%
    \squeezeBelowDisplaySpacing{#1}%
}
\newcommand{\squeezeAboveDisplaySpacing}[1]%
{%
    \setlength \abovedisplayskip{#1 plus 0.0pt minus 0.0pt}%
}
\newcommand{\squeezeBelowDisplaySpacing}[1]%
{%
    \setlength \belowdisplayskip{#1 plus 0.0pt minus 0.0pt}%
}
\newcommand{\squeezeAboveSpacingForUnderhang}%
{%
    \squeezeAboveDisplaySpacing{0pt}%
}
\newcommand{\squeezeBelowSpacingForUnderhang}%
{%
    \squeezeBelowDisplaySpacing{4pt}%
}
\DeclareMathOperator*{\argmin}{arg\,min}
\DeclareMathOperator*{\diag}{diag}
\newcommand{\bbm}{\begin{bmatrix}}
\newcommand{\ebm}{\end{bmatrix}}
\newcommand{\norm}[2]{\left\| #1 \right\|_{#2}}
\newcommand{\card}[1]{\left|#1\right|}
\newcommand{\suchthat}{\;\; \mbox{s.t.} \;\;}
\newcommand{\expectSymb}[0]{E}
\newcommand{\expectBrackets}[1]{\left[#1\right]}
\newcommand{\expectstBrackets}[2]{\expectBrackets{#1\,\middle|\,#2}}
\newcommand{\expect}[1]{\expectSymb\expectBrackets{#1}}
\newcommand{\expectst}[2]{\expectSymb\expectstBrackets{#1}{#2}}
\newcommand{\prob}[1]{P\left(#1\right)}
\newcommand{\probst}[2]{\prob{#1\;\;\middle|\;\;#2}}
\newcommand{\betaFunc}[2]{\mathrm{B}\left(#1,#2\right)}
\newcommand{\gammaFunc}[1]{\Gamma\left(#1\right)}
\newcommand{\real}[0]{\mathbb{R}}
\newcommand{\positiveReal}[0]{\real_{\geq0}}
\newcommand{\pair}[2]{\left( #1, #2\right)}
\newcommand{\set}[1]{\left\lbrace #1\right\rbrace}
\newcommand{\setst}[2]{\set{#1\;\;\middle|\;\;#2}}
\newcommand{\seqset}[1]{\left(#1\right)}
\newcommand{\seqidx}[4]{\seqset{#1_{#2}}_{#2=#3}^{#4}}
\newcommand{\closure}[1]{\mathrm{cl}\left(#1\right)}
\newcommand{\axis}[0]{\mathbf{a}}
\newcommand{\cost}[0]{c}
\newcommand{\solutionCost}[0]{f}
\newcommand{\costToCome}[0]{g}
\newcommand{\costToGo}[0]{h}
\newcommand{\iter}[0]{i}
\newcommand{\counterj}[0]{j}
\newcommand{\counterk}[0]{k}
\newcommand{\mapwidth}[0]{l}
\newcommand{\dimension}[0]{n}
\newcommand{\pdfSymb}[0]{p}
\newcommand{\totalSamples}[0]{q}
\newcommand{\numGoals}[0]{z}
\newcommand{\radius}[0]{r}
\newcommand{\pathCostSymb}[0]{\cost}
\newcommand{\stateu}[0]{\mathbf{u}}
\newcommand{\statev}[0]{\mathbf{v}}
\newcommand{\obswidth}[0]{w}
\newcommand{\statew}[0]{\mathbf{w}}
\newcommand{\statex}[0]{\mathbf{x}}
\newcommand{\statey}[0]{\mathbf{y}}
\newcommand{\ellipseMatrix}[0]{\mathbf{S}}
\newcommand{\ellipseRotation}[0]{\mathbf{C}_{\rm we}}
\newcommand{\iterThresh}[0]{\kappa}
\newcommand{\rrewire}[0]{\radius_{\mathrm{rewire}}}
\newcommand{\rrrtstar}[0]{\radius_{\mathrm{RRT}^*}}
\newcommand{\rrrtstarmin}[0]{\radius_{\mathrm{RRT}^*}^*}
\newcommand{\krrtstar}[0]{k_{\mathrm{RRT}^*}}
\newcommand{\krrtstarmin}[0]{k_{\mathrm{RRT}^*}^*}
\newcommand{\maxEdge}[0]{\eta}
\newcommand{\lebesgueSymb}[0]{\lambda}
\newcommand{\rate}[0]{\mu}
\newcommand{\pathSeq}[0]{\sigma}
\newcommand{\linTransSymb}[0]{\tau}
\newcommand{\unitBall}[1]{\zeta_{#1}}
\newcommand{\uniformSymb}[0]{\mathcal{U}}
\newcommand{\uniform}[1]{\uniformSymb\left(#1\right)}
\newcommand{\setInsert}[0]{\xleftarrow{\scriptscriptstyle +}}
\newcommand{\setRemove}[0]{\xleftarrow{\scriptscriptstyle -}}
\newcommand{\lebesgue}[1]{\lebesgueSymb\left(#1\right)}
\newcommand{\limitoinf}[0]{\lim_{\iter\to\infty}}
\newcommand{\invLinTransSymb}[0]{\linTransSymb^{-1}}
\newcommand{\linTrans}[1]{\linTransSymb\left(#1\right)}
\newcommand{\invLinTrans}[1]{\invLinTransSymb\left(#1\right)}
\newcommand{\stateSet}[0]{X}
\newcommand{\vertexSet}[0]{V}
\newcommand{\edgeSet}[0]{E}
\newcommand{\edgeCost}[0]{\cost}
\newcommand{\treeGraph}[0]{\mathcal{T}}
\newcommand{\pathSet}[0]{\Sigma}
\newcommand{\bestPath}[0]{\pathSeq^{*}}
\newcommand{\samplePath}[0]{\pathSeq_\totalSamples}
\newcommand{\pathCost}[1]{\pathCostSymb\left(#1\right)}
\newcommand{\costFromBelow}[1]{\widehat{#1}}
\newcommand{\trueCost}[1]{#1}
\newcommand{\costFromAbove}[1]{#1_{\treeGraph}}
\newcommand{\gTrue}[1]{\trueCost{\costToCome}\left(#1\right)}
\newcommand{\gAbove}[1]{\costFromAbove{\costToCome}\left(#1\right)}
\newcommand{\hTrue}[1]{\trueCost{\costToGo}\left(#1\right)}
\newcommand{\fBelow}[1]{\costFromBelow{\solutionCost}\left(#1\right)}
\newcommand{\fTrue}[1]{\trueCost{\solutionCost}\left(#1\right)}
\newcommand{\cTrue}[2]{\trueCost{\edgeCost}\left(#1,#2\right)}
\newcommand{\namedLebesgue}[1]{\lebesgueSymb_{#1}}
\newcommand{\phsMeasure}[0]{\namedLebesgue{\rm PHS}}
\newcommand{\namedSet}[1]{\stateSet_{#1}}
\newcommand{\fTrueSet}[0]{\namedSet{\trueCost{\solutionCost}}}
\newcommand{\fBelowSet}[0]{\namedSet{\costFromBelow{\solutionCost}}}
\newcommand{\fBelowSetj}[0]{\namedSet{\costFromBelow{\solutionCost},\counterj}}
\newcommand{\fBelowSetk}[0]{\namedSet{\costFromBelow{\solutionCost},\counterk}}
\newcommand{\obsSet}[0]{\namedSet{\rm obs}}
\newcommand{\freeSet}[0]{\namedSet{\rm free}}
\newcommand{\goalSet}[0]{\namedSet{\rm goal}}
\newcommand{\samplingSet}[0]{\namedSet{\rm samp}}
\newcommand{\ballSet}[0]{\namedSet{\rm ball}}
\newcommand{\ellipseSet}[0]{\namedSet{\rm ellipse}}
\newcommand{\rectSet}[0]{\namedSet{\rm rect}}
\newcommand{\phsSet}[0]{\namedSet{\rm PHS}}
\newcommand{\phsSetj}[0]{\namedSet{{\rm PHS},\counterj}}
\newcommand{\phsSetk}[0]{\namedSet{{\rm PHS},\counterk}}
\newcommand{\namedVertexSet}[1]{\vertexSet_{#1}}
\newcommand{\pruneVertices}[0]{\namedVertexSet{\rm prune}}
\newcommand{\solnVertices}[0]{\namedVertexSet{\mbox{\rm\scriptsize sol'n}}}
\newcommand{\nearVertices}[0]{\namedVertexSet{\rm near}}
\newcommand{\namedState}[1]{\statex_{#1}}
\newcommand{\xstart}[0]{\namedState{\rm start}}
\newcommand{\xgoal}[0]{\namedState{\rm goal}}
\newcommand{\xgoalj}[0]{\namedState{{\rm goal},\counterj}}
\newcommand{\xgoalk}[0]{\namedState{{\rm goal},\counterk}}
\newcommand{\xball}[0]{\namedState{\rm ball}}
\newcommand{\xellipse}[0]{\namedState{\rm ellipse}}
\newcommand{\xcentre}[0]{\namedState{\rm centre}}
\newcommand{\xrand}[0]{\namedState{\rm rand}}
\newcommand{\xnew}[0]{\namedState{\rm new}}
\newcommand{\namedVertex}[1]{\statev_{#1}}
\newcommand{\vgoal}[0]{\namedVertex{\rm goal}}
\newcommand{\vmin}[0]{\namedVertex{\rm min}}
\newcommand{\vnear}[0]{\namedVertex{\rm near}}
\newcommand{\vnearest}[0]{\namedVertex{\rm nearest}}
\newcommand{\vparent}[0]{\namedVertex{\rm parent}}
\newcommand{\namedCost}[1]{\cost_{#1}}
\newcommand{\ccur}[0]{\namedCost{\iter}}
\newcommand{\cprev}[0]{\namedCost{\iter-1}}
\newcommand{\cnext}[0]{\namedCost{\iter+1}}
\newcommand{\cnew}[0]{\namedCost{\rm new}}
\newcommand{\cmin}[0]{\namedCost{\rm min}}
\newcommand{\cnear}[0]{\namedCost{\rm near}}
\newcommand{\copt}[0]{\cost^*}
\newcommand{\namedRate}[1]{\rate_{#1}}
\newcommand{\rrtstarRate}[0]{\namedRate{\mathrm{RRT}^*}}
\newcommand{\rejectRate}[0]{\namedRate{\mathrm{Rect}}}
\newcommand{\informedRate}[0]{\namedRate{\mathrm{Inf}}}
\newcommand{\namedPdfSymb}[1]{\pdfSymb_{#1}}
\newcommand{\namedPdf}[2]{\namedPdfSymb{#1}\left(#2\right)}
\newcommand{\ballPdf}[1]{\namedPdf{\rm ball}{#1}}
\newcommand{\ellipsePdf}[1]{\namedPdf{\rm ellipse}{#1}}
\newcommand{\probBetter}[0]{\namedPdfSymb{\solutionCost}}
\newcommand{\RRTsharp}{\acs{RRT}\textsuperscript{\#}}
\newcommand{\RRTx}{\acs{RRT}\textsuperscript{X}}
\begin{document}

\begin{acronym}[UTIAS]

    \acro{ASRL}{Autonomous Space Robotics Lab}
    \acro{CMU}{Carnegie Mellon University}
    \acro{CSA}{Canadian Space Agency}
    \acro{DRDC}{Defence Research and Development Canada}
    \acro{KSR}{Koffler Scientific Reserve at Jokers Hill}
    \acro{MET}{Mars Emulation Terrain}
    \acro{MIT}{Massachusetts Institute of Technology}
    \acro{NASA}{National Aeronautics and Space Administration}
    \acro{NSERC}{Natural Sciences and Engineering Research Council of Canada}
    \acro{NCFRN}{\acs{NSERC} Canadian Field Robotics Network}
    \acro{NORCAT}{Northern Centre for Advanced Technology Inc.}
    \acro{ODG}{Ontario Drive and Gear Ltd.}
    \acro{ONR}{Office of Naval Research}
    \acro{USSR}{Union of Soviet Socialist Republics}
    \acro{UofT}{University of Toronto}
    \acro{UW}{University of Waterloo}
    \acro{UTIAS}{University of Toronto Institute for Aerospace Studies}

    \acro{ACPI}{advanced configuration and power interface}
    \acro{CLI}{command-line interface}
    \acro{GUI}{graphical user interface}
    \acro{JIT}{just-in-time}
    \acro{LAN}{local area network}
    \acro{MFC}{Microsoft foundation class}
    \acro{NIC}{network interface card}
    \acro{SDK}{software development kit}
    \acro{HDD}{hard-disk drive}
    \acro{SSD}{solid-state drive}

    \acro{ICRA}{IEEE International Conference on Robotics and Automation}
    \acro{IJRR}{International Journal of Robotics Research}
    \acro{IROS}{IEEE/RSJ International Conference on Intelligent Robots and Systems}
    \acro{RSS}{Robotics: Science and Systems Conference}
    \acro{TRO}[T-RO]{IEEE Transactions on Robotics}

    \acro{DOF}{degree-of-freedom}
        \acrodefplural{DOF}[DOFs]{degrees-of-freedom} %

        \acro{FOV}{field of view}
            \acrodefplural{FOV}[FOVs]{fields of view}
        \acro{HDOP}{horizontal dilution of position}
        \acro{UTM}{universal transverse mercator}
        \acro{WAAS}{wide area augmentation system}
        \acro{AHRS}{attitude heading reference system}
        \acro{DAQ}{data acquisition}
        \acro{DGPS}{differential global positioning system}
        \acro{DPDT}{double-pole, double-throw}
        \acro{DPST}{double-pole, single-throw}
        \acro{GPR}{ground penetrating radar}
        \acro{GPS}{global positioning system}
        \acro{LED}{light-emitting diode}
        \acro{IMU}{inertial measurement system}
        \acro{PTU}{pan-tilt unit}
        \acro{RTK}{real-time kinematic}
        \acro{R/C}{radio control}
        \acro{SCADA}{supervisory control and data acquisition}
        \acro{SPST}{single-pole, single-throw}
        \acro{SPDT}{single-pole, double-throw}
        \acro{UWB}{ultra-wide band}

    \acro{DDS}{Departmental Doctoral Seminar}
    \acro{DEC}{Doctoral Examination Committee}
    \acro{FOE}{Final Oral Exam}
    \acro{ICD}{Interface Control Document}

    \acro{iid}[i.i.d.]{independent and identically distributed}
    \acro{aas}[a.a.s.]{asymptotically almost-surely}
    \acro{RGG}{random geometric graph}
    \acro{BVP}{boundary-value problem}
    \acro{2BVP}[2-pt BVP]{two-point \acl{BVP}}

    \acro{EKF}{extended Kalman filter}
    \acro{iSAM}{incremental smoothing and mapping}
    \acro{ISRU}{in-situ resource utilization}
    \acro{PCA}{principle component analysis}
    \acro{SLAM}{simultaneous localization and mapping}
    \acro{SVD}{singular value decomposition}
    \acro{UKF}{unscented Kalman filter}
    \acro{VO}{visual odometry}
    \acro{VTR}[VT\&R]{visual teach and repeat}

        \acro{ADstar}[AD*]{Anytime D∗}
        \acro{ADAstar}[ADA*]{Anytime Dynamic A*}
        \acro{ARAstar}[ARA*]{Anytime Repairing A*}
        \acro{BITstar}[BIT*]{Batch Informed Trees}
            \acrodefplural{BITstar}[BIT*]{Batch Informed Trees}
        \acro{RABITstar}[RABIT*]{Regionally Accelerated \acs{BITstar}}
        \acro{BRM}{belief roadmap}
        \acro{CFOREST}[C-FOREST]{Coupled Forest of Random Engrafting Search Trees}
        \acro{CHOMP}{Covariant Hamiltonian Optimization for Motion Planning}
        \acro{Dstar}[D*]{Dynamic A∗}
        \acro{EET}{Exploring/Exploiting Tree} %
        \acro{EST}{Expansive Space Tree}
            \acrodefplural{EST}[EST]{Expansive Space Trees}
        \acro{FMTstar}[FMT*]{Fast Marching Tree}
            \acrodefplural{FMTstar}[FMT*]{Fast Marching Trees}
        \acro{GVG}{generalized Voronoi graph}
        \acro{hRRT}{Heuristically Guided \acs{RRT}}
        \acro{LBTRRT}[LBT-RRT]{Lower Bound Tree \acs{RRT}}
        \acro{LQG-MP}{linear-quadratic Gaussian motion planning}
        \acro{LPAstar}[LPA*]{Lifelong Planning A*}
        \acro{MPLB}{Motion Planning Using Lower Bounds}
        \acro{MDP}{Markov decision process}
            \acrodefplural{MDP}[MDPs]{Markov decision processes}
        \acro{NRP}{network of reusable paths}
            \acrodefplural{NRP}[NRPs]{networks of reusable paths}
        \acro{POMDP}{partially-observable Markov decision process}
            \acrodefplural{POMDP}[POMDPs]{partially-observable Markov decision processes}
        \acro{PRM}{Probabilistic Roadmap}
            \acrodefplural{PRM}[PRM]{Probabilistic Roadmaps}
        \acro{PRMstar}[PRM*]{asymptotically optimal \acs{PRM}}
            \acrodefplural{PRMstar}[PRM*]{asymptotically optimal \acsp{PRM}}
        \acro{RAstar}[RA*]{Randomized A*}
        \acro{RRBT}{rapidly-exploring random belief tree}
        \acro{RRG}{Rapidly exploring Random Graph}
            \acrodefplural{RRG}[RRG]{Rapidly exploring Random Graphs}
        \acro{RRM}{Rapidly exploring Roadmap}
        \acro{RRT}{Rapidly exploring Random Tree}
            \acrodefplural{RRT}[RRT]{Rapidly-exploring Random Trees}
        \acro{RRTstar}[RRT*]{asymptotically optimal \acs{RRT}}
            \acrodefplural{RRTstar}[RRT*]{asymptotically optimal \acsp{RRT}}
        \acro{SBAstar}[SBA*]{Sampling-based A*}
        \acro{sPRM}[s-PRM]{simplified \acs{PRM}}
            \acrodefplural{s-PRM}[sPRM]{simplified \acsp{PRM}}
        \acro{STOMP}{Stochastic Trajectory Optimization for Motion Planning}
        \acro{TRRT}[T-RRT]{Transition-based \acs{RRT}}
            \acro{TRRTstar}[T-RRT*]{Transition-based \acs{RRTstar}}
            \acro{ATRRT}[AT-RRT]{Anytime Transition-based \acs{RRT}}

    \acro{HERB}{Home Exploring Robot Butler}
    \acro{MER}{Mars Exploration Rover}
    \acro{MSL}{Mars Science Laboratory}
    \acro{OMPL}{Open Motion Planning Library}
    \acro{ROS}{Robot Operating System}

\end{acronym}

\title{\UTIASmultititle}

\author
{%
    Jonathan~D.~Gammell,~\IEEEmembership{Member,~IEEE},\\
    Timothy~D.~Barfoot,~\IEEEmembership{Senior Member,~IEEE},
    and~Siddhartha~S.~Srinivasa,~\IEEEmembership{Senior Member,~IEEE}%
    \thanks{J.\ D.\ Gammell performed this work as a member of the Autonomous Space Robotics Lab at the University of Toronto Institute for Aerospace Studies. He is now with the Oxford Robotics Institute at the University of Oxford, Oxford, United Kingdom. Email: \texttt{gammell@robots.ox.ac.uk}}
    \thanks{T.\ D.\ Barfoot is with the Autonomous Space Robotics Lab at the University of Toronto Institute for Aerospace Studies, Toronto, Ontario, Canada. Email: \texttt{tim.barfoot@utoronto.ca}}%
    \thanks{S.\ S.\ Srinivasa is with the School of Computer Science and Engineering, University of Washington, Seattle, Washington, USA. Email: \texttt{siddh@cs.uw.edu}}%
    \thanks{Manuscript submitted June 20, 2017.}%
}

\markboth{\MakeUppercase{IEEE Transactions on Robotics}}%
{\MakeUppercase{\UTIASauthor: \UTIASsingletitle}}

\maketitle
\fancyhf{}
\pagestyle{fancy}
\fancyfoot[C]{\UTIASfooter}
\renewcommand\headrulewidth{0pt}
\fancypagestyle{IEEEtitlepagestyle}{ %
  \fancyfoot[C]{\UTIASfooter}%
  \renewcommand{\headrulewidth}{0pt} %
}

\begin{abstract}
Anytime almost-surely asymptotically optimal planners, such as \acs{RRTstar}\acused{RRTstar}, incrementally find paths to every state in the search domain.
This is inefficient once an initial solution is found as then only states that can provide a \emph{better} solution need to be considered.
Exact knowledge of these states requires solving the problem but can be approximated with heuristics.

This paper formally defines these sets of states and demonstrates how they can be used to analyze arbitrary planning problems.
It uses the well-known $L^2$ norm (i.e., Euclidean distance) to analyze minimum-path-length problems and shows that existing approaches decrease in effectiveness \emph{factorially} (i.e., faster than exponentially) with state dimension.
It presents a method to address this curse of dimensionality by \emph{directly} sampling the prolate hyperspheroids (i.e., symmetric $\dimension$-dimensional ellipses) that define the $L^2$ \emph{informed} set.

The importance of this direct informed sampling technique is demonstrated with Informed \acs{RRTstar}.
This extension of \acs{RRTstar} has less theoretical dependence on state dimension and problem size than existing techniques and allows for \emph{linear} convergence on some problems.
It is shown experimentally to find better solutions faster than existing techniques on both abstract planning problems and \acs{HERB}, a two-arm manipulation robot.
\end{abstract}
\acresetall
\acused{RRTstar}
\acused{PRMstar}

\begin{IEEEkeywords}
    path planning, sampling-based planning, optimal path planning, informed sampling.
\end{IEEEkeywords}

\section{Introduction}
\IEEEPARstart{T}{here} are many powerful path planning techniques in robotics.
Popular approaches include graph-based searches, such as Dijkstra's algorithm \cite{dijkstra_59} and A* \cite{hart_tssc68}, and sampling-based methods, such as \acp{PRM} \cite{kavraki_tro96}, \acp{EST} \cite{hsu_ijrr02}, and \acp{RRT} \cite{lavalle_ijrr01}.
While sampling-based methods avoid the challenges of \emph{a priori} discretizations, their stochastic nature limits their formal performance.
They are said to be \emph{probabilistically complete} if the probability of finding a solution, if one exists, approaches unity with an infinite number of samples.
They are also said to be \emph{almost-surely asymptotically optimal} if the probability of converging asymptotically to the optimum, if one exists, approaches unity with an infinite number of samples (e.g., \ac{RRTstar} \cite{karaman_ijrr11}).

\begin{figure}[tb]
    \centering
    \includegraphics[width=\columnwidth]{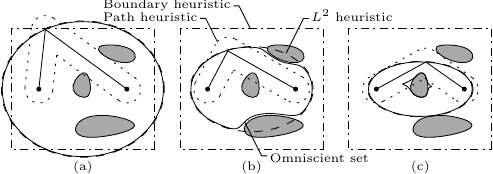}
    \caption{%
    An illustration of how the set of states that can improve solution length shrinks as better solutions are found.
    Common estimates of this \emph{omniscient set} are illustrated as \textit{informed sets}.
    The $L^2$ informed set always contains the entire omniscient set (i.e., $100\%$ recall) and shrinks along with it as a function of the current solution (i.e., high precision).
    It is exactly equal to the omniscient set in the absence of obstacles and constraints (i.e., $100\%$ recall \emph{and} precision).
    This paper shows that direct sampling this $L^2$ informed set is a necessary condition for almost-surely asymptotically optimal planners to scale effectively to high state dimensions.
    This technique is demonstrated with Informed \acs{RRTstar}.
    }
    \label{fig:intro:infsets}
\end{figure}%

\acs{RRT} searches a planning problem by incrementally building a tree through free space.
\ac{RRTstar} extends this procedure to incrementally rewire the tree during its construction.
This rewiring locally optimizes \emph{every} vertex in the tree and allows the algorithm to almost-surely converge asymptotically to the optimal path to \emph{every state} in the problem domain.
This is an inefficient way to find the optimal solution to a single planning query.
\squeezeWidowedWords

The only states that need to be considered in single-query scenarios are those that can provide a better solution \cite{ferguson_iros06}.
While exact knowledge of these states requires solving the planning problem, they can often be approximated with heuristics (Fig.~\ref{fig:intro:infsets}).
These heuristics have previously been used to focus almost-surely asymptotically optimal search \cite{akgun_iros11,otte_tro13} but can also provide insight into the optimal planning problem.

This paper uses the set of states that can provide a better solution to analyze incremental almost-surely asymptotically optimal planning.
It formally defines this shrinking set as the \emph{omniscient set} and shows that sampling it is a necessary condition for \acs{RRTstar}-style planners to improve a solution.
It defines estimates of this set as \emph{informed sets} and provides metrics to quantify them in terms of their compactness (i.e., \emph{precision}) and completeness (i.e., \emph{recall}).
It uses these results to bound the probability of improving a solution to a holonomic planning problem by the probability of sampling an informed set with $100\%$ recall.

The $L^2$ norm (i.e., Euclidean distance) is a well-known heuristic for problems seeking to minimize path length.
It describes the omniscient set exactly in the absence of obstacles\pagebreak

\noindent and constraints (i.e., it is \textit{sharp}\footnotemark{}) and always contains the omniscient set of a problem (i.e., it is \emph{universally admissible}).
This paper uses it to analyze the minimum-path-length problem and shows that existing focusing techniques (e.g., \cite{akgun_iros11,otte_tro13}) are ineffective in high state dimensions.
It is proven that these rejection-sampling approaches have a probability of improving a solution that goes to zero \emph{factorially} (i.e., faster than exponentially) as state dimension increases.%
\footnotetext{A bound or estimate is said to be sharp if it is exactly equal to the true value (i.e., has perfect precision and recall) in at least one case.}

This paper demonstrates how this minimum-path-length \emph{curse of dimensionality} can be reduced by directly sampling the symmetric $\dimension$-dimensional ellipse (i.e., prolate hyperspheroid), the $L^2$ informed set.
The presented direct sampling approach always finds states that are believed to belong to a better solution regardless of the relative size of the $L^2$ informed set.
It outperforms existing focusing techniques by orders of magnitude as state dimension increases.

The informed search approach is demonstrated with Informed \acs{RRTstar}.
This extension of \ac{RRTstar} uses direct informed sampling and admissible graph pruning to focus the search for improvements.
It is shown analytically to outperform existing techniques in terms of convergence rate, especially in high state dimensions, and to result in linear convergence on some problems.
It is probabilistically complete and almost-surely asymptotically optimal.
When the $L^2$ heuristic does not provide additional information (e.g., small planning problems and/or large informed sets) it is identical to \ac{RRTstar}.
A version of Informed \ac{RRTstar} is publicly available in the \acf{OMPL} \cite{ompl}.

Informed \acs{RRTstar} is evaluated experimentally on abstract problems and on the \acs{CMU} Personal Robotic Lab's \ac{HERB}  \cite{herb}, a 14-\ac{DOF} mobile manipulation platform.
These experiments show that it outperforms existing focusing techniques as state dimension increases, especially in problems with large planning domains.
\squeezeWidowedWords

This paper is organized as follows.
Section~\ref{sec:omni_inf} defines omniscient and informed sets and their associated precision and recall in preparation for the literature review presented in Section~\ref{sec:lit}.
Section~\ref{sec:l2} presents a direct informed sampling technique for problems seeking to minimize path length which is demonstrated with Informed \ac{RRTstar} in Section~\ref{sec:inf}.
Section~\ref{sec:rate} analyzes the expected convergence rate of \ac{RRTstar} algorithms and Section~\ref{sec:exp} demonstrates the practical advantages of this improvement on abstract and simulated problems.
Section~\ref{sec:fin} finally presents a closing discussion and thoughts on future work.
\squeezeWidowedWords

\subsection{Statement of Contributions}
This paper is a continuation of ideas first published in \cite{gammell_iros14} and associated technical reports \cite{gammell_arxiv14,gammell_arxiv14b} and makes the following specific contributions:
\begin{itemize}
    \item Formally defines omniscient and informed sets (Definitions~\ref{defn:omni} and \ref{defn:informed}) and demonstrates how precision and recall can be used to quantify the performance of informed sampling (Definitions~\ref{defn:precision} and \ref{defn:recall}).
    \item Provides upper bounds on the probability that an incremental sampling-based planner improves a solution to a holonomic planning problem (Theorems~\ref{thm:necessary:exact:prob} and \ref{thm:necessary:heuristic:prob}).
    \item Bounds the expected next-iteration cost for \ac{RRTstar} algorithms on any minimum-path-length planning problem (Lemma~\ref{lem:conv:expect}) and shows that existing formulations of these algorithms for holonomic planning have a probability of improving a solution that decreases factorially with state dimension (Theorem~\ref{thm:curse}).
    \item Develops a method to reduce this minimum-path-length curse of dimensionality by directly sampling the ellipsoidal $L^2$ informed set defined by a goal or \emph{countable set of goals} and the current solution (\algos{algo:infset}{algo:randomKeep}).
    \item Proves that a planning algorithm using this approach, Informed \acs{RRTstar}, has better theoretical convergence (Theorems~\ref{thm:conv:rrtstar}--\ref{thm:conv:informed}) and experimental performance than existing focused planning algorithms on holonomic problems.
\end{itemize}

\section{Omniscient and Informed Sets}\label{sec:omni_inf}
A formal discussion of the optimal planning problem is presented in support of the literature review.
It includes definitions of the states that can provide a better solution, the \emph{omniscient set} (Definition~\ref{defn:omni}), and estimates of this set, \emph{informed sets}, quantified by \emph{precision} and \emph{recall} (Definitions~\ref{defn:informed}--\ref{defn:admissible}).
These sets provide theoretical upper bounds on the probability of improving a solution to a holonomic problem that are used throughout the remainder of the paper (Theorems~\ref{thm:necessary:exact:prob} and \ref{thm:necessary:heuristic:prob}).

Finding the optimal path from a start to a goal is formally defined as the optimal planning problem  (Definition~\ref{defn:opt}).
The given definition is similar to \cite{karaman_ijrr11}.
\begin{defn}[Optimal planning]\label{defn:opt}
    Let $\stateSet \subseteq \real^\dimension$ be the state space of the planning problem, $\obsSet \subset \stateSet$ be the states in collision with obstacles, and $\freeSet = \closure{\stateSet \setminus \obsSet}$ be the resulting set of permissible states, where $\closure{\cdot}$ represents the closure of a set.
    Let $\xstart \in \freeSet$ be the initial state and $\goalSet \subset \freeSet$ be the set of desired goal states.
    Let $\pathSeq : \; \left[0,1\right] \to \freeSet$ be a sequence of states through collision-free space that can be executed by the robot (i.e., a collision-free feasible path) and $\pathSet$ be the set of all such nontrivial paths.

    The optimal planning problem is then formally defined as the search for a path, $\bestPath\in\pathSet$, that minimizes a given cost function, $\pathCostSymb : \; \pathSet \to \positiveReal$, while connecting $\xstart$ to $\xgoal\in\goalSet$,
    \begin{equation*}
        \bestPath = \argmin_{\pathSeq \in \pathSet} \setst{\pathCost{\pathSeq}}{\pathSeq\left(0\right) = \xstart,\, \pathSeq\left(1\right) \in \goalSet},
    \end{equation*}%
    where $\positiveReal$ is the set of non-negative real numbers.
\end{defn}

Many sampling-based planners, such as \ac{RRTstar}, probabilistically converge towards the optimum of these problems.
Such planners are described as probabilistically complete and almost-surely asymptotically optimal (Definition~\ref{defn:asao}).
\begin{defn}[Almost-sure asymptotic optimality]\label{defn:asao}
    A planner is said to be almost-surely asymptotically optimal if, with an infinite number of samples, the probability of converging asymptotically to the optimum (Definition~\ref{defn:opt}), if one exists, is one,\squeezeWidowedWords
    \begin{equation*}
        \prob{\limsup_{\totalSamples \to \infty}\pathCost{\samplePath} = \pathCost{\bestPath}} = 1,
    \end{equation*}
    where $\totalSamples$ is the number of samples, $\samplePath$ is the path found by the planner from those samples, $\bestPath$ is the optimal solution to the planning problem, and $\pathCost{\cdot}$ is the cost of a path.
\end{defn}

Once \emph{any} solution is found, the set of states that can provide a \emph{better} solution can be defined as the omniscient set (Definition~\ref{defn:omni}).
\begin{defn}[Omniscient set]\label{defn:omni}
    Let $\gTrue{\statex}$ be the cost of the optimal path from the start to a state, $\statex\in\freeSet$, the \textit{optimal cost-to-come},
    \begin{equation*}
        \gTrue{\statex} \coloneqq \min_{\pathSeq \in \pathSet} \setst{\pathCost{\pathSeq}}{\pathSeq(0) = \xstart,\, \pathSeq(1) = \statex},
    \end{equation*}%
    and $\hTrue{\statex}$ be the cost of the optimal path from $\statex$ to the goal region, the \textit{optimal cost-to-go},
    \begin{equation*}
        \hTrue{\statex} \coloneqq \min_{\pathSeq \in \pathSet} \setst{\pathCost{\pathSeq}}{\pathSeq(0) = \statex,\, \pathSeq(1) \in \goalSet}.
    \end{equation*}%
    The cost of the optimal path from $\xstart$ to $\goalSet$ constrained to pass through $\statex$ is then given by $\fTrue{\statex} \coloneqq \gTrue{\statex} + \hTrue{\statex}$.
    This defines the subset of states that can belong to a solution better than the current solution, $\ccur$, as
    \begin{equation}\label{eqn:fset}
        \fTrueSet \coloneqq \setst{\statex \in \freeSet}{\fTrue{\statex} < \ccur}.
    \end{equation}%
    Exact knowledge of $\fTrueSet$ requires exact knowledge of the entire planning problem so we  refer to it as the \emph{omniscient set}.
\end{defn}

\begin{figure}[tb]
    \centering
    \includegraphics[width=\columnwidth]{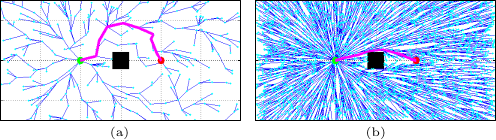}
    \caption
    {%
        An example of how \acs{RRTstar} almost-surely converges asymptotically to the optimum by incrementally building and rewiring a tree through the \emph{entire} problem domain.
        \ac{RRTstar} incrementally expands the tree into the problem domain and improve its connections.
        By continuing this process indefinitely, it almost-surely converges asymptotically to the optimal solution by asymptotically improving every path in the tree.
        This is inefficient in single-query planning scenarios.
        \squeezeWidowedWords
    }
    \label{fig:rrtstar:all}
\end{figure}
\acs{RRTstar} builds a tree by incrementally adding states from the problem domain (Fig.~\ref{fig:rrtstar:all}).
A necessary condition for it to improve a solution is that the newly added state belongs to the omniscient set (Lemma~\ref{lem:necessary:exact}).

\begin{lem}[The necessity of adding states in the omniscient set]\label{lem:necessary:exact}
    Adding a state from the omniscient set, $\xnew\in\fTrueSet$, is a necessary condition for \ac{RRTstar} to improve the current solution, $\ccur$,\squeezeWidowedWords
    \begin{equation*}
        \cnext < \ccur \implies \xnew \in \fTrueSet.
    \end{equation*}
    
    This condition is necessary but not \emph{sufficient} to improve the solution as the ability of states in $\fTrueSet$ to provide better solutions at any iteration depends on the structure of the tree (i.e., its optimality).
\end{lem}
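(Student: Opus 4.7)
The plan is to prove the contrapositive: if $\xnew \notin \fTrueSet$, then $\cnext \geq \ccur$. By Definition~\ref{defn:omni} the hypothesis $\xnew \notin \fTrueSet$ is equivalent to $\fTrue{\xnew} = \gTrue{\xnew} + \hTrue{\xnew} \geq \ccur$, i.e., no feasible collision-free path from $\xstart$ to $\goalSet$ passing through $\xnew$ has cost less than $\ccur$.

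First I would formalize the structural effect of a single \ac{RRTstar} iteration on the tree $\treeGraph_\iter$. The only new vertex inserted is $\xnew$, and every edge added or modified by the choose-parent and rewire steps is incident to $\xnew$. Consequently, for each vertex $\statev \in \treeGraph_{\iter+1}$ the unique tree path from $\xstart$ to $\statev$ either (i) is disjoint from $\xnew$, in which case every edge it uses was already present in $\treeGraph_\iter$ and it coincides with the corresponding root-to-$\statev$ path in $\treeGraph_\iter$, or (ii) visits $\xnew$ exactly once as an intermediate vertex.

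Applying this dichotomy to the goal vertex that realizes $\cnext$ in $\treeGraph_{\iter+1}$ yields the result. In case (i) the witnessing path was already available in $\treeGraph_\iter$ as a feasible root-to-goal path, so its cost $\cnext$ is at least $\ccur$ by the optimality of the previous best in-tree solution. In case (ii) the same path is a feasible collision-free path from $\xstart$ to $\goalSet$ through $\xnew$, so $\cnext \geq \fTrue{\xnew} \geq \ccur$ by the definitions of $\gTrue{\cdot}$ and $\hTrue{\cdot}$. Either way $\cnext \geq \ccur$, contradicting $\cnext < \ccur$ and establishing the contrapositive.

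The main technical obstacle is justifying case (i): one must argue that rewirings do not silently reroute any root-to-goal path between two pre-existing vertices. This follows from the standard \ac{RRTstar} update rule, in which every rewiring sets the parent pointer of an existing near-neighbor of $\xnew$ to $\xnew$ itself, so every rewired edge is incident to $\xnew$ and any path avoiding $\xnew$ in $\treeGraph_{\iter+1}$ must use only edges that were present, and unchanged, in $\treeGraph_\iter$.
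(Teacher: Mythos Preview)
Your proposal is correct and follows essentially the same approach as the paper: both argue by contradiction/contrapositive, using the key structural fact that every edge added or modified in an \ac{RRTstar} iteration is incident to $\xnew$, so any new or improved root-to-goal path must pass through $\xnew$ and hence have cost at least $\fTrue{\xnew}\geq\ccur$. Your explicit dichotomy between paths that avoid $\xnew$ and paths through $\xnew$, together with the justification that rewired edges are always incident to $\xnew$, spells out in slightly more detail what the paper compresses into the single observation that ``all new or modified paths pass through this new state.''
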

\begin{proof}
    The proof of Lemma~\ref{lem:necessary:exact} from the supplementary online material appears in Appendix~\ref{appx:necessary:exact}.
\end{proof}

The state added by \ac{RRTstar} at each iteration, $\xnew$, is generated from a randomly sampled state, $\xrand$, and the nearest vertex in the existing tree,
\begin{equation}\label{eqn:back:nearest}
    \vnearest \coloneqq \argmin_{\statev \in \vertexSet}\set{ \norm{\xrand - \statev}{2} },
\end{equation}
through expansion and differential constraints (i.e., the $\mathtt{Steer}$ function).
Absent any constraints (i.e., in holonomic planning) this takes the form
\begin{equation}\label{eqn:back:steer}
    \xnew \coloneqq \argmin_{\statey \in \stateSet}\setst{\norm{\xrand - \statey}{2}}{\norm{\statey - \vnearest}{2} \leq \maxEdge},
\end{equation}
where $\maxEdge$ is a user-selected maximum edge length.

The number of tree vertices in the problem domain increases indefinitely with \ac{RRTstar} iterations.
With an infinite number of iterations, eventually all reachable states will be no more than $\maxEdge$ away from the nearest vertex in the tree.
After these $\iterThresh$ iterations, \emph{sampling} the omniscient set is a necessary condition to add a state from the omniscient set and improve the solution (Lemma~\ref{lem:necessary:exact:sample}).

\begin{lem}[The necessity of sampling states in the omniscient set in holonomic planning]\label{lem:necessary:exact:sample}
    Sampling the omniscient set, $\xrand\in\fTrueSet$, is a necessary condition for \ac{RRTstar} to improve the current solution to a holonomic problem, $\ccur$, after an initial $\iterThresh$ iterations,
    \begin{equation*}
        \forall \iter \geq \iterThresh,\, \cnext < \ccur \implies \xrand \in \fTrueSet,
    \end{equation*}%
    for any sample distribution that maintains a nonzero probability over the entire omniscient set.
    
    For simplicity, this statement is limited to holonomic planning but it can be extended to specific constraints with appropriate assumptions.
\end{lem}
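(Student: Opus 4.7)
The plan is to combine Lemma~\ref{lem:necessary:exact} with the densification property encoded by $\iterThresh$ to lift the necessity condition from the added state $\xnew$ to the sampled state $\xrand$. Applying Lemma~\ref{lem:necessary:exact} to the hypothesis $\cnext < \ccur$ immediately yields $\xnew \in \fTrueSet$, and since $\fTrueSet \subseteq \freeSet$ this places $\xnew$ among the reachable states.

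The next step is to use the defining property of $\iterThresh$: for all $\iter \geq \iterThresh$, every state in $\freeSet$ lies within $\maxEdge$ of some tree vertex. Applied to $\xnew$, this furnishes a vertex $\statev'$ with $\norm{\xnew - \statev'}{2} \leq \maxEdge$; applied to $\xrand$ itself (whenever $\xrand$ lies in the reachable portion of the sampling domain) it furnishes $\norm{\xrand - \vnearest}{2} \leq \maxEdge$. The holonomic Steer function in~(\ref{eqn:back:steer}) then collapses to the identity, so $\xnew = \xrand$ and hence $\xrand = \xnew \in \fTrueSet$, which is the desired conclusion.

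The main obstacle is the residual case in which $\xrand$ is drawn from a region of the sampling domain where no tree vertex lies within $\maxEdge$ (for example, outside $\freeSet$), yet the Steer function still lands $\xnew$ inside $\fTrueSet$. Chaining the triangle inequality $\norm{\xrand - \statev'}{2} \leq \norm{\xrand - \xnew}{2} + \norm{\xnew - \statev'}{2}$ with the minimality of $\vnearest$ from~(\ref{eqn:back:nearest}) tightens the bound to $\norm{\xrand - \vnearest}{2} = \norm{\xrand - \statev'}{2}$, which is only consistent with degenerate collinear configurations of $\vnearest$, $\xnew$, and $\statev'$. Ruling these out cleanly requires either a strict-inequality strengthening of the densification (covering $\freeSet$ by balls of radius $\maxEdge - \epsilon$, which still holds asymptotically almost surely) or a measure-theoretic argument invoking the nonzero-probability hypothesis on the sampler to treat such collinear events as null. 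Once this case is dispatched, $\xrand \in \fTrueSet$ in all remaining cases and the necessity claim follows. I would structure the write-up so that only this last Steer-function step relies on holonomicity, leaving the earlier reduction available for constraint-specific extensions as anticipated in the lemma's closing remark.
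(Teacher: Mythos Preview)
Your overall strategy matches the paper's: invoke Lemma~\ref{lem:necessary:exact} to place $\xnew\in\fTrueSet$, use the densification encoded by $\iterThresh$, and collapse the holonomic $\mathtt{Steer}$ to the identity so that $\xnew=\xrand$. The difference is in where you apply densification. You apply it at \emph{both} $\xnew$ and $\xrand$, which creates the ``residual case'' you then struggle with (what if $\xrand$ lies outside the densified region?). The paper applies densification only at $\xnew$, which is guaranteed to lie in $\fTrueSet$, and then argues that the vertex $\vnearest$ nearest to $\xrand$ is also the vertex nearest to $\xnew$ (this uses the collinearity of $\vnearest,\xnew,\xrand$ built into holonomic $\mathtt{Steer}$). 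Densification at $\xnew$ then gives $\norm{\xnew-\vnearest}{2}<\maxEdge$ directly, contradicting the $\mathtt{Steer}$ output $\norm{\xnew-\vnearest}{2}=\maxEdge$ that holds whenever $\xnew\neq\xrand$. No residual case arises.

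Your attempted resolution of the residual case has a genuine gap. From the triangle inequality $\norm{\xrand-\statev'}{2}\leq\norm{\xrand-\xnew}{2}+\norm{\xnew-\statev'}{2}$ and the minimality $\norm{\xrand-\vnearest}{2}\leq\norm{\xrand-\statev'}{2}$ you cannot conclude the equality $\norm{\xrand-\vnearest}{2}=\norm{\xrand-\statev'}{2}$; you only get an inequality, and nothing forces the collinear degeneracy you invoke. The fix is not to patch this step but to reroute the argument through $\xnew$ alone as the paper does, so that densification is never demanded at $\xrand$.
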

\begin{proof}
    The proof of Lemma~\ref{lem:necessary:exact:sample} from the supplementary online material appears in Appendix~\ref{appx:necessary:sample}.
\end{proof}

This result provides an upper limit on the probability of \ac{RRTstar} improving a solution at any iteration (Theorem~\ref{thm:necessary:exact:prob}).

\begin{thm}[An upper bound on the probability of improving a solution to a holonomic planning problem given knowledge of the omniscient set]\label{thm:necessary:exact:prob}
    The probability that an iteration of \ac{RRTstar} improves the current solution to a holonomic problem, $\ccur$, is bounded by the probability of sampling the omniscient set, $\fTrueSet$,
    \begin{equation*}
        \forall \iter \geq \iterThresh,\, \prob{\cnext<\ccur} \leq \prob{\xrand\in\fTrueSet},
    \end{equation*}%
    for any iteration, $\iter$, after a sufficient vertex density is achieved in the initial $\iterThresh$ iterations.
    
    For simplicity, this statement is limited to holonomic planning but it can be extended to specific constraints by expanding Lemma~\ref{lem:necessary:exact:sample}.
\end{thm}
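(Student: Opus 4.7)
The plan is to derive this bound as a direct consequence of Lemma~\ref{lem:necessary:exact:sample}, which already establishes the underlying set-theoretic implication. The only work remaining is to lift this pointwise implication between events to an inequality between their probabilities via the monotonicity of probability measures.

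First, I would fix an iteration index $\iter \geq \iterThresh$, so that the hypotheses of Lemma~\ref{lem:necessary:exact:sample} are in force. The lemma then guarantees that the event $\set{\cnext < \ccur}$ is contained in the event $\set{\xrand \in \fTrueSet}$: whenever an iteration of \ac{RRTstar} improves the incumbent cost, the sample drawn at that iteration must already lie in the omniscient set. Monotonicity of the underlying probability measure immediately gives
\begin{equation*}
    \prob{\cnext < \ccur} \;\leq\; \prob{\xrand \in \fTrueSet},
\end{equation*}
which is precisely the claimed bound.

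There is essentially no difficult step here, since the substantive content, showing that sampling inside $\fTrueSet$ is necessary once the tree is sufficiently dense, has already been discharged in Lemma~\ref{lem:necessary:exact:sample}. The only care required is to state the argument at the level of events in the sample space (rather than individual outcomes) so that monotonicity applies cleanly, and to note that the bound holds uniformly for all $\iter \geq \iterThresh$ rather than in some averaged sense. Finally, I would remark, as the theorem statement does, that the holonomic restriction enters only through Lemma~\ref{lem:necessary:exact:sample}; extending to differential constraints would require replacing that lemma with an analogue that accounts for the $\mathtt{Steer}$ function, after which the same one-line monotonicity argument would yield the corresponding bound.
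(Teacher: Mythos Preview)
Your proposal is correct and follows essentially the same approach as the paper: both derive the bound directly from Lemma~\ref{lem:necessary:exact:sample} by observing that the event of improvement is contained in the event of sampling $\fTrueSet$, then applying monotonicity of probability. Your version is simply a bit more explicit about phrasing the implication as an inclusion of events before invoking monotonicity.
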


\begin{proof}
    Proof of Theorem~\ref{thm:necessary:exact:prob} follows directly from Lemma~\ref{lem:necessary:exact:sample}.
    Sampling a state in $\fTrueSet$ is a necessary but not sufficient condition to improve the solution after $\iterThresh$ iterations; therefore, the probability of sampling such a state bounds the probability of improving the solution.
\end{proof}

Knowledge of an omniscient set requires solving the planning problem; however, these results can be extended to estimates of the omniscient set defined by solution cost heuristics (Definition~\ref{defn:informed}).

\begin{defn}[Informed set]\label{defn:informed}
    Let $\fBelow{\statex}$ represent a heuristic estimate of the solution cost constrained to go through a state, $\statex\in\stateSet$. A heuristic estimate of the omniscient set can then be defined as
    \begin{equation*}
        \fBelowSet \coloneqq \setst{\statex \in \stateSet}{\fBelow{\statex} < \ccur}.
    \end{equation*}%
    Such a set will be referred to as an \emph{informed set}.
\end{defn}

There are an infinite number of potential informed sets for any planning problem and choosing the `best' set requires methods to quantify their performance.
In binary classification, estimates are evaluated in terms of their precision and recall (Fig.~\ref{fig:pr}).
Analogue terms can be defined in sampling-based planning to quantify the ability of informed sets to estimate the omniscient set (Definitions~\ref{defn:precision} and \ref{defn:recall}).

\begin{defn}[Precision]\label{defn:precision}
    The precision of an informed sampling technique is the probability that random samples drawn from the informed set could also be drawn from the omniscient set (e.g., the percentage of states drawn from the informed set, $\fBelowSet$, that belong to the omniscient set, $\fTrueSet$).
    For uniform sampling of an informed set, this is a ratio of measures,
    \begin{equation*}
        \mathrm{Precision}\left(\fBelowSet\right) \coloneqq \frac{\lebesgue{\fBelowSet\cap\fTrueSet}}{\lebesgue{\fBelowSet}}.
    \end{equation*}
    Any informed set with nonzero sampling probability that is a \emph{subset} of the omniscient set will have $100\%$ precision.
\end{defn}
\begin{defn}[Recall]\label{defn:recall}
    The recall of an informed sampling technique is the probability that random states drawn from the omniscient set could also be sampled from the informed set (e.g., the percentage of states that belong to the omniscient set, $\fTrueSet$, with a nonzero probability of being sampled from the informed set, $\fBelowSet$).
    For uniform sampling of an informed set, this is a ratio of measures,
    \begin{equation*}
        \mathrm{Recall}\left(\fBelowSet\right) \coloneqq \frac{\lebesgue{\fBelowSet\cap\fTrueSet}}{\lebesgue{\fTrueSet}}.
    \end{equation*}
    Any informed set with nonzero sampling probability that is a \emph{superset} of the omniscient set will have $100\%$ recall.
\end{defn}
\begin{figure}[tb]
    \centering
    \includegraphics[width=\columnwidth]{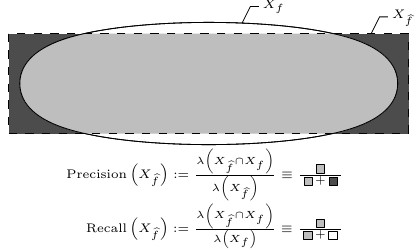}
    \caption{%
        An illustration of the \emph{precision} and \emph{recall} of estimating an oblong omniscient set, $\fTrueSet$, with a rectangular informed set, $\fBelowSet$.
        The informed set is coloured to highlight where it is correct (light grey) incorrect (dark grey) or missing the omniscient set (white).
        Precision is the likelihood of correctly sampling the omniscient set by sampling the informed set.
        Recall is the coverage of the omniscient set by the informed set.
        For uniform distributions, both these terms are ratios of Lebesgue measures.
    }
    \label{fig:pr}
\end{figure}%

Informed sets with $100\%$ recall (Definition~\ref{defn:admissible}) are important in almost-surely asymptotically optimal planning as less-than-perfect recall may exclude the optima to some problems.

\begin{defn}[Admissible informed set]\label{defn:admissible}
    A heuristic is said to be admissible if it never overestimates the true value of the function,
    \begin{equation*}
        \forall \statex \in \stateSet, \; \fBelow{\statex} \leq \fTrue{\statex}.
    \end{equation*}
    Any informed set defined by such an admissible heuristic will contain all possibly better solutions and have $100\%$ recall, i.e., $\fBelowSet \supseteq \fTrueSet.$
    
    This set will be referred to as an \emph{admissible} estimate of the omniscient set, or an \emph{admissible informed set}.
    If the heuristic is an admissible estimate of the cost function for all possible problems then the set will be referred to as a \emph{universally admissible informed set}.
\end{defn}

These definitions allow the probability of improving a solution to a holonomic problem to be bounded by the probability of sampling any admissible informed set (Lemmas~\ref{lem:necessary:heuristic} and \ref{lem:necessary:heuristic:sample} and Theorem~\ref{thm:necessary:heuristic:prob}).
The tightness of this bound will depend on the precision of the chosen estimate.
\begin{lem}[The necessity of adding states in an admissible informed set]\label{lem:necessary:heuristic}
    Adding a state from an admissible informed set, $\xnew\in\fBelowSet\supseteq\fTrueSet$, is a necessary condition for \ac{RRTstar} to improve the current solution, $\ccur$,
    \begin{equation*}
        \cnext < \ccur \implies \xnew \in \fBelowSet \supseteq \fTrueSet.
    \end{equation*}
\end{lem}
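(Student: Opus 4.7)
The plan is to reduce this lemma directly to Lemma~\ref{lem:necessary:exact} by exploiting the defining property of an admissible informed set. The key observation is that admissibility (Definition~\ref{defn:admissible}) yields an \emph{a priori} set inclusion $\fTrueSet \subseteq \fBelowSet$: for any $\statex$ with $\fTrue{\statex} < \ccur$, admissibility gives $\fBelow{\statex} \leq \fTrue{\statex} < \ccur$, so $\statex$ also lies in $\fBelowSet$ by Definition~\ref{defn:informed}. This reduces the whole claim to a containment argument, no further geometric or probabilistic reasoning needed.

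With that inclusion in hand, the argument is a two-step chain. First, I would invoke Lemma~\ref{lem:necessary:exact}, which already establishes the implication $\cnext < \ccur \implies \xnew \in \fTrueSet$. Second, composing that implication with $\fTrueSet \subseteq \fBelowSet$ immediately yields $\cnext < \ccur \implies \xnew \in \fBelowSet$, and the superset relation $\fBelowSet \supseteq \fTrueSet$ is exactly the statement of admissibility, so the lemma follows as written.

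There is no genuine obstacle here: the work has all been done by Lemma~\ref{lem:necessary:exact} and by the way admissibility was defined. The only subtlety worth spelling out in the final write-up is that the inclusion $\fTrueSet \subseteq \fBelowSet$ holds regardless of how loose the heuristic is, so the necessity conclusion is unaffected by the tightness of $\fBelow{\cdot}$; looseness of the heuristic only degrades precision in the sense of Definition~\ref{defn:precision} and plays no role in establishing necessity. I would therefore keep the proof to essentially two sentences, citing Lemma~\ref{lem:necessary:exact} and Definition~\ref{defn:admissible}, and defer any discussion of tightness to the subsequent bound on $\prob{\cnext < \ccur}$ in Theorem~\ref{thm:necessary:heuristic:prob}.
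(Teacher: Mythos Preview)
Your proposal is correct and matches the paper's approach exactly: the paper's proof is the single sentence ``Lemma~\ref{lem:necessary:heuristic} follows directly from Lemma~\ref{lem:necessary:exact} given that $\fBelowSet \supseteq \fTrueSet$.'' Your additional unpacking of why admissibility yields the inclusion and your remark about tightness being irrelevant to necessity are accurate elaborations, but the core argument is identical.
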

\begin{proof}
    Lemma~\ref{lem:necessary:heuristic} follows directly from Lemma~\ref{lem:necessary:exact} given that $\fBelowSet \supseteq \fTrueSet$.
\end{proof}

\begin{lem}[The necessity of sampling states in an admissible informed set in holonomic planning]\label{lem:necessary:heuristic:sample}
    Sampling an admissible informed set, $\xrand\in\fBelowSet\supseteq\fTrueSet$, is a necessary condition for \ac{RRTstar} to improve the current solution to a holonomic problem, $\ccur$, after an initial $\iterThresh$ iterations,
    \begin{equation*}
        \forall \iter \geq \iterThresh,\, \cnext < \ccur \implies \xrand \in \fBelowSet \supseteq \fTrueSet,
    \end{equation*}%
    for any sample distribution that maintains a nonzero probability over the entire informed set.
    
    For simplicity, this statement is limited to holonomic planning but it can be extended to specific constraints by expanding Lemma~\ref{lem:necessary:exact:sample}.
\end{lem}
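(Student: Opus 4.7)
The plan is to obtain Lemma~\ref{lem:necessary:heuristic:sample} as a direct consequence of Lemma~\ref{lem:necessary:exact:sample} combined with the set-containment property of admissible informed sets (Definition~\ref{defn:admissible}). The structure should mirror how Lemma~\ref{lem:necessary:heuristic} was deduced from Lemma~\ref{lem:necessary:exact}: the earlier promotion happened at the level of vertices added to the tree, whereas here the same promotion is applied at the level of sampled points. In particular, no new geometric or algorithmic ingredient is needed beyond the chain of inclusions $\xrand \in \fTrueSet \subseteq \fBelowSet$.

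First, I would invoke Lemma~\ref{lem:necessary:exact:sample} to conclude that for any iteration $\iter \geq \iterThresh$ of holonomic \ac{RRTstar}, the event $\cnext < \ccur$ implies $\xrand \in \fTrueSet$. Before applying that lemma I would verify that its hypothesis, namely a nonzero sampling probability over the entire omniscient set, is implied by the current lemma's hypothesis of a nonzero sampling probability over the entire informed set. This follows from admissibility, $\fTrueSet \subseteq \fBelowSet$, because any measurable region of $\fTrueSet$ is also a region of $\fBelowSet$ and therefore inherits a nonzero probability under the assumed distribution.

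Second, I would chain with the element-wise implication given by the containment $\fBelowSet \supseteq \fTrueSet$: $\xrand \in \fTrueSet \implies \xrand \in \fBelowSet$. Composing this with the conclusion of the previous step yields
\begin{equation*}
    \forall \iter \geq \iterThresh, \quad \cnext < \ccur \implies \xrand \in \fBelowSet \supseteq \fTrueSet,
\end{equation*}
which is exactly the statement of the lemma. The restriction to holonomic planning is inherited directly from Lemma~\ref{lem:necessary:exact:sample} and the same extensions to specific constraints carry over verbatim.

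The argument is essentially bookkeeping, so I do not expect a substantive obstacle; the only item requiring care is the hypothesis transfer noted above, i.e., convincing the reader that nonzero probability on the larger informed set implies nonzero probability on the smaller omniscient set so that Lemma~\ref{lem:necessary:exact:sample} is legitimately applicable. Once that is observed, set containment does all of the remaining work.
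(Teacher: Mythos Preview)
Your proposal is correct and matches the paper's approach exactly: the paper states that Lemma~\ref{lem:necessary:heuristic:sample} follows directly from Lemma~\ref{lem:necessary:exact:sample} given that $\fBelowSet \supseteq \fTrueSet$. Your additional remark about the hypothesis transfer (nonzero probability on $\fBelowSet$ implying nonzero probability on $\fTrueSet$) is a welcome clarification but does not depart from the paper's argument.
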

\begin{proof}
    Lemma~\ref{lem:necessary:heuristic:sample} follows directly from Lemma~\ref{lem:necessary:exact:sample} given that $\fBelowSet \supseteq \fTrueSet$.
\end{proof}

\begin{thm}[An upper bound on the probability of improving a solution to a holonomic planning problem given knowledge of an admissible informed set]\label{thm:necessary:heuristic:prob}
    The probability that an iteration of \ac{RRTstar} improves the current solution to a holonomic problem, $\ccur$, is bounded by the probability of sampling an admissible informed set, $\fBelowSet\supseteq\fTrueSet$,
    \begin{equation*}
        \forall \iter \geq \iterThresh,\, \prob{\cnext<\ccur} \leq \prob{\xrand\in\fTrueSet}
                                                              \leq \prob{\xrand\in\fBelowSet},
    \end{equation*}%
    for any iteration, $\iter$, after an initial $\iterThresh$ iterations.
    
    For simplicity, this statement is limited to holonomic planning but it can be extended to specific constraints by expanding Lemma~\ref{lem:necessary:exact:sample}.
\end{thm}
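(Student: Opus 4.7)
The plan is to chain two inequalities, each of which is already essentially handed to us by earlier results. The target statement factors cleanly as
\[
\prob{\cnext<\ccur} \;\leq\; \prob{\xrand\in\fTrueSet} \;\leq\; \prob{\xrand\in\fBelowSet},
\]
so I would establish the two bounds independently and then combine them by transitivity.

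For the left inequality, I would simply invoke Theorem~\ref{thm:necessary:exact:prob} at the same iteration $\iter \geq \iterThresh$. That result, which is itself a direct consequence of Lemma~\ref{lem:necessary:exact:sample}, already asserts $\prob{\cnext<\ccur} \leq \prob{\xrand\in\fTrueSet}$ under the holonomic assumption. For the right inequality, I would appeal to Definition~\ref{defn:admissible}: because $\fBelowSet$ is admissible, the heuristic $\fBelow{\cdot}$ never overestimates $\fTrue{\cdot}$, and hence the sublevel sets satisfy $\fBelowSet \supseteq \fTrueSet$. Monotonicity of the sampling probability measure (if $A \subseteq B$ then $\prob{\xrand\in A} \leq \prob{\xrand\in B}$) immediately yields $\prob{\xrand\in\fTrueSet} \leq \prob{\xrand\in\fBelowSet}$. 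Concatenating the two gives the theorem.

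There is no real obstacle; the proof is essentially a one-line chaining, with the heavy lifting already done in Theorem~\ref{thm:necessary:exact:prob} and in the set-inclusion property built into Definition~\ref{defn:admissible}. The only subtle point I would flag is that the underlying Lemma~\ref{lem:necessary:heuristic:sample} requires the sampling distribution to assign nonzero probability over the entire admissible informed set, so that the monotonicity step makes sense for whatever measure the planner uses; this is consistent with the premises already adopted in the preceding lemmas and needs no additional assumption. As in those lemmas, I would also note that the restriction to holonomic planning only enters through the dependence on Lemma~\ref{lem:necessary:exact:sample}, and that the bound extends to constrained settings whenever that underlying lemma does.
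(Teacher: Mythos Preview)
Your proposal is correct and mirrors the paper's own argument: the paper's proof is the one-liner that Theorem~\ref{thm:necessary:heuristic:prob} follows directly from Theorem~\ref{thm:necessary:exact:prob} together with the inclusion $\fBelowSet \supseteq \fTrueSet$, which is exactly the chaining you describe.
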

\begin{proof}
    Theorem~\ref{thm:necessary:heuristic:prob} follows directly from Theorem~\ref{thm:necessary:exact:prob} given that $\fBelowSet \supseteq \fTrueSet$.
\end{proof}

\section{Prior Work Accelerating \acs{RRTstar} Convergence}\label{sec:lit}
A review of previous work to improve the convergence rate of \ac{RRTstar} is presented using the results and terminology of Section~\ref{sec:omni_inf}.
All these techniques attempt to increase the real-time rate of searching the omniscient set by exploiting additional information.
Most can be viewed as versions of sample biasing, sample rejection, and/or graph pruning (Sections~\ref{sec:lit:bias}--\ref{sec:lit:other}).

\subsection{Sample Biasing}\label{sec:lit:bias}
Increasing the likelihood of sampling an informed set improves \ac{RRTstar} performance.
This sample biasing creates a nonuniform sample distribution that will increase exploration of the informed set but invalidates the assumptions used to prove almost-sure asymptotic optimality.
One method to maintain these formal performance guarantees is to calculate the \ac{RGG} connection limit from a subset of samples that are uniformly distributed \cite{janson_ijrr15}.
This maintains almost-sure asymptotic optimality but increases the required number of rewirings.

It is common to bias sampling around the current solution.
This \emph{path biasing} increases the likelihood of sampling a state that can improve the current solution but reduces the likelihood of finding solutions in other homotopy classes (i.e., it increases precision by decreasing recall; Fig~\ref{fig:subsets}a).
The ratio of path biasing to global search is frequently a user-chosen parameter that must be tuned for each problem.

Akgun and Stilman \cite{akgun_iros11} use path biasing in their dual-tree version of \ac{RRTstar}.
Once an initial solution is found the algorithm spends a user-specified percentage of iterations refining the current solution.
It does this by explicitly sampling near a randomly selected state on the current path.
This increases the probability of improvement at the expense of decreasing the exploration of other homotopy classes.
Their algorithm also employs sample rejection in exploring the state space (see Section~\ref{sec:lit:rej}).
\begin{figure}[tb]
    \centering
    \includegraphics[page=1,scale=1]{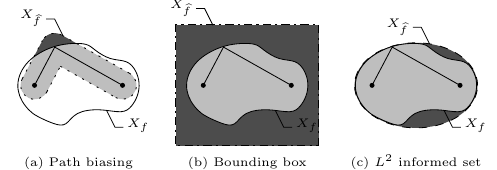} %
    \caption{%
    A illustration of the precision and recall of informed sampling techniques on the omniscient set depicted in Fig.~\ref{fig:intro:infsets}(b).
    The informed sets are coloured to highlight where they are correct (light grey), incorrect (dark grey), or missing the omniscient set (white).
    Path biasing, (a), generally has high precision but low recall, especially in the presence of multiple homotopy classes
    Global or bounded sampling, (b), generally has full recall but low precision, especially in large relative planning problems or high state dimensions.
    Direct sampling of the $L^2$ informed set, (c), has full recall and high precision, regardless of the size of the omniscient set and is exactly equal to the omniscient set in the absence of obstacles and constraints.
    }
    \label{fig:subsets}
\end{figure}%

Nasir et al. \cite{nasir_ijars13} combine path biasing with smoothing in their \acs{RRTstar}-Smart algorithm.
Solution paths are simplified and then used as biases for further sampling around the solution.
Their path smoothing rapidly improves the current solution but the path biasing decreases the likelihood of finding a solution in a different homotopy class.

Kiesel et al. \cite{kiesel_socs12} use a two-stage sampling process in their \emph{f-biasing} technique.
Samples are generated by randomly selecting a region of the planning problem and then uniformly sampling it.
The probability of selecting a region is calculated by solving a simple discretization of the planning problem with Dijkstra's algorithm \cite{dijkstra_59}.
The regions along the discrete solution are given a higher selection probability but all regions maintain a nonzero probability to compensate for the incompleteness of the discretization.
This technique provides a sampling bias for the entire \ac{RRTstar} search but once a solution is found it continues to sample states that cannot provide a better solution.
It is stated that almost-sure asymptotic optimality is maintained but it is not discussed how to modify the rewiring neighbourhood to do so.
\squeezeWidowedWords

Kim et al. \cite{kim_icra14} also use a two-stage sampling process in their Cloud \acs{RRTstar} algorithm.
They generate uniform samples from a series of collision-free, possibly overlapping, spheres defined by a \acl{GVG} \cite{choset_icra95}.
New spheres are added on solution paths and the probability of selecting them is updated so that samples from the homotopy class of the solution are biased around the path while maintaining the probability of sampling other homotopy classes.
Cloud \acs{RRTstar} successfully finds better solutions faster than other algorithms but continues to sample states that cannot improve the solution and its effect on almost-sure asymptotic optimality is not discussed.

\escapeParagraph
Unlike sample biasing methods, the direct informed sampling used by Informed \acs{RRTstar} does not consider states that are known to be unable to improve a solution.
It does result in a nonuniform sample distribution over the problem domain but it is still almost-surely asymptotically optimal as it has a uniform distribution in the informed set being searched.

\subsection{Sample Rejection}\label{sec:lit:rej}
Ignoring samples outside an informed set improves \ac{RRTstar} performance.
This sample rejection decreases the computational cost of states that cannot improve a solution but does not increase the probability of finding ones that can.
If this probability is low (i.e., if the informed set is small relative to the sampling domain) then convergence will not be improved (Fig.~\ref{fig:subsets}b).
It is shown that this probability decreases factorially with state dimension (i.e., faster than exponentially) in existing formulations of the holonomic minimum-path-length problem (Theorem~\ref{thm:curse}).

Akgun and Stilman \cite{akgun_iros11} use global rejection sampling in addition to sample biasing in their dual-tree algorithm.
As samples are drawn from the entire problem domain, performance will decrease rapidly as the solution improves and/or in large or high-dimensional planning problems.

Otte and Correll \cite{otte_tro13} use adaptive rejection sampling in their parallelized \ac{CFOREST} algorithm.
Samples are generated from a rectangular subset of the planning domain that bounds the ellipsoidal $L^2$ informed set and rejected using the $L^2$ heuristic.
This increases sampling precision and improves performance in large planning problems but its effectiveness still decreases factorially with state dimension (Theorem~\ref{thm:curse}).

\escapeParagraph
Unlike sample rejection methods, the direct informed sampling used by Informed \acs{RRTstar} maintains high precision and $100\%$ recall regardless of the relative sizes of the informed set and problem domain.
It focuses its search in response to solution improvements and does not decrease in effectiveness in large planning domains.
It scales more effectively than existing approaches to high-dimensional planning problems

\subsection{Graph Pruning}\label{sec:lit:prune}
Limiting the tree to an informed set improves \ac{RRTstar} performance.
This graph pruning removes states that can no longer improve the existing solution and reduces the computational cost of basic operations (e.g., nearest neighbour searches).
It can also be used reject potential new states given their connection and any constraints, e.g., \eqref{eqn:back:steer}.
After a sufficient number of iterations, this incremental pruning is equivalent in holonomic planning to rejection sampling with the same heuristic (Lemma~\ref{lem:necessary:heuristic:sample}) but with the additional computational costs of expanding towards the sample.

Karaman et al. \cite{karaman_icra11} use graph pruning to implement an online version of \ac{RRTstar} that improves solutions during path execution.
They remove vertices whose \emph{current} cost-to-come plus a heuristic estimate of cost-to-go is higher than the current solution.
As current cost-to-come overestimates a vertex's optimal cost-to-come (i.e., it is an inadmissible heuristic), this approach may erroneously remove vertices that could provide a better solution.
\squeezeWidowedWords

Arslan and Tsiotras \cite{arslan_icra13,arslan_icra15} combine incremental graph-pruning and incremental graph search techniques with \acp{RRG} \cite{karaman_ijrr11} to reject samples in their \RRTsharp{} algorithm.
This incremental pruning focuses the search but its performance will also decrease rapidly as the solution improves or when used on large or high-dimensional planning problems.
Some of the rejection criteria also use the current cost-to-come of vertices and may reject samples that could later improve the solution.

\escapeParagraph
Unlike rejecting states with incremental graph pruning, the direct informed sampling used by Informed \acs{RRTstar} wastes no computational effort on states that are known to be unable to improve the solution.
Its admissible graph pruning algorithm to remove unnecessary states also only removes vertices from the tree if doing so does not negatively affect the search.

\subsection{Other Techniques}\label{sec:lit:other}
Some techniques to improve \acs{RRT}/\acs{RRTstar} performance do not fit neatly into the previous categories.
Many of these methods could be further accelerated through direct informed sampling.

Urmson and Simmons \cite{urmson_iros03} uses rejection sampling to create a ``probabilistic implementation of heuristic search concepts'' in their \ac{hRRT}.
At each iteration, a uniformly distributed sample is probabilistically kept or rejected as a function of its heuristic value relative to the existing tree.
This iteratively biases \ac{RRT} expansion towards regions of the problem domain believed to contain high-quality solutions and often finds better solutions than \ac{RRT}, especially on problems with continuous cost functions (e.g., path length \cite{urmson_iros03}); however, it results in nonuniform sample distributions.

Ferguson and Stentz \cite{ferguson_iros06} recognize that an existing solution defines the set of states that could provide better solutions.
Their Anytime \acsp{RRT}s algorithm attempts to incrementally find better solutions by searching a decreasing series of these ellipses.
This shrinking search ignores some expensive solutions but does not guarantee better ones will be found.

Alterovitz et al. \cite{alterovitz_icra11} add path refinement to \ac{RRTstar} in their \ac{RRM} algorithm.
Once an initial solution is found, each iteration of \ac{RRM} either samples a new state or selects an existing state from the current solution and refines it.
Path refinement connects the selected state to its neighbours and results in a graph instead of a tree.
The ratio of refinement to exploration is a user-tuned parameter.

Shan et al. \cite{shan_iv14} find an initial solution with \ac{RRT}, simplify and rewire it using their \ac{RRTstar}\_S algorithm, and then continue the search with \ac{RRTstar}.
This can find better solutions faster than \ac{RRTstar} alone but the resulting search is not focused and continues to consider states that cannot provide better solutions.

Salzman and Halperin \cite{salzman_icra14} relax performance to asymptotic \emph{near} optimality in their \ac{LBTRRT}.
Rewirings are only considered if they are required to maintain the desired tolerance to the optimum.
This can reduce computational complexity but does not focus the search.

Devaurs et al. \cite{devaurs_afr15} use ideas from stochastic optimization to explore complex cost functions in their \ac{TRRTstar} and \ac{ATRRT} algorithms.
Transition tests accept or reject a potential new state depending on its cost relative to its parent.
These tests help reduce the \textit{integral} or \textit{mechanical work} of the path in a cost space; however, for problems seeking to minimize path length are equivalent to graph pruning.

\escapeParagraph
These algorithms, and those designed for more advanced purposes (e.g., \RRTx{} \cite{otte_ijrr16}), can be improved with the direct informed sampling and admissible graph pruning techniques illustrated in Informed \acs{RRTstar}.

\subsection{Direct Informed Sampling for Path Length}
This paper presents Informed \ac{RRTstar} as a demonstration of how direct sampling of $L^2$ informed sets increases the rate at which \ac{RRTstar} improves solutions for problems seeking to minimize path length.
Unlike sample biasing, this approach considers all homotopy classes that could provide better solutions (i.e., $100\%$ recall) while maintaining uniform sample distribution over a subplanning problem.
Unlike sample rejection or graph pruning, it is effective regardless of the relative size of the informed set or the state dimension (i.e., high precision).
In situations where the heuristic does not provide substantial information (i.e., small planning problems and/or large informed sets), it performs identically to \ac{RRTstar}.

\section{The \texorpdfstring{$L^2$}{L2} Informed Set}\label{sec:l2}
A universally admissible heuristic is well defined for problems seeking to minimize path length in $\real^\dimension$ and is commonly used in sampling-based planners (e.g., \cite{ferguson_iros06,akgun_iros11,otte_tro13}).
The cost of a solution constrained to pass through any state, $\statex\in\stateSet$, is bounded from below by the $L^2$ norm (i.e., Euclidean distance) between it, the start, $\xstart$, and the goal, $\xgoal$,
\begin{equation}\label{eqn:fBelow}
    \fBelow{\statex} = \norm{\statex - \xstart}{2} + \norm{\xgoal - \statex}{2}.
\end{equation}%
The set of states that could provide a better solution than the current solution cost, $\ccur$, can then be referred to as the $L^2$ informed set,
\begin{equation*}
    \fBelowSet = \setst{\statex \in \freeSet}{\norm{\statex - \xstart}{2} + \norm{\xgoal - \statex}{2} < \ccur}.
\end{equation*}%
This informed set is a universally admissible estimate of the omniscient set and is exact in the absence of obstacles and constraints (i.e., it is sharp over all minimum-path-length problems).
The size of this informed set will decrease as solutions improve.

\begin{figure}[tb]
    \centering
    \includegraphics[scale=1]{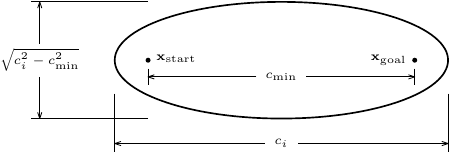} %
    \caption{%
    The $L^2$ informed set, $\fBelowSet$, for a $\real^2$ problem seeking to minimize path length is an ellipse with the initial state, $\xstart$, and the goal state, $\xgoal$, as focal points.
    The shape of the ellipse depends on both the initial and goal states, the theoretical minimum cost between the two, $\cmin$, and the cost of the best solution found to date, $\ccur$.
    The eccentricity of the ellipse is given by $\cmin/\ccur$.
    \squeezeWidowedWords
    }
    \label{fig:ellipse}
\end{figure}%
The $L^2$ informed set is the intersection of the free space, $\freeSet$, and a $\dimension$-dimensional hyperellipsoid symmetric about its transverse axis (i.e., a prolate hyperspheroid),
\begin{equation*}
    \fBelowSet = \freeSet \cap \phsSet,
\end{equation*}
where
\begin{equation*}
    \phsSet \coloneqq \setst{\statex \in \real^\dimension}{\norm{\statex - \xstart}{2} + \norm{\xgoal - \statex}{2} < \ccur}.
\end{equation*}

The prolate hyperspheroid has focal points at $\xstart$ and $\xgoal$, a transverse diameter of $\ccur$, and conjugate diameters of $\sqrt{\ccur^2 - \cmin^2}$, where
\begin{equation*}
    \cmin \coloneqq \norm{\xgoal-\xstart}{2},
\end{equation*}
is the theoretical minimum cost (Fig.~\ref{fig:ellipse}).
The Lebesgue measure of the informed set is
\begin{equation}\label{eqn:phsMeasure}
    \lebesgue{\fBelowSet} \leq \lebesgue{\phsSet} = \frac{\ccur \left( \ccur^2 - \cmin^2 \right)^{\frac{\dimension-1}{2}} \unitBall{\dimension}}{2^\dimension},
\end{equation}
where $\unitBall{\dimension}$ is the Lebesgue measure of a $\dimension$-dimensional unit ball,
\begin{equation}\label{eqn:ballMeasure}
    \unitBall{\dimension} \coloneqq \frac{\pi^{\frac{\dimension}{2}}}{\gammaFunc{\frac{\dimension}{2} + 1}},
\end{equation}
and $\gammaFunc{\cdot}$ is the gamma function, an extension of factorials to real numbers \cite{gamma_function}.

The probability of uniformly sampling this informed set by sampling any superset (e.g., a bounding box), $\samplingSet\supseteq\fBelowSet$, can be written as a ratio of measures,
\begin{align}\label{eqn:sampleProb}
    &\probst{\xrand \in \fBelowSet}{\xrand\sim\uniform{\samplingSet}}
    \leq \frac{\lebesgue{\phsSet}}{\lebesgue{\samplingSet}} \nonumber\\
    &\qquad\qquad\qquad\qquad\qquad\quad
    {}= \frac{\pi^{\frac{\dimension}{2}} \ccur \left( \ccur^2 - \cmin^2 \right)^{\frac{\dimension-1}{2}}}{2^\dimension \gammaFunc{\frac{\dimension}{2} + 1} \lebesgue{\samplingSet}},
\end{align}
which can be combined with Theorem~\ref{thm:necessary:heuristic:prob} to bound the probability of improving a solution to a holonomic problem,
\begin{align}\label{eqn:betterProb}
    \forall \iter \geq \iterThresh,\, & \probst{\cnext < \ccur}{\xrand\sim\uniform{\samplingSet}} \nonumber\\
    &\qquad\qquad\qquad\quad
    \leq \frac{\pi^{\frac{\dimension}{2}} \ccur \left( \ccur^2 - \cmin^2 \right)^{\frac{\dimension-1}{2}}}{2^\dimension \gammaFunc{\frac{\dimension}{2} + 1} \lebesgue{\samplingSet}}.
\end{align}

\begin{figure*}[tb]
    \centering
    \includegraphics[width=\textwidth]{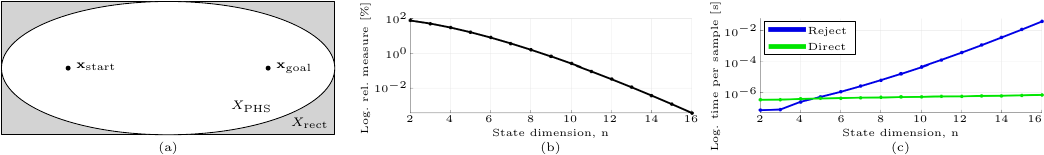}
    \caption{%
        An illustration of state dimension on problems seeking to minimize path length.
        The best case performance of an admissible rectangular sampling, e.g., \cite{otte_tro13}, occurs when the rectangle tightly bounds the prolate hyperspheroid defined by the current solution cost, $\rectSet\supset\phsSet\supseteq\fBelowSet$, (a).
        The probability of sampling this $L^2$ informed set (i.e., its relative measure) decreases \emph{factorially} (i.e., faster than exponentially) with state dimension, $\dimension$, (b), meaning that existing formulations of \ac{RRTstar} do not scale effectively to high state dimensions.
        Direct informed sampling, \algo{algo:infset}, scales more efficiently as illustrated by the average per-sample time versus state dimension, (c).
        Samples in the unit $\dimension$-ball for \algo{algo:phs} were generated with Boost 1.58.
    }
    \label{fig:sampleTheory}
\end{figure*}%
This probability becomes arbitrarily small for
\begin{inparaenum}[(i)]
    \item costs, $\ccur$, near the theoretical limit, $\cmin$,\label{item:reject:cost}
    \item large sampling domains, $\lebesgue{\samplingSet}$, or\label{item:reject:size}
    \item high state dimensions, $\dimension$.\label{item:reject:state}
\end{inparaenum}
While the solution cost and sampling domain size may vary during the search of a problem, the state dimension is constant throughout.
This motivates investigating the effect of state dimension on existing formulations of the holonomic minimum-path-length planning problem (Theorem~\ref{thm:curse}).

\begin{thm}[The minimum-path-length curse of dimensionality]\label{thm:curse}
    The probability that \ac{RRTstar} improves a solution to holonomic problems seeking to minimize path length decreases \emph{factorially} (i.e., faster than exponentially) as state dimension increases,
    \begin{align}\label{eqn:thm:curse}
        \forall \iter \geq \iterThresh,\, \probst{\cnext<\ccur}{\xrand\sim\uniform{\rectSet}} \leq& \frac{\pi^{\frac{\dimension}{2}}}{2^\dimension\gammaFunc{\frac{\dimension}{2} + 1}},
    \end{align}
    when uniformly sampling a (hyper)rectangle bounding the $L^2$ informed set, $\rectSet\supset\phsSet\supseteq\fBelowSet\supseteq\fTrueSet$.
    
    For simplicity, this statement is limited to holonomic planning but it can be extended to specific constraints by expanding Lemma~\ref{lem:necessary:exact:sample}.
\end{thm}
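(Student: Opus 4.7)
The plan is to chain Theorem~\ref{thm:necessary:heuristic:prob} with the explicit volumes of a prolate hyperspheroid and its tight axis-aligned bounding box. Since $\rectSet \supset \phsSet \supseteq \fBelowSet \supseteq \fTrueSet$ and $\xrand \sim \uniform{\rectSet}$, the general bound \eqref{eqn:sampleProb} applies with $\samplingSet = \rectSet$, yielding $\probst{\cnext < \ccur}{\xrand\sim\uniform{\rectSet}} \leq \lebesgue{\phsSet}/\lebesgue{\rectSet}$ for all $\iter \geq \iterThresh$, with $\lebesgue{\phsSet}$ as in \eqref{eqn:phsMeasure}.

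The core step is computing $\lebesgue{\rectSet}$ for the tightest bounding hyperrectangle, i.e., a rectangle aligned with the transverse and conjugate axes of $\phsSet$. Such a rectangle has one side of length equal to the transverse diameter, $\ccur$, and $\dimension - 1$ sides of length equal to the conjugate diameter, $\sqrt{\ccur^2 - \cmin^2}$, so
\begin{equation*}
    \lebesgue{\rectSet} = \ccur \, \bigl(\ccur^2 - \cmin^2\bigr)^{(\dimension-1)/2}.
\end{equation*}
Substituting this together with \eqref{eqn:phsMeasure} into the ratio $\lebesgue{\phsSet}/\lebesgue{\rectSet}$, the $\ccur$- and $\cmin$-dependent factors cancel exactly, collapsing the bound to $\unitBall{\dimension}/2^\dimension = \pi^{\dimension/2}/\bigl(2^\dimension \, \gammaFunc{\dimension/2 + 1}\bigr)$, which is precisely \eqref{eqn:thm:curse}.

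The final step is to explain why this constitutes a \emph{curse}. By Stirling's approximation, $\gammaFunc{\dimension/2 + 1}$ grows super-exponentially, so the right-hand side decays faster than any exponential in $\dimension$, justifying the ``factorial'' terminology. I do not anticipate a hard technical obstacle; the only conceptual subtlety is being explicit that this is the \emph{best case} for a rectangular rejection sampler: any looser or misaligned $\rectSet$ has strictly larger Lebesgue measure and hence a strictly smaller probability ratio, so the stated inequality holds regardless of which axis-aligned bounding rectangle is used — the tight rectangle merely realizes the tightest (and therefore most informative) upper bound achievable by such methods.
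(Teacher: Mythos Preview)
Your proposal is correct and essentially identical to the paper's proof: both invoke the bound \eqref{eqn:betterProb} (which packages Theorem~\ref{thm:necessary:heuristic:prob} with \eqref{eqn:sampleProb}), lower-bound $\lebesgue{\rectSet}$ by the measure $\ccur\left(\ccur^2 - \cmin^2\right)^{(\dimension-1)/2}$ of the tight bounding box, and observe that the solution-dependent factors cancel to leave $\unitBall{\dimension}/2^\dimension$. The only differences are cosmetic: the paper states the rectangle measure directly as an inequality valid for \emph{any} $\rectSet\supset\phsSet$ rather than computing the tight case and then arguing monotonicity, and it omits your Stirling remark.
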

\begin{proof}
    Theorem~\ref{thm:curse} is proven for \ac{RRTstar} but holds for any algorithm for which an equivalent to Theorem~\ref{thm:necessary:heuristic:prob} exists.

    The smallest possible $\rectSet$ that completely contains $\phsSet$ is a (hyper)rectangle with widths corresponding to the diameters of the prolate hyperspheroid (Fig.~\ref{fig:sampleTheory}a).
    The measure of any $\rectSet\supset\phsSet$ is therefore bounded from below as
    \begin{equation}\label{eqn:thm:curse:tightMeasure}
        \lebesgue{\rectSet} \geq \ccur \left( \ccur^2 - \cmin^2 \right)^{\frac{\dimension-1}{2}}.
    \end{equation}
    When substituted into \eqref{eqn:betterProb} this gives
    \begin{equation*}
        \forall \iter \geq \iterThresh,\, \probst{\cnext < \ccur}{\xrand\sim\uniform{\rectSet}} \leq \frac{\pi^{\frac{\dimension}{2}}}{2^\dimension\gammaFunc{\frac{\dimension}{2} + 1}},
    \end{equation*}
    proving Theorem~\ref{thm:curse} for all rectangular sets, $\rectSet$, such that $\rectSet \supset \phsSet \supseteq \fBelowSet\supseteq\fTrueSet$.
\end{proof}

Theorem~\ref{thm:curse} is an upper bound on the utility of rectangular rejection sampling in holonomic planning and is illustrated by plotting \eqref{eqn:thm:curse} versus state dimension (Fig.~\ref{fig:sampleTheory}b).
The results show that while rectangular rejection sampling may be $79\%$ successful in $\real^2$, its success decreases factorially as state dimension increases and is only $2\%$ in $\real^8$ and $4 \times 10^{-4}\%$ in $\real^{16}$.
These numbers represent the \emph{best-case} for rectangular rejection sampling and actual performance will depend on the size and orientation of the informed set relative to the sampling domain.
This motivates a need for a direct method to sample the prolate hyperspheroid regardless of size, orientation, and state dimension.

\subsection{Direct Sampling}\label{sec:l2:sample}
A direct method to generate uniformly distributed samples in the $L^2$ informed set is adapted from techniques to sample hyperellipsoids \cite{sun_fusion02}.

Let $\ellipseMatrix\in\real^{\dimension\times \dimension}$ be a symmetric, positive-definite matrix (the hyperellipsoid matrix) such that the interior of a hyperellipsoid, $\ellipseSet$, is defined as
\begin{equation}\label{eqn:SDefn}
    \ellipseSet \coloneqq \set{\statex \in \real^\dimension\,\middle|\,\left( \statex - \xcentre \right)^T\ellipseMatrix^{-1}\left( \statex - \xcentre \right) < 1},
\end{equation}%
where $\xcentre$ is the centre point of the hyperellipsoid.
Uniformly distributed samples in the hyperellipsoid, $\xellipse\sim\uniform{\ellipseSet}$, can be generated from uniformly distributed samples in the interior of a unit $\dimension$-dimensional ball, $\xball\sim\uniform{\ballSet}$, by
\begin{equation}\label{eqn:transformDefn}
    \xellipse = \mathbf{L} \xball + \xcentre,
\end{equation}%
where $\mathbf{L}\in\real^{\dimension\times \dimension}$ is the lower-triangular Cholesky decomposition of the hyperellipsoid matrix such that
\begin{equation*}
    \mathbf{L}\mathbf{L}^T \equiv \ellipseMatrix
\end{equation*}%
and
\begin{equation*}
    \ballSet \coloneqq \setst{\statex\in\real^\dimension}{\norm{\statex}{2} < 1}.
\end{equation*}

For hyperellipsoids with orthogonal axes, there exists a coordinate frame in which the hyperellipsoid matrix is diagonal,
\begin{equation*}
    \ellipseMatrix' \coloneqq \diag\left( \radius_1^2, \radius_2^2, \ldots, \radius_n^2 \right),
\end{equation*}
where $\radius_\counterj$ is the radius of $\counterj$-th axis of the hyperellipsoid and $\diag\left(\cdot\right)$ constructs a diagonal matrix.
A rotation from this hyperellipsoid-aligned frame to the world frame, $\ellipseRotation\in SO\left(\dimension\right)$, can be used to write \eqref{eqn:SDefn} in terms of $\ellipseMatrix'$ as
\begin{align}\label{eqn:transformFinal}
    \ellipseSet \coloneqq &\left\lbrace\statex \in \real^\dimension\;\;\middle|
        \vphantom{\left( \statex - \xcentre \right)^T\ellipseRotation\mathbf{S'}^{-1}\ellipseRotation^T\left( \statex - \xcentre \right) < 1}\right.
        \nonumber\\
        &\;\left.\vphantom{\statex \in \real^\dimension}
    \left( \statex - \xcentre \right)^T\ellipseRotation\mathbf{S'}^{-1}\ellipseRotation^T\left( \statex - \xcentre \right) < 1\right\rbrace,\nonumber
\shortintertext{and \eqref{eqn:transformDefn} as}
    &\qquad\xellipse = \ellipseRotation\mathbf{L}' \xball + \xcentre,
\end{align}
given the orthogonality of rotation matrices, $\ellipseRotation^{-1}\equiv\ellipseRotation^T$, and that $\mathbf{L}'\mathbf{L}'^T \equiv \ellipseMatrix'$.

The rotation between frames can be solved directly as a general Wahba problem \cite{wahba_siam65} even when underspecified \cite{ruiter_jas14}.
Generally, the rotation matrix from one set of axes, $\set{\axis_{j}}$, to another set of axes, $\set{\mathbf{b}_{j}}$, is given by 
\begin{equation} \label{eqn:svd}%
    \mathbf{C}_{\rm ba} = \mathbf{U}\boldsymbol{\Lambda}\mathbf{V}^{T},
\end{equation}%
where $\boldsymbol{\Lambda}\in \real^{\dimension\times \dimension}$ is
\begin{equation*}
    \boldsymbol{\Lambda} \coloneqq \diag\left( 1, \ldots, 1, \det\left(\mathbf{U}\right) \det\left(\mathbf{V}\right) \right),
\end{equation*}
and $\det\left(\cdot\right)$ is the matrix determinant.
The terms $\mathbf{U} \in \real^{\dimension\times \dimension}$ and $\mathbf{V} \in \real^{\dimension\times \dimension}$ are unitary matrices such that $\mathbf{U}\boldsymbol{\Sigma}\mathbf{V}^T \equiv \mathbf{M}$ via singular value decomposition and $\mathbf{M}\in\real^{\dimension\times\dimension}$ is given by the outer product of the $\counterj\leq \dimension$ corresponding axes,
\begin{equation}\label{eqn:MDefn}
    \mathbf{M} \coloneqq \left[\axis_{1}, \axis_{2}, \ldots, \axis_{j} \right] \left[\mathbf{b}_{1}, \mathbf{b}_{2}, \ldots \mathbf{b}_{j}\right]^T.
\end{equation}

In problems seeking to minimize path length, the hyperellipsoid is a prolate hyperspheroid described by
\begin{align}
    \xcentre &\coloneqq \frac{\xstart + \xgoal}{2},\label{eqn:phs:xcentre}\\
    \mathbf{S'} &\coloneqq \diag \left( \frac{\ccur^2}{4}, \frac{\ccur^2 - \cmin^2}{4}, \ldots, \frac{\ccur^2 - \cmin^2}{4} \right),\nonumber\\
\shortintertext{and therefore,}
    \mathbf{L}' &= \diag \left( \frac{\ccur}{2}, \frac{\sqrt{\ccur^2 - \cmin^2}}{2}, \ldots, \frac{\sqrt{\ccur^2 - \cmin^2}}{2} \right)\label{eqn:phs:L}.
\end{align}%
Its local coordinate system is underspecified in the conjugate directions due to symmetry, making \eqref{eqn:MDefn} just
\begin{equation}\label{eqn:phs:M}%
    \mathbf{M} = \axis_1\mathbf{1}_1^T,
\end{equation}%
where $\mathbf{1}_1$ the first column of the identity matrix and the transverse axis in the world frame is
\begin{equation*}%
    \axis_{1} = \left( \xgoal - \xstart \right)/\norm{\xgoal - \xstart}{2}.
\end{equation*}%

Samples distributed uniformly in the $L^2$ informed set, $\fBelowSet=\phsSet\cap\freeSet$, can therefore be generated by using \eqref{eqn:transformFinal} to transform samples drawn uniformly from a unit $\dimension$-ball.
These samples are mapped to the prolate hyperspheroid through scaling, \eqref{eqn:phs:L}, rotation, \eqref{eqn:svd} and \eqref{eqn:phs:M}, and translation, \eqref{eqn:phs:xcentre}.

Sun and Farooq \cite{sun_fusion02} investigate various methods to generate samples in hyperellipsoids and provide the following lemma regarding the uniform sample density of this technique.
\begin{lem}[The uniform distribution of samples transformed into a hyperellipsoid from a unit $\dimension$-ball. Originally Lemma 1 in \cite{sun_fusion02}]\label{lem:uniform}
    If the random points distributed in a hyperellipsoid are generated from the random points uniformly distributed in a hypersphere through a linear invertible nonorthogonal transformation, then the random points distributed in the hyperellipsoid are also uniformly distributed.
\end{lem}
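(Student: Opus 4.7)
The plan is to prove this by the standard change-of-variables formula for probability densities, exploiting the fact that a linear transformation has a \emph{constant} Jacobian determinant.

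First I would set up the notation: let $\xball \sim \uniform{\ballSet}$ be uniformly distributed on the unit $\dimension$-ball, so its density is
\begin{equation*}
    \namedPdfSymb{\xball}(\statex) = \frac{1}{\unitBall{\dimension}}\mathbf{1}_{\ballSet}(\statex),
\end{equation*}
where $\mathbf{1}_{\ballSet}$ is the indicator of $\ballSet$. Let the transformation be written as $\xellipse = \mathbf{T}\xball + \xcentre$ for some invertible $\mathbf{T}\in\real^{\dimension\times\dimension}$ (in the context of this paper, $\mathbf{T}=\ellipseRotation\mathbf{L}'$). Invertibility guarantees that the inverse map $\xball = \mathbf{T}^{-1}(\xellipse - \xcentre)$ is well defined and differentiable.

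Next I would apply the change-of-variables formula. Since the map is a diffeomorphism between $\real^\dimension$ and $\real^\dimension$ with Jacobian matrix $\mathbf{T}^{-1}$ everywhere, the density of $\xellipse$ is
\begin{equation*}
    \namedPdfSymb{\xellipse}(\statey) = \namedPdfSymb{\xball}\bigl(\mathbf{T}^{-1}(\statey - \xcentre)\bigr) \, \left|\det(\mathbf{T}^{-1})\right|.
\end{equation*}
The key observation is that $|\det(\mathbf{T}^{-1})| = 1/|\det(\mathbf{T})|$ is a constant independent of $\statey$. Substituting the uniform density of the ball gives
\begin{equation*}
    \namedPdfSymb{\xellipse}(\statey) = \frac{1}{\unitBall{\dimension}\,|\det(\mathbf{T})|} \, \mathbf{1}_{\ballSet}\bigl(\mathbf{T}^{-1}(\statey - \xcentre)\bigr),
\end{equation*}
so $\namedPdfSymb{\xellipse}$ is constant on the set $\setst{\statey}{\mathbf{T}^{-1}(\statey-\xcentre)\in\ballSet}$ and zero elsewhere. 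By definition, this image set is exactly the hyperellipsoid $\ellipseSet$ in \eqref{eqn:SDefn}. A density that is constant on a bounded set and zero outside it is, by definition, the uniform density on that set, which proves the lemma.

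There is no hard step: the argument reduces to the observation that linearity makes the Jacobian determinant constant and invertibility makes the change-of-variables formula applicable globally. The only subtlety worth verifying explicitly is that the normalization is consistent, i.e.\ that $\lebesgue{\ellipseSet} = |\det(\mathbf{T})|\,\unitBall{\dimension}$, which follows from the same Jacobian computation applied to Lebesgue measure and confirms that $\namedPdfSymb{\xellipse}$ integrates to one.
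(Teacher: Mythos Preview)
Your proposal is correct and follows essentially the same argument as the paper: both apply the change-of-variables formula for densities, observe that the Jacobian determinant of an invertible linear map is a nonzero constant, and conclude that the pushed-forward density is constant on the image set and hence uniform. The paper additionally specializes the computation to the case $\mathbf{T}=\ellipseRotation\mathbf{L}'$ to exhibit the familiar $1/\bigl(\unitBall{\dimension}\prod_j \radius_j\bigr)$ form, but the core logic is identical to yours.
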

\begin{proof}
    For brevity, \cite{sun_fusion02} only presents anecdotal proofs of Lemma~\ref{lem:uniform}.
    The full proof from the supplementary online material appears in Appendix~\ref{appx:uniform}.
\end{proof}

\begin{algorithm}[tp]%
    \algorithmStyle
    \caption{$\mathtt{Sample}\left(\xstart\in\stateSet,\; \xgoal\in\stateSet,\; \ccur\in\positiveReal \right)$}\label{algo:infset}
    \Repeat{$\xrand \in \freeSet\cap\phsSet$}
    {
        \If{$\lebesgue{\phsSet} < \lebesgue{\stateSet}$}
        {
            $\xrand \gets \mathtt{SamplePHS}\left(\xstart,\xgoal,\ccur\right)$\;
        }
        \Else
        {
            $\xrand \gets \mathtt{SampleProblem}\left(\stateSet\right)$\;
        }
    }
    \Return{$\xrand$}\;
\end{algorithm}%
\begin{algorithm}[tp]%
    \algorithmStyle
    \caption{$\mathtt{SamplePHS}\left(\xstart\in\stateSet,\; \xgoal\in\stateSet,\; \ccur\in\positiveReal \right)$}\label{algo:phs}
    $\cmin \gets \norm{\xgoal - \xstart}{2}$\; \label{algo:phs:startStatic}
    $\xcentre \gets \left(\xstart + \xgoal\right)/2$\;
    $\axis_1 \gets \left(\xgoal - \xstart\right)/\cmin$\;
    $\set{\mathbf{U},\,\mathbf{V}} \gets \mathtt{SVD}\left(\axis_1\mathbf{1}_1^T\right)$\;
    $\boldsymbol{\Lambda} \gets \diag\left( 1, \ldots, 1, \det\left(\mathbf{U}\right) \det\left(\mathbf{V}\right) \right)$\;
    $\ellipseRotation \gets \mathbf{U}\boldsymbol{\Lambda}\mathbf{V}^{T}$\; \label{algo:phs:endStatic}
    $\radius_{1} \gets \ccur/2$\;
    $\left\lbrace \radius_\counterj\right\rbrace_{\counterj = 2,\ldots,\dimension} \gets \left(\sqrt{\ccur^2 - \cmin^2}\right)/2$\;
    $\mathbf{L} \gets \diag\left(\radius_1, \radius_2, \ldots, \radius_\dimension\right)$\;
    $\xball \gets \mathtt{SampleUnitBall}\left(\dimension\right)$\;
    $\xrand \gets \ellipseRotation\mathbf{L}\xball + \xcentre$\;
    \Return{$\xrand$}\;
\end{algorithm}%
\subsubsection{Algorithm}\label{sec:l2:sample:algo}
The $L^2$ informed set is an arbitrary intersection of the prolate hyperspheroid and the problem domain.
It can be sampled efficiently by considering the relative measure of the two sets and sampling the smaller set until a sample belonging to both sets is found.
These procedures are presented algorithmically in \algoAnd{algo:infset}{algo:phs} and are publicly available in \ac{OMPL}.
Note that for most problems \algolines{algo:phs}{startStatic}{endStatic} are constant and only need to be calculated once.

The function $\mathtt{SVD}\left(\cdot\right)$ denotes the singular value decomposition of a matrix and $\mathtt{SampleUnitBall}\left(\dimension\right)$ returns uniformly distributed samples from the interior of an $\dimension$-dimensional unit ball.
The measure of the prolate hyperspheroid, $\lebesgue{\phsSet}$, is given by \eqref{eqn:phsMeasure} and \texttt{SampleProblem} returns samples uniformly distributed over the entire planning domain.
Implementations of \texttt{SVD} and \texttt{SampleUnitBall} can be found in common C++ libraries.

\subsubsection{Practical Performance}\label{sec:l2:sample:exp}
Direct informed sampling (\algo{algo:infset}) is compared to the \emph{best-case} performance of rectangular rejection sampling.
The average computational time required to find a sample in the $L^2$ informed set is calculated by generating $10^6$ samples at each dimension (Fig.~\ref{fig:sampleTheory}c).
The results show that while rejection sampling may outperform direct informed sampling in low state dimensions (e.g., $\real^2$: $7.3\times10^{-8}$ vs. $3.5\times10^{-7}$ seconds), it becomes orders of magnitude slower as state dimension increases (e.g., $\real^{16}$: $4.0\times10^{-2}$ vs. $7.2\times10^{-7}$ seconds).
These per-sample times are small but significant. %
Generating $10^5$ samples in $\real^{16}$ requires less than a second with direct informed sampling ($7.2\times10^{-2}$ seconds) but over an hour with rectangular rejection sampling ($3953$ seconds).

This experiment represents optimistic results for both constant (e.g., the problem domain) and adaptive (e.g., \cite{otte_tro13}) rectangular rejection sampling.
Constant sampling domains rarely provide tight bounds on the informed set and will generally have higher rejection rates than the experiment.
Adaptive sampling domains may tightly bound the informed set but must account for its alignment relative to the state space.
This requires either a larger rectangular sampling domain or a rotation between frames that increases the rejection rate or computational cost compared to the experiment, respectively.

\subsection{Extension to Multiple Goals}\label{sec:l2:multi}
Many planning problems seek the minimum-length path that connects a start to any state in a goal region, $\goalSet$.
In these situations the omniscient set is all states that could provide a better solution to \emph{any} goal.
The multigoal $L^2$ informed set is
\begin{align*}
    \fBelowSet \coloneqq &\left\lbrace\statex \in \freeSet\;\;\middle|\;\;\norm{\statex - \xstart}{2} + \norm{\xgoalj - \statex}{2} < \ccur
        \vphantom{\mbox{for any}\;\;\xgoalj\in\goalSet}\right.\nonumber\\
        &\qquad\qquad\qquad\qquad\qquad\quad\left.\vphantom{\statex \in \freeSet \norm{\statex - \xstart}{2} + \norm{\xgoalj - \statex}{2}  < \ccur}
    \mbox{for any}\;\;\xgoalj\in\goalSet\right\rbrace.
\end{align*}%

For a countable goal region, $\goalSet \coloneqq \set{\xgoalj}_{j=1}^\numGoals$, this set is the union of the individual informed sets of each goal,
\begin{equation*}
    \fBelowSet = \bigcup_{\counterj=1}^\numGoals\fBelowSetj,
\end{equation*}
where $\numGoals$ is the number of goals and
\begin{equation*}
    \fBelowSetj \coloneqq \setst{\statex \in \freeSet\hspace{-1pt}}{\hspace{-1pt}\norm{\statex - \xstart}{2} + \norm{\xgoalj - \statex}{2} < \ccur},
\end{equation*}
is the $L^2$ informed set of an individual $\pair{\xstart}{\xgoalj}$ pair.
If the individual informed sets do not intersect, then a uniform sample distribution can be generated by randomly selecting an individual subset, $\counterj$, in proportion to its relative measure,
\begin{equation*}
    \pdfSymb\left(1\leq \counterj\leq \numGoals\right)\coloneqq\frac{\lebesgue{\fBelowSetj}}{\sum_{\counterk=1}^{\numGoals}\lebesgue{\fBelowSetk}},
\end{equation*}
and then generating a uniformly distributed sample inside the selected subset, $\fBelowSetj$.

\begin{algorithm}[tp]%
    \algorithmStyle
    \caption{$\mathtt{Sample}\hspace{-1pt}\left(\xstart\in\stateSet,\; \goalSet\subset\stateSet,\; \ccur\in\positiveReal\right)$}\label{algo:multigoal}
    \Repeat{$\xrand\in\freeSet \cap \newAlgoLine{\left(\bigcup_{\counterj=1}^{\numGoals}\phsSetj\right)}$\\
            \skipln$\qquad\qquad\qquad\qquad\qquad$\newAlgoLine{{\rm\bf and} $\mathtt{KeepSample}\left(\xrand, \xstart, \goalSet, \ccur\right)$}}
    {
        \If{$\newAlgoLine{\frac{1}{\numGoals}\sum_{\counterj=1}^{\numGoals}\lebesgue{\phsSetj}} < \lebesgue{\stateSet}$}
        {
            \newAlgoLine{$\xgoalj \gets \mathtt{RandomGoal}\left(\xstart, \goalSet, \ccur\right)$\;}
            $\xrand \gets \mathtt{SamplePHS}\left(\xstart,\newAlgoLine{\xgoalj},\ccur\right)$\;
        }
        \Else
        {
            $\xrand \gets \mathtt{SampleProblem}\left(\stateSet\right)$\;
        }
    }
    \Return{$\xrand$}\;
\end{algorithm}%
\begin{algorithm}[tp]%
    \algorithmStyle
    \caption{$\mathtt{RandomGoal}\left(\xstart\in\stateSet,\; \goalSet\subset\stateSet,\; \ccur\in\positiveReal\right)$}\label{algo:randomGoal}
    $a \gets 0$\;
    \ForAll{$\xgoalk \in \goalSet$}
    {
        $a \gets a + \lebesgue{\phsSetk}$\;
    }
    $p \gets \uniformSymb\left[0,1\right]$\;
    $j \gets 0$\;
    \Repeat{$p \leq 0$}
    {
        $j \gets j + 1$\;
        $p \gets p - \lebesgue{\phsSetj}/a$\;
    }
    \Return{$\xgoalj$}\;
\end{algorithm}%

If individual sets do intersect, then this approach will oversample states that belong to multiple sets (Fig.~\ref{fig:multigoal}a).
In these situations, uniform sample density can be maintained by probabilistically rejecting samples in proportion to their membership in individual sets.
This creates a uniform sample distribution for multigoal $L^2$ informed sets defined by arbitrarily overlapping individual informed sets (Fig.~\ref{fig:multigoal}b).

\subsubsection{Algorithm}\label{sec:l2:mult:algo}
The algorithm is described in \algos{algo:multigoal}{algo:randomKeep} as modifications to the sampling technique for a single-goal $L^2$ informed set, with changes highlighted in red (cf. \algo{algo:infset}).
The measure of individual informed sets, $\lebesgue{\phsSetj}$, is calculated from \eqref{eqn:phsMeasure} using the appropriate goal, $\xgoalj$.
This same technique can also be applied to problems with a countable start region.
\begin{algorithm}[tp]%
    \algorithmStyle
    \caption[]{%
                $\mathtt{KeepSample}\left(\xrand\in\stateSet,\; \xstart\in\stateSet,\right.$\captionbreak
                $\hphantom{\mathbf{Alg. \ref{algo:randomKeep}:} \mathtt{KeepSample}(\xrand\in\stateSet,\quad}
                \left.\goalSet\subset\stateSet,\; \ccur\in\positiveReal\right)$
              }\label{algo:randomKeep}
    $a \gets 0$\;
    \ForAll{$\xgoalk \in \goalSet$}
    {
        \If{$\norm{\xrand - \xstart}{2} + \norm{\xgoalk - \xrand}{2} < \ccur$}
        {
            $a \gets a + 1$\;
        }
    }
    $p \gets \uniformSymb\left[0,1\right]$\;
    \Return{$p \leq 1/a$}\;
\end{algorithm}%
\begin{figure}[tp]
    \centering
    \includegraphics[page=1,width=\columnwidth]{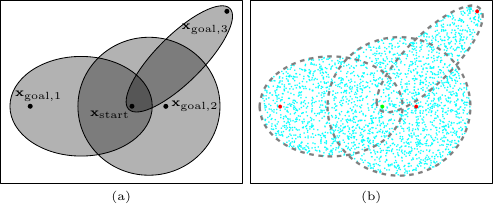}
    \caption{%
            An illustration of the multigoal $L^2$ informed set for a problem seeking to minimize path length from a start at $\left[0, 0\right]^T$, to any of three goals at $\left[ -0.75,\, 0 \right]^T$, $\left[ 0.25,\, 0 \right]^T$, and $\left[ 0.7,\, 0.7 \right]^T$, and a current solution cost of $\ccur = 1.05$.
            Each ellipse illustrates the $L^2$ informed set for a start-goal pair.
            Combining the uniform distributions of these individuals (light grey) would result in a \emph{nonuniform} distribution (dark grey), (a).
            By probabilistically rejecting samples in proportion to their individual membership, \algo{algo:multigoal} uniformly samples complex sets of arbitrary intersections, as illustrated with $2500$ random samples, (b).
            \squeezeWidowedWords
    }
    \label{fig:multigoal}
\end{figure}%
\section{Informed \acs{RRTstar}}\label{sec:inf}
Informed \ac{RRTstar} is an extension of \ac{RRTstar} that demonstrates how informed sets can be used to improve anytime almost-surely asymptotically optimal planning.
It performs the same as \ac{RRTstar} until a solution is found after which the search is focused to the informed set through direct informed sampling and admissible graph pruning (Fig.~\ref{fig:example}).
This increases the likelihood of sampling states that can improve the solution and increases the convergence rate towards the optimum regardless of the relative size of the informed set (e.g., near-minimum solutions or large problem domains) or the state dimension.

Informed \acs{RRTstar} uses direct informed sampling (\algo{algo:multigoal}), admissible graph pruning (Section~\ref{sec:inf:prune}), and an updated calculation of the rewiring neighbourhood (Section~\ref{sec:inf:rewire}) to focus the search.
The complete algorithm is presented in \algoAnd{algo:inf_rrtstar}{algo:prune} as modifications to \ac{RRTstar}, with changes highlighted in red.
It can also be integrated into other sampling-based planners, such as \ac{RRT}\textsuperscript{X} \cite{otte_ijrr16} and \ac{BITstar} \cite{gammell_icra15,gammell_phd17,gammell_ijrr18}.

At each iteration, Informed \ac{RRTstar} calculates the current best solution (\algoline{algo:inf_rrtstar}{min}) from the vertices in the goal region (\algo{algo:inf_rrtstar},~Lines~\ref{algo:inf_rrtstar:init}, \ref{algo:inf_rrtstar:goalStart}--\ref{algo:inf_rrtstar:goalEnd}).
This defines a shrinking $L^2$ informed set that is used to both focus sampling (\algoline{algo:inf_rrtstar}{sample}; \algo{algo:multigoal}) and prune the graph (\algoline{algo:inf_rrtstar}{prune}; \algo{algo:prune}).
This process continues for as long as time allows or until a suitable solution is found.

Informed \acs{RRTstar} retains the probabilistic completeness and almost-sure asymptotically optimality of \ac{RRTstar}.
It is probabilistically complete since it does not modify the search for an initial solution.
It is almost-surely asymptotically optimal as it maintains a uniform sample distribution over a subset of the planning problem in which it uses a local rewiring neighbourhood that satisfies the bounds presented in \cite{karaman_ijrr11}.

\begin{figure*}[tbp]
    \centering
    \includegraphics[page=1,width=\textwidth]{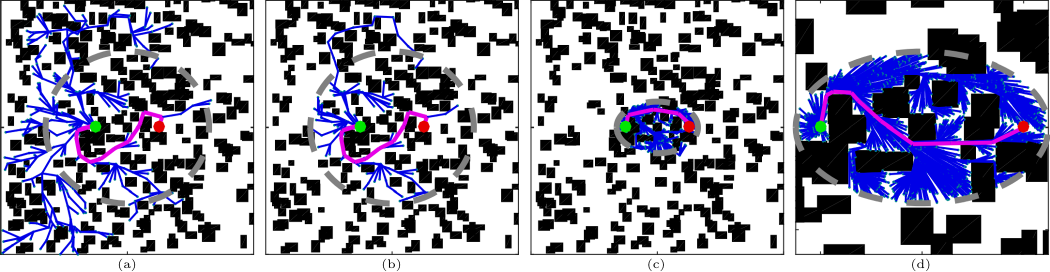}
    \caption{%
            An example of how Informed \acs{RRTstar} uses the current solution to focus search to the $L^2$ informed set.
            After an unfocused search for an initial solution, (a), Informed \acs{RRTstar} prunes the graph of unnecessary states and redefines the search domain to the current $L^2$ informed set, (b).
            Samples are then generated directly from this informed set, avoiding those that are known to be unable to improve the current solution.
            This reduced search space increases the likelihood of finding an improved solution, which in turn further reduces the search space, (c).
            This results in an algorithm that focuses the search to the subproblem given by the current solution, shown enlarged in (d), even as the subproblem decreases with further improvement.
            }
    \label{fig:example}
\end{figure*}
\begin{algorithm}[tb]%
    \algorithmStyle
    \caption{Informed \acs{RRTstar}$\left(\xstart\in\freeSet,\; \goalSet\subset\stateSet \right)$}\label{algo:inf_rrtstar}
    $\vertexSet \gets \left\lbrace \xstart \right\rbrace;\;$
    $\edgeSet \gets \emptyset;\;$
    $\treeGraph = \left( \vertexSet, \edgeSet \right)$\;
    \newAlgoLine{$\solnVertices \gets \emptyset$\;} \label{algo:inf_rrtstar:init}
    \For{$\iter = 1 \ldots \totalSamples$}
    {
        \newAlgoLine{$\ccur \gets \min_{\vgoal\in\solnVertices} \set {\gAbove{\vgoal}}$\;} \label{algo:inf_rrtstar:min}
        \newAlgoLine{$\xrand \gets \mathtt{Sample}\left(\xstart, \goalSet, \ccur \right)$\;} \label{algo:inf_rrtstar:sample}
        $\vnearest \gets \mathtt{Nearest}\left(\vertexSet, \xrand \right)$\;
        $\xnew \gets \mathtt{Steer}\left(\vnearest, \xrand\right)$\;
        \If{$\mathtt{IsFree}\left(\vnearest, \xnew\right)$}
        {
            \newAlgoLine
            {
                \If{ $\xnew \in \goalSet$ \label{algo:inf_rrtstar:goalStart}}
                {
                    $\solnVertices \setInsert \left\lbrace \xnew \right\rbrace$\;\label{algo:inf_rrtstar:goalEnd}
                }
            }
            $\vertexSet \setInsert \left\lbrace \xnew \right\rbrace$\;
            $\nearVertices \gets \mathtt{Near}\left(\vertexSet, \xnew, \rrewire \right)$\;
            $\vmin \gets \vnearest$\;
            \ForAll{$\vnear \in \nearVertices$}
            {
                $\cnew \gets \gAbove{\vnear} + \cTrue{\vnear}{\xnew}$\;
                \If{$\cnew < \gAbove{\vmin} + \cTrue{\vmin}{\xnew}$}
                {
                    \If{$\mathtt{IsFree}\left( \vnear, \xnew \right)$}
                    {
                        $\vmin \gets \vnear$\;
                    }
                }
            }
            $\edgeSet \setInsert \left\lbrace\left(\vmin, \xnew \right)\right\rbrace$\;
            
            \ForAll{$\vnear \in \nearVertices$}
            {
                $\cnear \gets \gAbove{\xnew} + \cTrue{\xnew}{\vnear}$\;
                \If{$\cnear < \gAbove{\vnear}$}
                {
                    \If{$\mathtt{IsFree}\left( \xnew, \vnear \right)$}
                    {
                        $\vparent \gets \mathtt{Parent}\left(\vnear \right)$\;
                        $\edgeSet \setRemove \left\lbrace \left(\vparent, \vnear \right) \right\rbrace$\;
                        $\edgeSet \setInsert \left\lbrace \left( \xnew, \vnear\right)\right\rbrace$\;
                    }
                }
            }
            \newAlgoLine
            {
                $\mathtt{Prune}\left(\vertexSet, \edgeSet, \ccur\right)$\label{algo:inf_rrtstar:prune}\;
            }
        }
    }
    \Return{$\treeGraph$}\;
\end{algorithm}%
\subsection{Notation}\label{sec:inf:note}
The tree, $\treeGraph \coloneqq \pair{\vertexSet}{\edgeSet}$, is defined by a set of vertices, $\vertexSet\subset\freeSet$, and edges, $\edgeSet = \set{\pair{\statev}{\statew}}$, for some $\statev,\, \statew \in \vertexSet$.
The function $\gAbove{\statev}$ represents the cost to reach a vertex, $\statev \in \vertexSet$, from the start given the current tree (the cost-to-come).
The function $\cTrue{\statev}{\statew}$ represents the cost of a path connecting the states $\statev, \statew \in \freeSet$, and corresponds to the edge cost between those two states if they are connected as vertices in the tree.
The notation $\stateSet \setInsert \set{\statex}$ and $\stateSet \setRemove \set{\statex}$ is used to compactly represent the compounding set operations $\stateSet \gets \stateSet \cup \set{\statex}$ and $\stateSet \gets \stateSet \setminus \set{\statex}$, respectively.
As is customary, the minimum of an empty set is taken to be infinity and a prolate hyperspheroid defined by an infinite transverse diameter is taken to have infinite measure.

\subsection{Graph Pruning (\algo{algo:prune})}\label{sec:inf:prune}
Graph pruning simplifies a tree by removing unnecessary vertices.
Vertices are often removed if their heuristic values are larger than the current solution (i.e., they do not belong to informed set).
While this identifies vertices that cannot provide a better solution, it is not a \emph{sufficient} condition to remove them without negatively affecting the search.
Their descendants may still be capable of providing better solutions (i.e., they may belong to the informed set; Fig.~\ref{fig:prune:defn}) in which case their removal would negatively affect performance by decreasing vertex density in the search domain (i.e., the informed set; Fig.~\ref{fig:prune:exp}b).

An \textit{admissible} pruning method that does not remove vertices from the informed set is presented in \algo{algo:prune}.
It iteratively removes leaves of the tree that cannot provide a better solution until no such leaves exist.
This only removes vertices if they \emph{and all their descendants} cannot belong to a better solution (i.e., it only removes vertices from outside the informed set; Fig.~\ref{fig:prune:defn}).
This retains all possibly beneficial vertices regardless of their current connections and does not alter the vertex distribution in areas being searched (Fig.~\ref{fig:prune:exp}c).

\begin{algorithm}[tbp]
    \algorithmStyle
    \caption{$\mathtt{Prune}\left(\vertexSet \subseteq \stateSet,\; \edgeSet \subseteq \vertexSet\times\vertexSet,\; \ccur\in\positiveReal\right)$}\label{algo:prune}
    \Repeat{$\pruneVertices = \emptyset$}
    {
        $\pruneVertices \gets \setst{\statev\in\vertexSet}{\fBelow{\statev} > \ccur,\;\; \mathtt{and}\;\; \forall \statew\in\vertexSet,\; \pair{\statev}{\statew} \not\in\edgeSet}$\;
        $\edgeSet \setRemove \setst{\pair{\stateu}{\statev}\in\edgeSet}{\statev\in\pruneVertices}$\;
        $\vertexSet \setRemove \pruneVertices$\;
    }
\end{algorithm}%
\begin{figure}[tbp]
    \centering
    \includegraphics[scale=1]{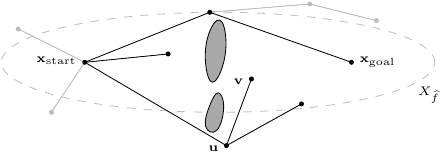} %
    \caption{%
    An illustration of \algo{algo:prune} that shows the retained (black) and pruned (grey) vertices given the $L^2$ informed set (dashed grey line) defined by the current solution.
    Vertices are pruned if and only if they cannot improve the current solution (i.e., they are not members of the $L^2$ informed set) \emph{and} neither can their descendants.
    This pruning condition avoids removing promising vertices (e.g., $\statev$) simply because they are currently descendants of vertices outside the subset (e.g., $\stateu$) and maintains the vertex distribution of the $L^2$ informed set (Fig.~\ref{fig:prune:exp}).
    }
    \label{fig:prune:defn}
\end{figure}%
\subsection{The Rewiring Neighbourhood}\label{sec:inf:rewire}
\ac{RRTstar} almost-surely converges asymptotically to the optimum by incrementally rewiring the tree around new states.
In the $\radius$-disc variant this is the set of states within a radius, $\rrewire$, of the new state,
\begin{equation}\label{eqn:back:rewire}
    \rrewire \coloneqq \min\set{\maxEdge, \rrrtstar},
\end{equation}
where $\maxEdge$ is the maximum allowable edge length of the tree and $\rrrtstar$ is a function of the problem measure and the number of vertices in the tree \cite{karaman_ijrr11}.
Specifically,
\begin{align}\label{eqn:back:radius}
    \rrrtstar &>  \rrrtstarmin,\nonumber\\
    \rrrtstarmin&\coloneqq \left(2 \left(1 + \frac{1}{\dimension}\right)
    \left(\frac{\lebesgue{\stateSet}}{\unitBall{\dimension}}\right)
    \left(\frac{\log\left(\card{\vertexSet}\right)}{\card{\vertexSet}}\right)\right)^{\frac{1}{\dimension}},
\end{align}
where $\lebesgue{\cdot}$ is the Lebesgue measure of a set (e.g., the \textit{volume}), $\unitBall{\dimension}$ is the Lebesgue measure of an $\dimension$-dimensional unit ball, i.e., \eqref{eqn:ballMeasure}, and $\card{\cdot}$ is the cardinality of a set.

The rewiring neighbourhood in the $k$-nearest variant is the $\krrtstar$-closest states to the new state, where
\begin{align}\label{eqn:back:knearest}
    \krrtstar &>  \krrtstarmin,\nonumber\\
    \krrtstarmin &\coloneqq e\left(1+\frac{1}{\dimension}\right)\log\left(\card{\vertexSet}\right).
\end{align}

Informed \ac{RRTstar} searches a subset of the original planning problem.
The rewiring requirements to maintain almost-sure asymptotic optimality in this shrinking domain will be a function of the number of vertices in the informed set, $\card{\vertexSet\cap\fBelowSet}$, and its measure, $\lebesgue{\fBelowSet}$.
The $L^2$ informed set is not known in closed form (it is an intersection of a prolate hyperspheroid and free space) but its measure can be bounded from above by the minimum measure of the prolate hyperspheroid and the problem domain,
\begin{equation*}
    \lebesgue{\fBelowSet}\leq\min\left\lbrace\lebesgue{\stateSet},\lebesgue{\phsSet}\right\rbrace.
\end{equation*}

This updates \eqref{eqn:back:radius} and \eqref{eqn:back:knearest} to
\begin{align}\label{eqn:radius}
    \rrrtstarmin&\leq \left(2 \left(1 + \frac{1}{\dimension}\right)
    \left(\frac{\min\left\lbrace\lebesgue{\stateSet},\lebesgue{\phsSet}\right\rbrace}{\unitBall{\dimension}}\right)
        \vphantom{\left(\frac{\log\left(\card{\vertexSet\cap\fBelowSet}\right)}{\card{\vertexSet\cap\fBelowSet}}\right)}
        \right.\nonumber\\&\qquad\qquad\qquad\qquad\;\;\left.
        \vphantom{2 \left(1 + \frac{1}{\dimension}\right)
        \left(\frac{\min\left\lbrace\lebesgue{\stateSet},\lebesgue{\phsSet}\right\rbrace}{\unitBall{\dimension}}\right)
        \left(\frac{\log\left(\card{\vertexSet\cap\fBelowSet}\right)}{\card{\vertexSet\cap\fBelowSet}}\right)}
    \left(\frac{\log\left(\card{\vertexSet\cap\fBelowSet}\right)}{\card{\vertexSet\cap\fBelowSet}}\right)\right)^{\frac{1}{\dimension}}
\end{align}
and
\begin{equation}\label{eqn:knearest}
    \krrtstarmin = e\left(1+\frac{1}{\dimension}\right)\log\left(\card{\vertexSet\cap\fBelowSet}\right),
\end{equation}
where $\lebesgue{\phsSet}$ is a function of the current solution, i.e., \eqref{eqn:phsMeasure}.
\begin{figure}[tbp]
    \centering
    \includegraphics[page=1,width=\columnwidth]{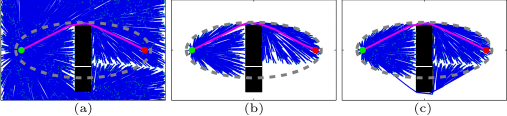}
    \caption{%
        An illustration of pruning a graph found by \ac{RRTstar}, (a), with both inadmissible, (b), and admissible, (c), methods.
        By removing all vertices that cannot belong to a better solution, the inadmissible method may greedily remove descendent vertices that will later provide a better solution once the graph is improved.
        By only removing vertices that cannot improve a solution if neither can their descendants, the admissible method (\algo{algo:prune}) maintains a uniform sample density in the entire informed set.
    }
    \label{fig:prune:exp}
\end{figure}%

These rewiring neighbourhoods will be smaller than \eqref{eqn:back:radius} and \eqref{eqn:back:knearest} when they can contain fewer vertices (i.e., only those in the informed set) and/or a smaller problem measure (i.e., the measure of the informed set).
Smaller rewiring neighbourhoods reduce the computational cost of rewiring at each iteration and improves the real-time performance of Informed \acs{RRTstar} while maintaining almost-sure asymptotic optimality.

\section{Rates of Convergence}\label{sec:rate}
Almost-sure asymptotic optimality provides no insight into the rate at which solutions are improved.
Previous work has found probabilistic rates for \ac{PRMstar} \cite{dobson_icra15} and \ac{FMTstar} \cite{janson_ijrr15} and estimated the expected length of \ac{RRTstar} solutions as a function of computational time \cite{dobson_icra15}.

Performance can be quantified analytically by evaluating the rate at which the sequence of solution costs converges to the optimum.
This rate can be classified as sublinear, linear, or superlinear (Definition~\ref{defn:conv}).
\begin{defn}[Rate of convergence]\label{defn:conv}
    A sequence of numbers, $\seqidx{a}{\iter}{1}{\infty}$, that monotonically and asymptotically approaches a limit, $a_\infty$, has a rate of convergence given by
    \begin{equation*}
        \rate \coloneqq \limitoinf\frac{\left|a_{\iter+1}-a_{\infty}\right|}{\left|a_{\iter}-a_{\infty}\right|}.
    \end{equation*}
    The sequence is said to converge \emph{linearly} if the rate is in the range $0 < \rate < 1$, \emph{superlinearly} (i.e., faster than linear) when $\rate=0$, and \emph{sublinearly} (i.e., slower than linear) when $\rate=1$.
\end{defn}

The expected convergence rate of an algorithm depends on its tuning and the planning problem.
General rates can be calculated for holonomic minimum-path-length problems for \ac{RRTstar} with and without sample rejection and Informed \acs{RRTstar} (Theorems~\ref{thm:conv:rrtstar}--\ref{thm:conv:informed}) by first calculating sharp bounds on the expected next-iteration cost (Lemma~\ref{lem:conv:expect}).

\begin{lem}[Expected next-iteration cost of minimum-path-length planning]\label{lem:conv:expect}
    The expected value of the next solution to a minimum-path-length problem, $\expect{\cnext}$, is bounded by
    \begin{equation}\label{eqn:lem:conv:expect}
        \probBetter\frac{\dimension\ccur^2 + \cmin^2}{\left(\dimension + 1\right)\ccur} + \left(1-\probBetter\right)\ccur \leq \expect{\cnext} \leq \ccur,
    \end{equation}
    where $\ccur$ is the current solution cost, $\cmin$ is the theoretical minimum solution cost, $\dimension$ is the state dimension of the planning problem, and $\probBetter = \prob{\xnew \in \fTrueSet}$ is the probability of adding a state that is a member of the omniscient set (i.e., that can belong to a better solution).
    While not explicitly shown, the subset, $\fTrueSet$, and the probability of improving the solution, $\probBetter$, are generally functions of the current solution cost.
    
    This lower bound is sharp over the set of all possible minimum-path-length planning problems and algorithm configurations and is exact for versions of \ac{RRTstar} with an infinite rewiring radius (i.e., $\maxEdge=\infty$, and $\rrrtstar=\infty$) searching an obstacle-free environment without constraints.
\end{lem}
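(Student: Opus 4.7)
The plan is to decompose $\expect{\cnext}$ via the law of total expectation, conditioning on whether the newly added state belongs to the omniscient set. The upper bound $\expect{\cnext}\leq\ccur$ is immediate: an anytime planner never discards the incumbent solution, so $\cnext\leq\ccur$ holds with probability one.

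For the lower bound I would write
\begin{equation*}
    \expect{\cnext} = \probBetter\,\expectst{\cnext}{\xnew\in\fTrueSet} + (1-\probBetter)\,\expectst{\cnext}{\xnew\notin\fTrueSet}.
\end{equation*}
Lemma~\ref{lem:necessary:exact} forces the second conditional expectation to equal $\ccur$ exactly, since $\xnew\notin\fTrueSet$ precludes any improvement. For the first, any path through $\xnew$ has cost at least $\fTrue{\xnew}$, and the universally admissible $L^2$ heuristic gives $\fTrue{\xnew}\geq\fBelow{\xnew}=\norm{\xnew-\xstart}{2}+\norm{\xgoal-\xnew}{2}$. In the obstacle-free, unconstrained setting with infinite rewiring radius both inequalities collapse: $\fTrueSet=\phsSet$, the new state is uniformly distributed on $\phsSet$, and the rewiring step immediately installs the two-segment path that realises $\fBelow{\xnew}$, so that $\cnext=\fBelow{\xnew}$ almost surely.

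The core task is then evaluating $\expectst{\fBelow{\statex}}{\statex\sim\uniform{\phsSet}}$. I would exploit that the sub-level sets $\setst{\statex\in\real^\dimension}{\fBelow{\statex}\leq c}$ are themselves prolate hyperspheroids confocal with $\phsSet$ but of transverse diameter $c$ for $c\in[\cmin,\ccur]$. Equation~\eqref{eqn:phsMeasure} then supplies the CDF of $\fBelow{\statex}$ in closed form as a ratio of hyperspheroid measures, and differentiating it yields the associated density. The substitution $u=c^2-\cmin^2$ reduces the expected-value integral to sums of elementary powers of $u$, and collecting terms yields $(\dimension\ccur^2+\cmin^2)/((\dimension+1)\ccur)$. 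Feeding this back into the total-expectation decomposition produces the stated inequality.

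The main obstacle is that integration step: correctly identifying the confocal level-set structure, differentiating the hyperspheroid volume into a cleanly factorable form involving $(c^2-\cmin^2)^{(\dimension-3)/2}(\dimension c^2-\cmin^2)$, and carrying out the change of variables without algebraic slips. Sharpness in the obstacle-free infinite-rewiring configuration follows automatically because every intermediate inequality above becomes an equality in that setting; in any other configuration, $\cnext$ can only exceed $\fBelow{\xnew}$ along whichever path is actually installed by the algorithm, which keeps the quantity a valid (no longer tight) lower estimate on $\expect{\cnext}$.
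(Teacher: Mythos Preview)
Your decomposition via the law of total expectation, the handling of the $\xnew\notin\fTrueSet$ branch through Lemma~\ref{lem:necessary:exact}, the bound $\cnext\geq\fTrue{\xnew}\geq\fBelow{\xnew}$, and the sharpness argument all match the paper exactly. The genuine difference is in how the core integral $\expectst{\fBelow{\statex}}{\statex\sim\uniform{\phsSet}}$ is evaluated. The paper passes to prolate hyperspheroidal coordinates $(\mu,\nu,\psi_1,\dots,\psi_{\dimension-2})$, integrates out the angular variables via beta-function identities and the recursion $\unitBall{\dimension}=\betaFunc{\frac{\dimension+1}{2}}{\frac{1}{2}}\unitBall{\dimension-1}$, and then handles the $\mu$-integral with $\int\sinh^m\theta\cosh\theta\,d\theta$. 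Your route is more elementary: you read the CDF of $\fBelow{\statex}$ directly off the volume formula \eqref{eqn:phsMeasure} by recognising the confocal sub-level structure, differentiate to a density proportional to $(c^2-\cmin^2)^{(\dimension-3)/2}(\dimension c^2-\cmin^2)$, and the substitution $u=c^2-\cmin^2$ reduces everything to two power integrals. Both arrive at $(\dimension\ccur^2+\cmin^2)/((\dimension+1)\ccur)$; yours avoids special-function machinery at the cost of requiring the reader to accept that the level sets are indeed confocal hyperspheroids with the stated volume, while the paper's coordinate approach is more self-contained but heavier.
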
%
\begin{proof}
    The proof of Lemma~\ref{lem:conv:expect} from the supplementary online material appears in Appendix~\ref{appx:infinite:expect}.
\end{proof}

This result allows sharp bounds on the convergence rates of \ac{RRTstar} (with and without rejection sampling) and Informed \acs{RRTstar} to be calculated for any configuration or holonomic minimum-path-length planning problem.
These bounds will be exact in problems without obstacles and constraints and with an infinite rewiring neighbourhood (i.e., $\maxEdge=\infty$, and $\rrrtstar=\infty$) and show that \ac{RRTstar} \emph{always} has sublinear convergence to the optimum (Theorem~\ref{thm:conv:rrtstar}).

\begin{thm}[Sublinear convergence of \acs{RRTstar} in holonomic minimum-path-length planning]\label{thm:conv:rrtstar}
    \ac{RRTstar} converges \emph{sublinearly} towards the optimum of holonomic minimum-path-length planning problems,
    \begin{equation}\label{eqn:thm:conv:rrtstar}
        \expect{\rrtstarRate} = 1.
    \end{equation}
    
    For simplicity, this statement is limited to holonomic planning but it can be extended to specific constraints by expanding Lemma~\ref{lem:necessary:exact:sample}.
\end{thm}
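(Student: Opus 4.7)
The plan is to identify the convergent sequence and its limit, then sandwich the expected rate between two expressions that both tend to one. By almost-sure asymptotic optimality (Definition~\ref{defn:asao}), the monotone sequence of \ac{RRTstar} solution costs $\seqidx{c}{\iter}{1}{\infty}$ converges to the problem optimum $\copt$, so Definition~\ref{defn:conv} identifies the expected rate as $\lim_{\iter\to\infty}\expect{(\cnext-\copt)/(\ccur-\copt)}$, where limit and expectation can be interchanged because the ratio is bounded in $[0,1]$ by the monotonicity of $\ccur$.

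The upper bound $\expectst{\cnext}{\ccur}\leq\ccur$ from Lemma~\ref{lem:conv:expect} immediately yields ratio $\leq 1$. For the matching lower bound I would invoke the event decomposition underlying Lemma~\ref{lem:conv:expect}: with probability $\probBetter$ an improving state is added and the resulting cost is at least $\copt$, and with probability $1-\probBetter$ the cost is unchanged at $\ccur$. This gives $\expectst{\cnext}{\ccur}\geq\probBetter\copt+(1-\probBetter)\ccur$ and therefore
\[
\frac{\expectst{\cnext}{\ccur}-\copt}{\ccur-\copt}\geq 1-\probBetter.
\]
In the obstacle-free regime where $\copt=\cmin$ and Lemma~\ref{lem:conv:expect} is tight, one may instead substitute the sharp lower bound and simplify via the identity $\dimension\ccur^2+\cmin^2-(\dimension+1)\ccur^2=-(\ccur^2-\cmin^2)$ to obtain the quantitative refinement $1-\probBetter(\ccur+\cmin)/((\dimension+1)\ccur)$, which likewise tends to one whenever $\probBetter\to 0$.

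It therefore remains to establish $\probBetter\to 0$ as $\iter\to\infty$. Theorem~\ref{thm:necessary:exact:prob} gives $\probBetter\leq\prob{\xrand\in\fTrueSet}=\lebesgue{\fTrueSet}/\lebesgue{\stateSet}$ under \ac{RRTstar}'s uniform sampler over $\stateSet$, and $\fTrueSet=\setst{\statex\in\freeSet}{\fTrue{\statex}<\ccur}$ shrinks to a set of zero Lebesgue measure as $\ccur\to\copt$ (the states achieving the optimum form at most a lower-dimensional manifold). The sandwich $1-\probBetter\leq(\expectst{\cnext}{\ccur}-\copt)/(\ccur-\copt)\leq 1$ therefore collapses to $\expect{\rrtstarRate}=1$, proving sublinear convergence by Definition~\ref{defn:conv}. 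The main obstacle I anticipate is justifying $\lebesgue{\fTrueSet}\to 0$, which requires a mild regularity hypothesis on the cost landscape so that the optimal path is generically isolated in cost, together with the exchange of limit and expectation — both of which are routine given the monotone, bounded nature of the \ac{RRTstar} cost sequence.
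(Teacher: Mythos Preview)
Your proposal is correct and reaches the same conclusion, but the route differs from the paper's in an instructive way. The paper substitutes the full sharp lower bound of Lemma~\ref{lem:conv:expect} (the obstacle-free, infinite-rewiring expression) into the ratio, obtains a $0/0$ form as $\cprev\to\cmin$, and resolves it with l'H\^{o}pital's rule together with the explicit derivative of $\probBetter$ computed from \eqref{eqn:sampleProb}; having shown the \emph{best-case} rate $\expect{\rrtstarRate^*}=1$, it then appeals to $\expect{\rrtstarRate^*}\leq\expect{\rrtstarRate}\leq 1$. Your argument instead uses the cruder but sufficient bound $\expectst{\cnext}{\ccur}\geq\probBetter\copt+(1-\probBetter)\ccur$, which collapses the ratio to $\geq 1-\probBetter$ without any limiting calculus. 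This is more elementary and works directly in the general (obstacle-present) problem rather than via a best-case comparison. The trade-off is where the work lands: the paper never needs to argue that $\lebesgue{\fTrueSet}\to 0$ for a general problem because it computes everything in the obstacle-free regime where $\fTrueSet=\phsSet$ and \eqref{eqn:phsMeasure} gives the measure explicitly; you must instead invoke the regularity hypothesis you flag (continuity of $\fTrue{\cdot}$ and boundedness of $\stateSet$ so that $\lebesgue{\setst{\statex}{\fTrue{\statex}<\ccur}}\downarrow 0$ by continuity of measure). One small correction: the bound $\probBetter=\prob{\xnew\in\fTrueSet}\leq\prob{\xrand\in\fTrueSet}$ is really the content of Lemma~\ref{lem:necessary:exact:sample} (eventually $\xnew=\xrand$ whenever $\xnew\in\fTrueSet$), not Theorem~\ref{thm:necessary:exact:prob}, which bounds $\prob{\cnext<\ccur}$ rather than $\prob{\xnew\in\fTrueSet}$.
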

\begin{proof}
    The proof of Theorem~\ref{thm:conv:rrtstar} follows directly from Lemma~\ref{lem:conv:expect} when $\probBetter$ is given by \eqref{eqn:sampleProb} and appears from the supplementary online material in Appendix~\ref{appx:infinite:rate:rrtstar}.
    \squeezeWidowedWords
\end{proof}

Rectangular rejection sampling improves the convergence rate of \ac{RRTstar}.
This improvement is maximized by sampling a rectangle that tightly bounds the informed set (Fig.~\ref{fig:sampleTheory}a).
The resulting adaptive rectangular rejection sampling (e.g., \cite{otte_tro13}) allows \ac{RRTstar} to converge linearly in the absence of obstacles and constraints and with an infinite rewiring neighbourhood (Theorem~\ref{thm:conv:reject}).

\begin{thm}[Linear convergence of \acs{RRTstar} with adaptive rectangular rejection sampling in holonomic minimum-path-length planning]\label{thm:conv:reject}
    \ac{RRTstar} with adaptive rectangular rejection sampling converges at best \emph{linearly} towards the optimum of holonomic minimum-path-length planning problems but factorially approaches sublinear convergence with increasing state dimension,
    \begin{equation}\label{eqn:thm:conv:reject}
        1 - \frac{\pi^{\frac{\dimension}{2}}}{\left(\dimension+1\right)2^{\dimension-1}\gammaFunc{\frac{\dimension}{2}+1}} \leq \expect{\rejectRate} \leq 1.
    \end{equation}
    
    For simplicity, this statement is limited to holonomic planning but it can be extended to specific constraints by expanding Lemma~\ref{lem:necessary:exact:sample}.
\end{thm}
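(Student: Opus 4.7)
The plan is to combine Lemma~\ref{lem:conv:expect} with Theorem~\ref{thm:curse} in the limit $\iter \to \infty$, where the current cost $\ccur$ approaches the optimum. Under the assumed setting (obstacle- and constraint-free, infinite rewiring neighbourhood), the optimum is the theoretical minimum, so $\ccur \to \cmin$, and the sharp version of Lemma~\ref{lem:conv:expect} gives
\begin{equation*}
    \expect{\cnext} = \probBetter \frac{\dimension\ccur^2 + \cmin^2}{(\dimension+1)\ccur} + \left(1-\probBetter\right)\ccur,
\end{equation*}
which I would rearrange as
\begin{equation*}
    \expect{\cnext} - \cmin = \left(\ccur-\cmin\right)\left(1 - \frac{\probBetter\left(\ccur+\cmin\right)}{\left(\dimension+1\right)\ccur}\right).
\end{equation*}
Dividing by $\ccur-\cmin$ and taking $\ccur\to\cmin$ would yield the limiting ratio $1 - 2\probBetter/(\dimension+1)$, which is the expected convergence rate.

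Next, I would substitute the best-case probability $\probBetter$ for adaptive rectangular rejection sampling. Theorem~\ref{thm:curse} established that rectangular rejection sampling with a tight bounding (hyper)rectangle, $\rectSet\supset\phsSet$, achieves at most
\begin{equation*}
    \probBetter \leq \frac{\pi^{\dimension/2}}{2^\dimension\gammaFunc{\dimension/2+1}},
\end{equation*}
with equality in the limit $\ccur\to\cmin$, since \eqref{eqn:thm:curse:tightMeasure} is attained by the axis-aligned minimum bounding rectangle of the prolate hyperspheroid. Plugging this value into the limiting rate gives the claimed lower bound
\begin{equation*}
    \expect{\rejectRate} \geq 1 - \frac{\pi^{\dimension/2}}{\left(\dimension+1\right)2^{\dimension-1}\gammaFunc{\dimension/2+1}}.
\end{equation*}

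The upper bound $\expect{\rejectRate}\leq 1$ is immediate from the trivial bound $\expect{\cnext}\leq\ccur$ in Lemma~\ref{lem:conv:expect}, which follows because rejection sampling only either preserves or improves the current solution. Finally, to justify the ``factorially approaches sublinear'' claim, I would note that $\gammaFunc{\dimension/2+1}$ grows faster than any exponential in $\dimension$, so the correction term decays faster than exponentially and the rate approaches $1$ as $\dimension\to\infty$; this mirrors the observation already made for Theorem~\ref{thm:curse}.

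The main obstacle is tightness. The lower bound in Lemma~\ref{lem:conv:expect} is only exact in the idealised setting (no obstacles, infinite rewiring radius, infinite $\maxEdge$), so the argument that equality is attainable must be stated carefully for that regime, as in Theorem~\ref{thm:conv:rrtstar}. One must also be careful that the adaptive rectangle is genuinely the \emph{tightest} axis-aligned bounding rectangle of $\phsSet$ at each iteration; otherwise the probability $\probBetter$ is strictly smaller, giving a rate strictly closer to $1$, which is still consistent with the stated bound but weakens the attainability claim. With those caveats, the proof reduces to the substitution and limit calculation sketched above.
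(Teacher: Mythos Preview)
Your proposal is correct and follows essentially the same route as the paper: invoke the sharp case of Lemma~\ref{lem:conv:expect}, compute the limiting ratio as $\ccur\to\cmin$, and substitute the rectangular-sampling probability bound from Theorem~\ref{thm:curse}. The only difference is that the paper reuses the l'H\^{o}pital machinery set up in the proof of Theorem~\ref{thm:conv:rrtstar} (noting $\partial\probBetter/\partial\cprev=0$ here), whereas your direct factorization $\expect{\cnext}-\cmin=(\ccur-\cmin)\bigl(1-\probBetter(\ccur+\cmin)/((\dimension+1)\ccur)\bigr)$ reaches the same limit $1-2\probBetter/(\dimension+1)$ more cleanly; one small correction is that for the tight bounding rectangle the ratio $\lebesgue{\phsSet}/\lebesgue{\rectSet}$ is in fact \emph{constant} in $\ccur$, not merely attained in the limit, so $\probBetter$ equals $\pi^{\dimension/2}/(2^\dimension\gammaFunc{\dimension/2+1})$ throughout rather than only asymptotically.
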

\begin{proof}
    The proof of Theorem~\ref{thm:conv:reject} follows directly from Lemma~\ref{lem:conv:expect} when $\probBetter$ is calculated by substituting \eqref{eqn:thm:curse:tightMeasure} in \eqref{eqn:sampleProb} and appears from the supplementary online material in Appendix~\ref{appx:infinite:rate:reject}.
\end{proof}

This convergence rate diminishes factorially (i.e., quickly) as state dimension increases due to the minimum-path-length curse of dimensionality.
Informed \acs{RRTstar} avoids this limitation with direct informed sampling.
It also converges linearly in the absence of obstacles and constraints and with an infinite rewiring neighbourhood but has a weaker dependence on state dimension (Theorem~\ref{thm:conv:informed}).

\begin{thm}[Linear convergence of Informed \acs{RRTstar} in holonomic minimum-path-length planning]\label{thm:conv:informed}
    Informed \acs{RRTstar} converges at best \emph{linearly} towards the optimum of holonomic minimum-path-length planning problems,
    \begin{equation}\label{eqn:thm:conv:informed}
         \frac{n-1}{n+1} \leq \expect{\informedRate} \leq 1,
    \end{equation}
    where the lower-bound occurs exactly with an infinite rewiring neighbourhood in the absence of obstacles and constraints.
    
    For simplicity, this statement is limited to holonomic planning but it can be extended to specific constraints by expanding Lemma~\ref{lem:necessary:exact:sample}.
\end{thm}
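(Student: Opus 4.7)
The plan is to specialize Lemma~\ref{lem:conv:expect} to Informed \acs{RRTstar} and then read off the limiting contraction ratio of Definition~\ref{defn:conv}. The key observation that makes the lemma collapse to an equality is that in the idealized setting (no obstacles, no constraints, $\maxEdge = \infty$, $\rrrtstar = \infty$), the $L^2$ heuristic is universally admissible and sharp, so $\fBelowSet = \fTrueSet$; since Informed \acs{RRTstar} draws $\xrand$ uniformly from $\fBelowSet$ via \algo{algo:multigoal} and the $\mathtt{Steer}$ function reduces to the identity, every iteration adds an $\xnew \in \fTrueSet$, giving $\probBetter = 1$.

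With that in hand, the first calculation is to substitute $\probBetter = 1$ into the sharp bound of Lemma~\ref{lem:conv:expect} to obtain $\expect{\cnext} = (\dimension \ccur^2 + \cmin^2)/((\dimension+1)\ccur)$. Next I would subtract $\cmin$ from both sides and factor the numerator as $(\dimension\ccur - \cmin)(\ccur - \cmin)$, exposing the per-iteration ratio $(\expect{\cnext} - \cmin)/(\ccur - \cmin) = (\dimension\ccur - \cmin)/((\dimension+1)\ccur)$. Passing to the limit $\ccur \downarrow \cmin$ (which applies because $\set{\ccur}$ is monotone nonincreasing and bounded below by $\cmin$) yields $\expect{\informedRate} = (\dimension-1)/(\dimension+1)$, establishing the lower bound and linear convergence in the idealized case. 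The upper bound $\expect{\informedRate} \leq 1$ then follows from the trivial side $\expect{\cnext} \leq \ccur$ of the same lemma, which is attained in the worst case when obstacles, constraints, or a finite rewiring radius prevent some sampled states in $\fBelowSet$ from actually improving the solution (so $\probBetter < 1$ and the convex combination in \eqref{eqn:lem:conv:expect} approaches $\ccur$).

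The main obstacle is justifying that $\probBetter = 1$ holds \emph{exactly}, not merely as an upper bound, in the idealized setting. Two facts need to be verified: first, that the sharpness of the $L^2$ heuristic in the absence of obstacles and constraints implies $\fBelowSet = \fTrueSet$, so every uniform draw from $\fBelowSet$ lands in the omniscient set; and second, that with $\rrrtstar = \infty$ the rewiring neighbourhood contains every existing vertex, so that the added state is connected through its optimal parent and therefore produces a solution strictly better than $\ccur$ (i.e., the necessary condition from Lemma~\ref{lem:necessary:exact} is also sufficient here). A minor secondary subtlety is that Definition~\ref{defn:conv} is phrased for a deterministic sequence; applying it to $\expect{\ccur}$ is legitimate because the sequence of expectations is itself monotone and converges to $\cmin$, and the contraction ratio is continuous in $\ccur$ at $\cmin$, so the limit commutes with the substitution. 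With those two points confirmed the bounds in \eqref{eqn:thm:conv:informed} follow in a few lines, in direct analogy to the proofs of Theorems~\ref{thm:conv:rrtstar} and \ref{thm:conv:reject}.
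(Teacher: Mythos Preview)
Your proposal is correct and follows the same overall strategy as the paper: set $\probBetter = 1$ in Lemma~\ref{lem:conv:expect} (justified by sharpness of the $L^2$ heuristic and the infinite rewiring radius), evaluate the resulting contraction ratio, and take the limit $\ccur\to\cmin$.

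The one genuine difference is in how you evaluate the limit. The paper inherits the framework of Theorem~\ref{thm:conv:rrtstar}, which writes the best-case rate as
\[
\expect{\rate^*} = 1 + \frac{1}{\dimension+1}\limitoinf\frac{\probBetter\left(\cmin^2-\cprev^2\right)}{\cprev^2-\cmin\cprev},
\]
applies l'H\^{o}pital's rule, and then substitutes $\probBetter=1$, $\partial\probBetter/\partial\cprev=0$ to get $(\dimension-1)/(\dimension+1)$. You instead factor the numerator directly as $\dimension\ccur^2+\cmin^2-(\dimension+1)\ccur\cmin = (\dimension\ccur-\cmin)(\ccur-\cmin)$, cancel the common factor, and read off the limit without any calculus. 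Your route is more elementary and arguably cleaner for this special case; the paper's route has the advantage of reusing a single template across Theorems~\ref{thm:conv:rrtstar}--\ref{thm:conv:informed}. One small point the paper makes explicit that you only gesture at: the exchange of expectation and limit in Definition~\ref{defn:conv} is justified via Lebesgue's dominated convergence theorem (the ratio is almost surely bounded in $[0,1]$), not merely by monotonicity of the expected costs.
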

\begin{proof}
    The proof of Theorem~\ref{thm:conv:informed} follows directly from Lemma~\ref{lem:conv:expect} when $\probBetter=1$ and appears from the supplementary online material in Appendix~\ref{appx:infinite:rate:informed}.
\end{proof}

\begin{figure}[tbp]
    \centering
    \includegraphics[width=\columnwidth]{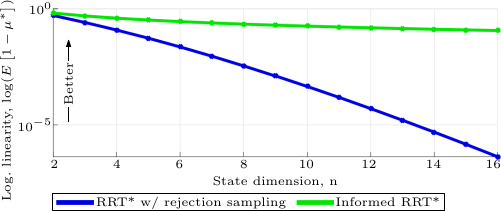}
    \caption{%
            An illustration of the lower-bounds on linearity, $\expect{1 - \rate^*}$, of \acs{RRTstar} with rejection sampling and Informed \acs{RRTstar} (Corollary~\ref{cor:conv:always_better}).
            As predicted by Theorems~\ref{thm:conv:reject} and \ref{thm:conv:informed}, the convergence rates bounds diverge as state dimensions increase, with rejection sampling factorially approaching sublinear convergence.
            }
    \label{fig:conv:rates}
\end{figure}%
\begin{figure*}[tbp]
    \centering
    \includegraphics[width=\textwidth]{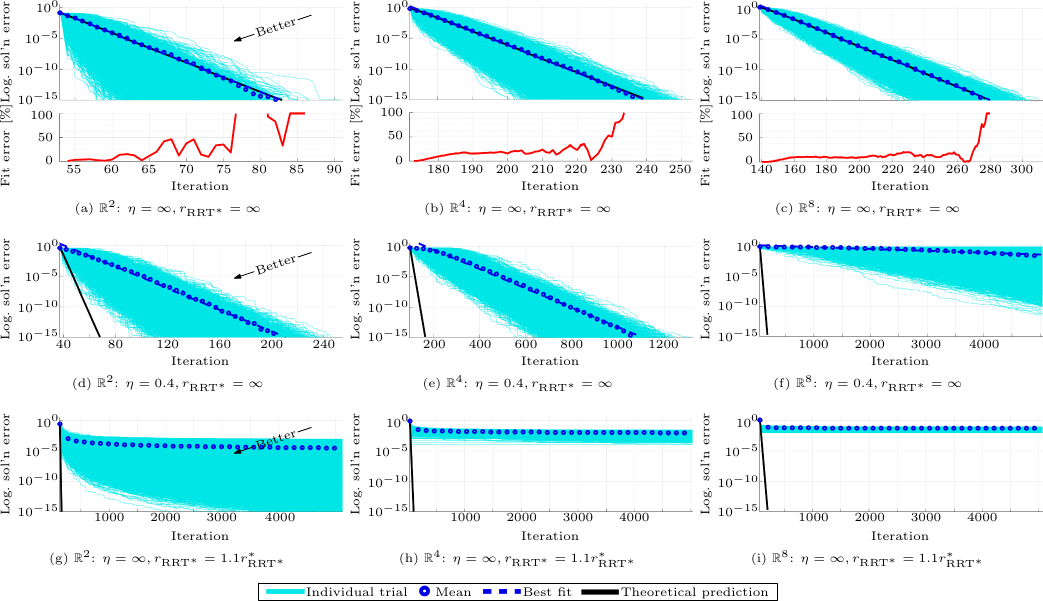}
    \caption{%
            Experimental validation and extension of Lemma~\ref{lem:conv:expect} and Theorem~\ref{thm:conv:informed} in $\real^2$, $\real^4$ and $\real^8$.
            Informed \acs{RRTstar} was run from a common initial solution $10^4$ times in $\real^2$, $\real^4$ and $\real^8$ with different pseudo-random seeds.
            The error relative to the known optimum, $\log\left(\ccur - \copt\right)$, was plotted for each instance at each iteration (cyan lines) along with the mean error (blue circles), a line of best fit (blue dashed line), and the lower-bound error predicted by Lemma~\ref{lem:conv:expect} (black line).
            The difference between the predicted lower bound and the mean errors (red lines), $\left|\left(\cost_{\mathrm{mean},i}-\cost_{\mathrm{theory},i}\right)/\left(\cost_{\mathrm{mean},i}-\copt\right)\right|$, with infinite rewiring neighbourhoods, (a)--(c), confirms experimentally that convergence is linear (Theorem~\ref{thm:conv:informed}).
            The mean error for a finite but \emph{constant} rewiring neighbourhood, (d)--(f), shows experimentally that convergence is slower but possibly still linear.
            The mean error for a finite and \emph{decreasing} rewiring neighbourhood, (g)--(i), shows experimentally that the is slower and sublinear.
            The results of (d)--(i) motivate further research on the effects of the \acs{RRTstar} rewiring neighbourhood.
    }
    \label{fig:conv:exp:combo}
\end{figure*}
Theorems~\ref{thm:conv:rrtstar}--\ref{thm:conv:informed} result in the following corollary regarding the relative convergence rates of the algorithms.
\begin{cor}[The faster convergence of Informed \acs{RRTstar} in holonomic minimum-path-length planning]\label{cor:conv:always_better}
    The best-case convergence rate of Informed \acs{RRTstar}, $\informedRate^*$, is always better than that of \ac{RRTstar}, with or without rejection sampling in holonomic minimum-path-length planning,
    \begin{equation*}
        \forall n \geq 2,\; \frac{n-1}{n+1} = \expect{\informedRate^*} \leq \expect{\rejectRate^*} \leq \expect{\rrtstarRate^*} = 1.
    \end{equation*}
    
    For simplicity, this statement is limited to holonomic planning but it can be extended to specific constraints by expanding Lemma~\ref{lem:necessary:exact:sample}.
\end{cor}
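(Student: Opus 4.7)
The corollary chains three bounds that are already established in Theorems~\ref{thm:conv:rrtstar}--\ref{thm:conv:informed}, so my plan is to identify the best-case value of $\mu$ permitted by each theorem and then reduce the corollary to a single arithmetic comparison. Recall from Definition~\ref{defn:conv} that smaller $\mu$ means faster convergence, so ``best-case'' $\mu^*$ corresponds to the sharp \emph{lower} bound on each $\expect{\mu}$.

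First I would read off the three values. Theorem~\ref{thm:conv:rrtstar} pins $\expect{\rrtstarRate^*} = 1$ exactly, and Theorem~\ref{thm:conv:informed} gives $\expect{\informedRate^*} = (\dimension-1)/(\dimension+1)$ sharply. Theorem~\ref{thm:conv:reject} supplies the sharp lower bound $\expect{\rejectRate^*} = 1 - \pi^{\dimension/2}/\bigl[(\dimension+1)\,2^{\dimension-1}\,\gammaFunc{\dimension/2+1}\bigr]$. The outer inequality $\expect{\rejectRate^*} \leq \expect{\rrtstarRate^*}$ is then immediate because the subtracted term is nonnegative.

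The only substantive step is the middle inequality $\expect{\informedRate^*} \leq \expect{\rejectRate^*}$. I would rearrange it to
\begin{equation*}
    \frac{\pi^{\dimension/2}}{(\dimension+1)\,2^{\dimension-1}\,\gammaFunc{\dimension/2+1}} \;\leq\; \frac{2}{\dimension+1},
\end{equation*}
clear the common factor of $\dimension+1$, and use \eqref{eqn:ballMeasure} to recognize the left side as $\unitBall{\dimension}/2^\dimension$. The claim then reduces to $\unitBall{\dimension} \leq 2^\dimension$, which holds trivially (and strictly) for every $\dimension \geq 2$ because the unit $\dimension$-ball is contained in the hypercube $[-1,1]^\dimension$ of Lebesgue measure $2^\dimension$.

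I do not anticipate any real obstacle: the content of the corollary is carried entirely by the three prior theorems, and what remains is a bookkeeping exercise plus one elementary geometric inequality. The only point that warrants care is the direction of ``best-case,'' i.e., that $\mu^*$ denotes the sharp lower bound rather than the upper bound; once that convention is fixed, the chain of inequalities follows immediately.
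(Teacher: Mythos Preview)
Your proposal is correct and follows the same approach as the paper: read off the sharp lower bounds from Theorems~\ref{thm:conv:rrtstar}--\ref{thm:conv:informed} and compare. The paper's own proof is a single sentence (``follows immediately from the lower bounds'') together with a reference to Fig.~\ref{fig:conv:rates}; your version supplies the arithmetic the paper leaves implicit, in particular the clean reduction of the middle inequality to $\unitBall{\dimension}\leq 2^\dimension$ via the containment of the unit ball in the cube $[-1,1]^\dimension$.
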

\begin{proof}
    The proof follows immediately from the lower bounds in \eqref{eqn:thm:conv:rrtstar}, \eqref{eqn:thm:conv:reject}, and \eqref{eqn:thm:conv:informed}.
    It is illustrated in Fig.~\ref{fig:conv:rates}.
\end{proof}

\subsection{Experimental Validation and Extension}\label{sec:rate:exp}
Convergence rates are investigated experimentally for infinite, constant finite, and decreasing finite rewiring radii.
To isolate the effects of the rewiring parameters, Informed \acs{RRTstar} was run on obstacle-{} and constraint-free problems in $\real^2$, $\real^4$, and $\real^8$ for $10^4$ trials of each configuration.
Each trial started from the same initial solution but used different pseudo-random seeds to search for improvements.
The logarithmic error relative to the known optimum, $\log\left(\ccur - \copt\right)$, and the resulting mean were calculated at each iteration of each trial and used to validate Theorem~\ref{thm:conv:informed} and illustrate the effects of rewiring parameters on the convergence rate.

The experimental results for an infinite rewiring neighbourhood (i.e., $\maxEdge=\infty$ and $\rrrtstar=\infty$) show excellent agreement with the theoretical predictions in Theorem~\ref{thm:conv:informed} (Figs.~\ref{fig:conv:exp:combo}a--c).
The mean solution cost converges linearly towards the optimum and closely matches the lower-bound predicted by Lemma~\ref{lem:conv:expect}.

The experimental results for a \emph{constant finite} rewiring neighbourhood (i.e., $\eta = 0.4$ and $\rrrtstar=\infty$) show that the convergence rate is lower than predicted by Theorem~\ref{thm:conv:informed} (Figs.~\ref{fig:conv:exp:combo}d--f).
The convergence rate appears to be initially nonlinear but then become linear.
It is hypothesized that this is related to the density of samples relative to the maximum edge length as reflected by $\iterThresh$ in Theorem~\ref{thm:necessary:exact:prob}.

The experimental results for a \emph{decreasing finite} rewiring neighbourhood (i.e., $\eta = \infty$ and $\rrrtstar=1.1\rrrtstarmin$) show that the convergence rate appears to be sublinear (Figs.~\ref{fig:conv:exp:combo}g--i).
It is hypothesized that this is a result of the rewiring neighbourhood shrinking `too' fast relative to the sample density.

These experiments suggest that further research is necessary to study the tradeoff between per-iteration cost and the number of iterations needed to find a solution.
While a shrinking rewiring neighbourhood limits the number of rewirings, the apparent resulting sublinear convergence would require significantly more iterations to find high-quality solutions.
Alternatively, while linear convergence needs fewer iterations to find equivalent solutions, the required constant radius would allow the number of rewirings to increase indefinitely.

\section{Experiments}\label{sec:exp}
Informed \acs{RRTstar} was evaluated on simulated problems in $\real^2$, $\real^4$, and $\real^8$ (Sections~\ref{sec:exp:toy} and \ref{sec:exp:grid}) and for \ac{HERB} (Section~\ref{sec:exp:herb}) using \ac{OMPL}\footnotemark{}.
\footnotetext{The experiments were run on a laptop with $16$~GB of RAM and an Intel i7-4810MQ processor. The abstract experiments were run in Ubuntu 12.04 ($64$-bit) with Boost 1.58, while the \acs{HERB} experiments were run in Ubuntu 14.04 ($64$-bit).}
It was compared to the original \ac{RRTstar} and versions that focus the search with graph pruning (e.g., \algo{algo:prune}), heuristic rejection on $\xnew$, heuristic rejection on $\xrand$, and all three techniques combined.

All planners used the same tuning parameters and the ordered rewiring technique presented in \cite{perez_iros11}.
Planners used a goal-sampling bias of $5\%$ and an \ac{RRTstar} radius of $\rrrtstar=2\rrrtstarmin$.
The maximum edge length was selected experimentally to reduce the time required to find an initial solution on a training problem, with values of $\maxEdge=0.3$, $0.5$, $0.9$, and $1.3$ used in $\real^2$, $\real^4$, $\real^8$, and on \ac{HERB} ($\real^{14}$), respectively.
Available planning time was limited for each state dimension to $3$, $30$, $150$, and $600$~seconds, respectively.
Planners with heuristics used the $L^2$ norm as estimates of cost-to-come and cost-to-go while those with graph pruning delayed its application until solution cost changed by more than $5\%$.

These experiments were designed to investigate admissible methods to focus search.
More advanced extensions of \ac{RRTstar} were not considered as they commonly include some combination of the investigated techniques. %

\subsection{Toy Problems}\label{sec:exp:toy}
Two separate experiments were run in $\real^2$, $\real^4$, and $\real^8$ on randomized variants of the toy problem depicted in Fig.~\ref{fig:exp:defn}a to investigate the effects of obstacles on convergence.

The problem consists of a (hyper)cube of width $\mapwidth$ with a single start and goal located at $\left[-0.5,0,\ldots,0\right]^T$ and $\left[0.5,0,\ldots,0\right]^T$, respectively.
A single (hyper)cube obstacle of width $\obswidth\sim\uniformSymb\left[0.25,0.5\right]$ sits between the start and goal in the centre of the problem domain.

The first experiment investigates finding near-optimal solutions in the presence of obstacles.
The time required for each planner to find a solution within various fractions of the known optimum, $\copt$, was recorded over $100$ trials with different pseudo-random seeds for maps of width $\mapwidth=2$.
The percentage of trials that found a solution within the target tolerance of the optimum and the median time necessary to do so are presented for each planner in Figs.~\ref{fig:exp}a--c.
Trials that did not find a suitable solution were treated as having infinite time for the purpose of calculating the median.
The results show that Informed \acs{RRTstar} performs equivalently to rejection sampling algorithms in low state dimensions but outperforms all existing techniques in higher dimensions.

The second experiment investigates finding near-optimal solutions in large planning problems.
The time required for each planner to find a near-optimal solution was recorded over $100$ trials with different pseudo-random seeds for maps of increasing width, $\mapwidth$.
Planners sought a solution better than $1.01\copt$, $1.05\copt$, and $1.15\copt$ in $\real^2$, $\real^4$, and $\real^8$, respectively.
The percentage of trials that found a sufficiently near-optimal solution and the median time necessary to do so are presented for each planner in Figs.~\ref{fig:exp}d--f.
Trials that did not find a suitable solution were treated as having infinite time for the purpose of calculating the median.
The results show that Informed \acs{RRTstar} outperforms all existing techniques in large-domain planning problems and that the difference increases in higher state dimensions.

These experiments show that increasing problem size and state dimension decreases the ability of nondirect sampling methods to find near-optimal solutions, as predicted by \eqref{eqn:betterProb}.
Informed \acs{RRTstar} limits these effects and outperforms existing techniques by efficiently focusing its search to the $L^2$ informed set using direct informed sampling.

\subsection{Worlds with Many Homotopy Classes}\label{sec:exp:grid}
The algorithms were tested on more complicated problems with many homotopy classes in $\real^2$, $\real^4$, and $\real^8$.
The worlds consisted of a (hyper)cube of width $l=4$ with the start and goal located at $\left[-0.5,0,\ldots,0\right]^T$ and $\left[0.5,0,\ldots,0\right]^T$, respectively.
The problem domain was filled with a regular pattern of axis-aligned (hyper)cube obstacles with a width such that the start and goal were $5$ `columns' apart (Fig.~\ref{fig:exp:defn}b).

The planners were tested with $100$ different pseudo-random seeds on each world and state dimension.
The solution cost of each planner was recorded every $1$~millisecond by a separate thread and the median was calculated from the $100$ trials by interpolating each trial at a period of $1$~millisecond.
The absence of a solution was considered an infinite cost for the purpose of calculating the median.

The results are presented in Figs.~\ref{fig:exp}g--i, where the percent of trials solved and the median solution cost are plotted versus run time.
They demonstrate how Informed \acs{RRTstar} has better real-time convergence towards the optimum than existing techniques, especially in higher state dimensions.

\begin{figure}[tb]
    \centering
    \includegraphics[width=\columnwidth]{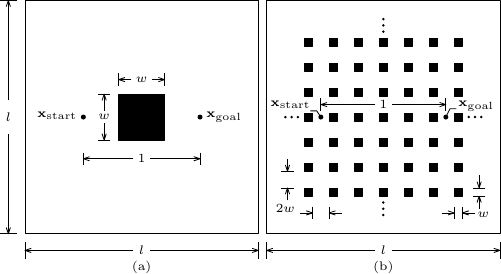}
    \caption{%
            Illustrations of the planning problems used for Sections~\ref{sec:exp:toy} and \ref{sec:exp:grid} to study performance relative to a known optimum, the effect of map width, $l$, and performance in problems with many homotopy classes.
            The width of the obstacle in (a) is a random variable uniformly distributed over the range $\left[0.25,0.5\right]$.
            The regularly spaced obstacles in (b) are chosen in to scale efficiently to high dimensions and their width is such that the start and goal states are $5$ `columns' apart.
    }
    \label{fig:exp:defn}
\end{figure}%
\begin{figure*}[tbp]
    \centering
    \includegraphics[width=\textwidth]{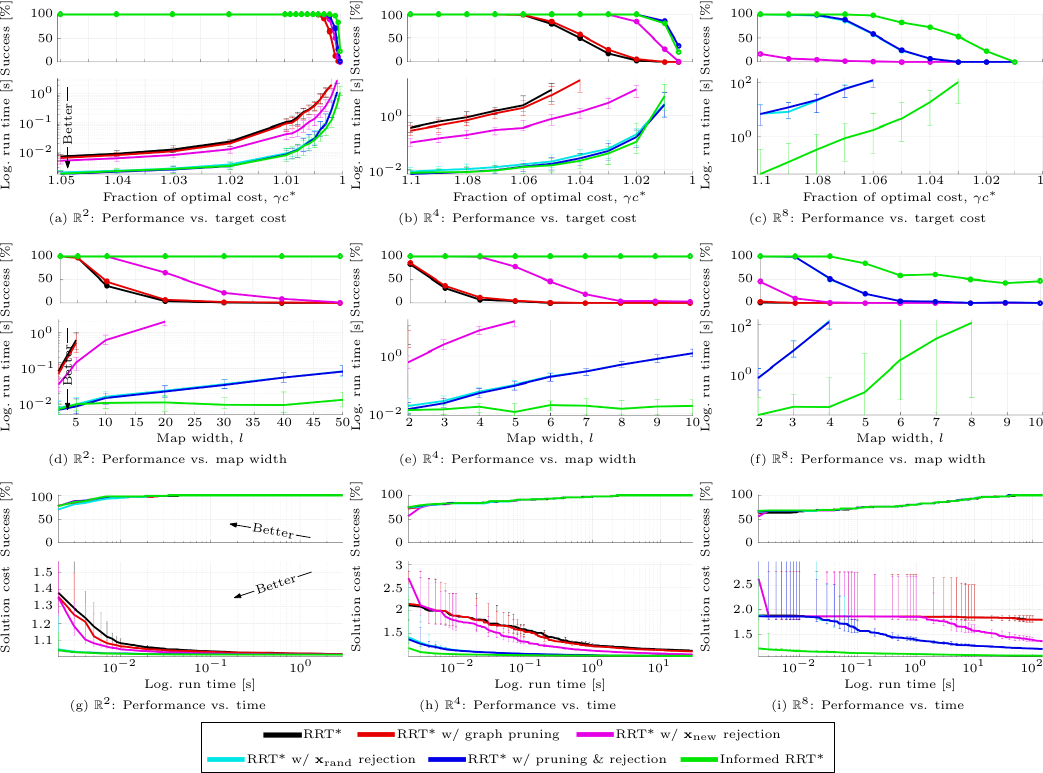}
    \caption{%
            Results for the experiments described in Sections~\ref{sec:exp:toy} and \ref{sec:exp:grid}.
            Each planner was run $100$ different times in $\real^2$, $\real^4$, and $\real^8$ on each problem for $3$, $30$, and $150$ seconds, respectively. 
            The percentage of trials that found the desired solution are plotted above and the median performance is plotted below for each experiment.
            Unsuccessful trials were assigned an infinite value for the purpose of calculating the median and the error bars denote a nonparamentric $99\%$ confidence interval on the median value.
            The times required to find different near-optimal solutions, $\ccur<\gamma\copt$, for the problem illustrated in Fig.~\ref{fig:exp:defn}a with $l=2$ are presented in (a)--(c).
            The times required to find a solution within a fraction of the known optimum ($1.01\copt$, $1.05\copt$, and $1.15\copt$, respectively) for the problem illustrated in Fig.~\ref{fig:exp:defn}a for various map widths are presented in (d)--(f).
            Solution cost is plotted versus run time for the problem illustrated in Fig.~\ref{fig:exp:defn}b in (g)--(i).
            Taken together, these experiments demonstrate the benefits of direct informed sampling even in large or high-dimensional problems, with a high number of obstacles, and many homotopy classes.
            }%
    \label{fig:exp}
\end{figure*}

\subsection{Motion Planning for \acs{HERB}}\label{sec:exp:herb}
Informed \ac{RRTstar} was demonstrated on a high-dimensional problem using \ac{HERB}, a 14-\ac{DOF} mobile manipulation platform \cite{herb}.
Poses were defined for the two arms to create a sequence of three planning problems (Fig.~\ref{fig:exp:herb}) inspired by \cite{ymca}.
The objective of these problems was to find the minimum path length through a $14$-dimensional search space with strict limits (each joint has no more than $\pi$--$2\pi$ radians of travel).
While path length is not a common cost function for manipulation, these experiments illustrate that direct informed sampling is beneficial in high-dimensional problem domains even with strict search limits.

\ac{RRTstar}, \acs{RRTstar} with pruning and rejection, and Informed \acs{RRTstar} were each run for $50$ trials on each problem of the cycle.
The resulting median path lengths are presented in Fig.~\ref{fig:exp:herb:bars}.
Trials that did not find a solution were considered to have infinite length for the purpose of calculating the median.
This only occurred for the problem from (a) to (b), where the planners found a solution on $94\%$ of the trials.

\ac{RRTstar} with and without pruning and rejection sampling both fail to improve the initial solutions on all three planning problems but Informed \acs{RRTstar} is able to improve the path length by $3.9\%$, $7.9\%$, and $28.2\%$, respectively.
The improvement for (a) to (b) is not statistically significant but (b) to (c) and (c) to (d) demonstrate the benefits of considering the relative sizes of the informed set and problem domain in high state dimensions.

\section{Discussion \& Conclusion}\label{sec:fin}
\ac{RRTstar} almost-surely converges asymptotically to the optimum by asymptotically finding the optimal paths to every state in the problem domain.
This is inefficient in single-query scenarios as, once a solution is found, searches only need to consider states that can belong to a better solution (i.e., the omniscient set; Definition~\ref{defn:omni}, Lemma~\ref{lem:necessary:exact}).
Previous work has focused search to estimates of this set (i.e., informed sets; Definition~\ref{defn:informed}) but has not used these estimates to analyze performance.
This paper proves that for holonomic problems the probability of sampling an admissible informed set provides an upper bound on the probability of improving a solution (Theorem~\ref{thm:necessary:heuristic:prob}).
\squeezeWidowedWords

A popular admissible heuristic for problems seeking to minimize path length is the $L^2$ norm (i.e., Euclidean distance).
This paper shows that existing techniques to exploit it are insufficient.
The majority of approaches either reduce the ability to find solutions in other homotopy classes (i.e., reduce recall; Definition~\ref{defn:recall}) or fail to account for the reduction of the $L^2$ informed set in response to solution improvement (i.e., have decreasing precision; Definition~\ref{defn:precision}).
Even existing adaptive techniques that address these problems (e.g., \cite{otte_tro13}) fail to account for its factorial decrease in measure with state dimension (i.e., the minimum-path-length curse of dimensionality; Theorem~\ref{thm:curse}).

This paper presents a method to avoid these limitations through direct sampling of the $L^2$ informed set (\algos{algo:infset}{algo:randomKeep}; Section~\ref{sec:l2}).
This approach generates uniformly distributed samples in the informed set regardless of its size relative to the problem domain or the state dimension (i.e., it has $100\%$ recall and high precision).
This paper presents Informed \acs{RRTstar} as a demonstration of how these techniques can be used in sampling-based planning (\algoAnd{algo:inf_rrtstar}{algo:prune}; Section~\ref{sec:inf}).

\begin{figure}[tbp]
    \centering
    \includegraphics[width=\columnwidth]{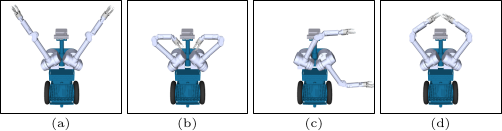}
    \caption{%
            A motion planning problem for \acs{HERB} inspired by \cite{ymca}.
            Planners must find a collision-free path between each pair of subsequent poses, e.g., (a) to (b).
            \acs{HERB}'s $14$ \acsp{DOF} and large number of potential self-collisions make this a nontrivial planning problem for \acs{RRTstar}.
            The planners were given $600$ seconds for each phase of the planning problem and the results are presented in Fig~\ref{fig:exp:herb:bars}.
            }
    \label{fig:exp:herb}
\end{figure}

Informed \acs{RRTstar} considers all homotopy classes that could provide a better solution (i.e., $100\%$ recall), unlike sample biasing techniques.
It is effective regardless of the relative size of the informed set or the state dimension, unlike sample rejection or graph pruning.
When the heuristic does not provide any information (e.g., small planning problems and/or large informed sets) it is identical to \ac{RRTstar}.

This paper also uses the shape of the $L^2$ informed set to analyze the theoretical performance of \ac{RRTstar} on minimum-path-length problems (Section~\ref{sec:rate}) by bounding the expected solution cost (Lemma~\ref{lem:conv:expect}) and convergence rates (Theorems~\ref{thm:conv:rrtstar}--\ref{thm:conv:informed}).
The bounds are sharp over the set of all (Lemma~\ref{lem:conv:expect}) or all holonomic (Theorems~\ref{thm:conv:rrtstar}--\ref{thm:conv:informed}) minimum-path-length planning problems and algorithm configurations with the lower bounds exact for an infinite rewiring radius in the absence of obstacles and constraints.
These results prove that \ac{RRTstar} converges sublinearly (i.e., slower than linear) for all configurations and holonomic minimum-path-length problems and that focused variants (e.g., Informed \acs{RRTstar}) can have linear convergence.

This analysis is extended experimentally to different configurations.
The results confirm the theoretical findings and suggest that obstacle-{} and constraint-free convergence remains linear when the rewiring radius is constant but becomes sublinear when it decreases in the manner proposed by \cite{karaman_ijrr11}.
As previous analysis of this radius has focused on per-iteration complexity, we believe this result motivates future research into the trade off between per-iteration cost and convergence rate.

The practical advantages of Informed \ac{RRTstar} are shown on a variety of planning problems (Section~\ref{sec:exp}).
These experiments demonstrate how its theoretical convergence rate corresponds to better performance on real planning problems.
The amount of improvement depends on how efficiently the $L^2$ informed set decreases the search domain and may be limited in small problem domains and/or long circuitous solutions (e.g., the small/low-dimensional problems in Section~\ref{sec:exp:toy} and the first problem of Section~\ref{sec:exp:herb}).
The design of \algo{algo:multigoal} assures that in these situations Informed \ac{RRTstar} performs no worse than other methods to exploit the $L^2$ heuristic (e.g., rejection sampling).

Designing these experiments highlighted the relationship between the maximum edge length, $\maxEdge$, and algorithm performance.
This user-selected value not only affected the time required to find an initial solution but, as a result of \eqref{eqn:back:rewire}, also the quality of the solution found in finite time.
Specifically, large values of $\maxEdge$ appeared to decrease the difference between algorithms; however, also resulted in order of magnitude increases in the time required to find initial solutions.
When coupled with the results of Section~\ref{sec:exp}, this result should further motivate more research into the effects of the \ac{RRTstar} tuning parameters, $\maxEdge$ and $\rrrtstar$, on real-time performance.
Given that anytime improvement of a solution is a major feature of \ac{RRTstar}, we tuned $\maxEdge$ for these experiments to minimize the initial-solution time on a series of independent test problems.

We believe that defining precise and admissible informed sets is a fundamental challenge of using anytime almost-surely asymptotically optimal planners in real-world applications.
The $L^2$ informed set is a sharp, uniformly admissible estimate of the omniscient set for problems seeking to minimize path length, even in the presence of constraints, and is exact in the absence of obstacles and constraints.
This suggests that any informed set that is more precise must either
\begin{inparaenum}[(i)]
    \item exploit additional information about the problem domain (e.g., obstacles, constraints), and/or
    \item be inadmissible for some minimum-path-length planning problems.
\end{inparaenum}
Finding ways to define new admissible heuristics from additional problem-specific information could potentially allow focused search algorithms to converge linearly in the presence of obstacles and/or constraints.

We ultimately believe that heuristics are a key component of successful planning algorithms.
To this end, we are currently investigating methods to extend heuristics to entire sampling-based searches, similar to how A* \cite{hart_tssc68} extends Dijkstra's algorithm \cite{dijkstra_59}.
We accomplish this in \ac{BITstar} \cite{gammell_icra15,gammell_phd17,gammell_ijrr18} by extending the ideas presented in this paper to batches of randomly generated samples.
These samples are limited to informed sets and searched in order of potential solution quality.
Information on \ac{OMPL} implementations of both Informed \acs{RRTstar} and \ac{BITstar} are available at \href{http://asrl.utias.utoronto.ca/code}{\footnotesize\url{http://asrl.utias.utoronto.ca/code}}.
\begin{figure}[tbp]
    \centering
    \includegraphics[width=\columnwidth]{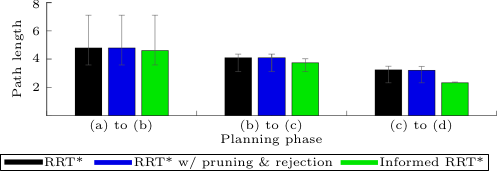}
    \caption{%
            Median path length results from the motion planning problems depicted in Fig.~\ref{fig:exp:herb}.
            Planners found a solution between each pose in every trial after $600$ seconds other than the transition from (a) to (b), where solutions were only found in $94\%$ of the $50$ trials.
            For the purpose of calculating the median, these unsolved trials were assigned an infinite cost.
            Error bars denote a nonparamentric $99\%$ confidence interval on the median value.
            The results show that even in the presence of strict state-space limits, Informed \acs{RRTstar} can outperform rejection sampling in high-dimensional problems.
            }
    \label{fig:exp:herb:bars}
\end{figure}
\theoremstyle{plain}%
\newtheorem*{lem:exact}{Lemma~\ref{lem:necessary:exact}}%
\newtheorem*{lem:sample}{Lemma~\ref{lem:necessary:exact:sample}}%
\newtheorem*{lem:uniform}{Lemma~\ref{lem:uniform}}%
\newtheorem*{lem:expect}{Lemma~\ref{lem:conv:expect}}%
\newtheorem*{thm:conv_rrtstar}{Theorem~\ref{thm:conv:rrtstar}}%
\newtheorem*{thm:conv_reject}{Theorem~\ref{thm:conv:reject}}%
\newtheorem*{thm:conv_informed}{Theorem~\ref{thm:conv:informed}}%

\appendices
\section{Proofs of Lemmas~\ref{lem:necessary:exact} and \ref{lem:necessary:exact:sample}}\label{appx:necessary}
This section restates and proves Lemmas~\ref{lem:necessary:exact} and \ref{lem:necessary:exact:sample}.

\subsection{Proof of Lemma~\ref{lem:necessary:exact}}\label{appx:necessary:exact}
\begin{lem:exact}[The necessity of adding states in the omniscient set]
    Adding a state from the omniscient set, $\xnew\in\fTrueSet$, is a necessary condition for \ac{RRTstar} to improve the current solution, $\ccur$,\squeezeWidowedWords
    \begin{equation*}
        \cnext < \ccur \implies \xnew \in \fTrueSet.
    \end{equation*}
    
    This condition is necessary but not \emph{sufficient} to improve the solution as the ability of states in $\fTrueSet$ to provide better solutions at any iteration depends on the structure of the tree (i.e., its optimality).
\end{lem:exact}
\begin{proof}
    At the end of iteration $\iter+1$, the cost of the best solution found by \ac{RRTstar} will be the minimum of the previous best solution, $\ccur$, and the best cost of any new or newly improved solutions, $\cnew$,
    \begin{equation}\label{eqn:thm:nec:next}
        \cnext = \min\set{\ccur,\cnew}.
    \end{equation}
    Each iteration of \ac{RRTstar} only adds connections to or from the newly added state, $\xnew$, and therefore all new or modified paths pass through this new state.
    The cost of any of these new paths that extend to the goal region will be bounded from below by the cost of the optimal solution of a path through $\xnew$,
    \begin{equation}\label{eqn:thm:nec:improve}
        \cnew \geq \fTrue{\xnew}.
    \end{equation}

    Lemma~\ref{lem:necessary:exact} is now proven by contradiction.
    Assume that \ac{RRTstar} has a solution with cost $\ccur$ after iteration $\iter$ and that it is improved at iteration $\iter+1$ by adding a state \emph{not} in the omniscient set, $\cnext < \ccur,\, \xnew\not\in\fTrueSet$.
    By \eqref{eqn:fset}, the costs of solutions through any $\xnew\not\in\fTrueSet$ are bounded from below by the current solution,
    \begin{equation*}
        \fTrue{\xnew} \geq \ccur,
    \end{equation*}%
    which by \eqref{eqn:thm:nec:improve} is also a bound on the cost of any new or modified solutions,
    \begin{equation*}
        \cnew \geq \fTrue{\xnew} \geq \ccur.
    \end{equation*}%
    By \eqref{eqn:thm:nec:next}, the cost of the best solution found by \ac{RRTstar} at the end of iteration $\iter+1$ must therefore be $\ccur$.
    This contradicts the assumption that the solution was improved by a state not in the omniscient set and proves Lemma~\ref{lem:necessary:exact}.
\end{proof}

\subsection{Proof of Lemma~\ref{lem:necessary:exact:sample}}\label{appx:necessary:sample}
\begin{lem:sample}[The necessity of sampling states in the omniscient set in holonomic planning]
    Sampling the omniscient set, $\xrand\in\fTrueSet$, is a necessary condition for \ac{RRTstar} to improve the current solution to a holonomic problem, $\ccur$, after an initial $\iterThresh$ iterations,
    \begin{equation*}
        \forall \iter \geq \iterThresh,\, \cnext < \ccur \implies \xrand \in \fTrueSet,
    \end{equation*}%
    for any sample distribution that maintains a nonzero probability over the entire omniscient set.
    
    For simplicity, this statement is limited to holonomic planning but it can be extended to specific constraints with appropriate assumptions.
\end{lem:sample}
\begin{proof}
    In \ac{RRT} (and therefore \ac{RRTstar}), the distribution of vertices in the graph approaches the sample distribution as the number of iterations approach infinity \cite{lavalle_tech98}.
    In the limit, all reachable regions of the problem domain with a nonzero sampling probability will therefore be sampled and the number of vertices in these regions will increase indefinitely with the number of iterations.
    This ever increasing number of vertices means that the worst-case distance between any state in a sampled subset and the nearest vertex in the graph will decrease indefinitely and monotonically.
    
    Lemma~\ref{lem:necessary:exact:sample} is now proven by contradiction.
    Assume that by iteration $\iterThresh$ there are a sufficient number and distribution of vertices in the tree such that all possible states in $\fTrueSet$ are no further than $\maxEdge$ from a vertex,
    \begin{equation}\label{eqn:cor:dist}
        \forall \statex\in\fTrueSet,\, \exists \statev\in\vertexSet \suchthat \norm{\statex-\statev}{2} < \maxEdge,
    \end{equation}
    and that \ac{RRTstar} has a solution with cost $\ccur$ after iteration $\iter\geq\iterThresh$.
    Now assume that \ac{RRTstar} improves the solution at iteration $\iter+1$ \emph{without} \emph{sampling} the omniscient set, $\cnext < \ccur,\, \xrand \not\in \fTrueSet$.

    As improving a solution requires \emph{adding} a state from the omniscient set, $\xnew\in\fTrueSet$, (Lemma~\ref{lem:necessary:exact}) this implies that the state added to the graph is not the randomly sampled state, $\xnew\not=\xrand$.
    These two states are related in holonomic planning by expansion constraints, \eqref{eqn:back:nearest} and \eqref{eqn:back:steer}, that find a new state as near as possible to $\xrand$ and no further than $\maxEdge$ from the nearest vertex in the tree.

    The triangle inequality implies that the nearest vertex to the sample, $\vnearest$, is also the nearest vertex to the proposed new state,
    \begin{equation*}
    \begin{aligned}
        \vnearest &\coloneqq \argmin_{\statev\in\vertexSet}\set{\norm{\xrand - \statev}{2}}\\
                  &\equiv \argmin_{\statev\in\vertexSet}\set{\norm{\xnew - \statev}{2}},
    \end{aligned}
    \end{equation*}
    which from \eqref{eqn:cor:dist} is bounded in its distance from $\xnew$ by
    \begin{equation}\label{eqn:cor:near_dist}
        \norm{\xnew - \vnearest}{2} < \maxEdge.
    \end{equation}
    Due to \eqref{eqn:back:steer}, the relationship in \eqref{eqn:cor:near_dist} is only satisfied in holonomic planning when $\xnew\equiv\xrand$.
    As by assumption the random sample is not a member of the omniscient set, $\xrand\not\in\fTrueSet$, then therefore neither is the newly added state, $\xnew\not\in\fTrueSet$, and by Lemma~\ref{lem:necessary:exact} the solution is not improved, $\cnext = \ccur$.
    This contradicts the assumption that the solution was improved by sampling a state not in the omniscient set and proves Lemma~\ref{lem:necessary:exact:sample}.
\end{proof}

\section{Proof of Lemma~\ref{lem:uniform}}\label{appx:uniform}
This section restates Lemma~\ref{lem:uniform} as presented by \cite{sun_fusion02} along with a full proof as presented in \cite{gammell_arxiv14b}.

\begin{lem:uniform}[The uniform distribution of samples transformed into a hyperellipsoid from a unit $\dimension$-ball. Originally Lemma 1 in \cite{sun_fusion02}]
    If the random points distributed in a hyperellipsoid are generated from the random points uniformly distributed in a hypersphere through a linear invertible nonorthogonal transformation, then the random points distributed in the hyperellipsoid are also uniformly distributed.
\end{lem:uniform}
\begin{proof}
    Let the sets $\ballSet\subset\real^\dimension$ and $\ellipseSet\subset\real^\dimension$ be the unit $\dimension$-dimensional ball and a $\dimension$-dimensional hyperellipsoid with radii $\set{\radius_{\counterj}}_{\counterj=1}^\dimension$, respectively, having measures of
    \begin{align*}
        \lebesgue{\ballSet} &= \unitBall{\dimension},\\
        \lebesgue{\ellipseSet} &= \unitBall{\dimension}\prod_{\counterj=1}^\dimension \radius_\counterj.
    \end{align*}
    Let $\ballPdf{\cdot}$ be the probability density function of samples drawn uniformly from the unit $\dimension$-ball such that,
    \begin{equation}\label{eqn:ballPdf}
        \ballPdf{\statex} \coloneqq
        \begin{cases}
            \dfrac{1}{\unitBall{\dimension}},& \forall\statex\in\ballSet\\
            0,              & \text{otherwise}.
        \end{cases}
    \end{equation}
    Let $\linTrans{\cdot}$ be an invertible transformation from the unit $\dimension$-ball to a hyperellipsoid such that,
    \begin{align*}
        \linTransSymb &: \; \ballSet \to \ellipseSet,\\
        \linTransSymb^{-1} &: \; \ellipseSet \to \ballSet.
    \end{align*}
    By definition, the probability density function in the hyperellipsoid, $\ellipsePdf{\cdot}$, resulting from applying this transformation to samples distributed in the unit $\dimension$-ball is then
    \begin{equation}\label{eqn:pdfDefn}
        \ellipsePdf{\statex} \coloneqq \ballPdf{\invLinTrans{\statex}} \left|\det\left( \left.\frac{d\invLinTransSymb}{d\xellipse}\right|_{\statex} \right) \right|.
    \end{equation}

    The proposed transformation in \eqref{eqn:transformDefn} has the inverse
    \begin{equation*}
        \invLinTrans{\xellipse} = \mathbf{L}^{-1}\left(\xellipse - \xcentre\right),
    \end{equation*}
    and the Jacobian
    \begin{equation}\label{eqn:jacobian}
        \frac{d\invLinTransSymb}{d\xellipse} = \mathbf{L}^{-1}.
    \end{equation}
    
    Substituting \eqref{eqn:jacobian} and \eqref{eqn:ballPdf} into \eqref{eqn:pdfDefn} gives,
    \begin{equation}\label{eqn:ellipsePdf}
        \ellipsePdf{\statex} \coloneqq 
        \begin{cases}
            \dfrac{1}{\unitBall{\dimension}}\left|\det\left( \mathbf{L}^{-1} \right) \right|,& \forall \statex \in \ellipseSet\\
            0,              & \text{otherwise},
        \end{cases}
    \end{equation}
    using the fact that $\invLinTrans{\statex} \in \ballSet \iff \statex \in \ellipseSet$.
    As $\ellipsePdf{\cdot}$ is constant for all $\xellipse \in \ellipseSet$, this proves that using \eqref{eqn:transformDefn} to transform uniformly distributed samples in the unit $\dimension$-ball results in a uniform distribution over the hyperellipsoid and proves Lemma~\ref{lem:uniform}.

    For hyperellipsoids whose axes are orthogonal (e.g., a prolate hyperspheroid), \eqref{eqn:ellipsePdf} can be expressed in a more familiar and intuitive form.
    Using \eqref{eqn:transformFinal} for $\linTrans{\cdot}$ and the orthogonality of rotation matrices makes \eqref{eqn:ellipsePdf}
    \begin{equation}\label{eqn:orthoPdfTemp}
        \ellipsePdf{\statex} \coloneqq 
        \begin{cases}
            \dfrac{1}{\unitBall{\dimension}}\left|\det\left( \mathbf{L'}^{-1}\ellipseRotation^T \right) \right|,& \forall \statex \in \ellipseSet\\
            0,              & \text{otherwise}.
        \end{cases}
    \end{equation}
    where $\mathbf{L}' = \diag\left( \radius_1, \radius_2, \ldots, \radius_n \right)$ is a diagonal matrix which then simplifies \eqref{eqn:orthoPdfTemp} to
    \begin{equation}\label{eqn:orthoPdf}
        \ellipsePdf{\statex} \coloneqq 
        \begin{cases}
            \dfrac{1}{\unitBall{\dimension}\prod_{\counterj=1}^\dimension \radius_\counterj},& \forall \statex \in \ellipseSet\\
            0,              & \text{otherwise},
        \end{cases}
    \end{equation}
    since the determinant is a linear operator, all rotation matrices have a unity determinant, $\det\left(\ellipseRotation\right) = 1$, and the determinant of a diagonal matrix is the product of its diagonal entries.
    As expected, \eqref{eqn:orthoPdf} is the inverse of the volume of an $\dimension$-dimensional hyperellipsoid with radii $\left\lbrace \radius_\counterj \right\rbrace_{\counterj=1}^\dimension$.
\end{proof}

\section{Proof of Lemma~\ref{lem:conv:expect}}\label{appx:infinite:expect}
This section restates and proves Lemma~\ref{lem:conv:expect}, which is used in support of Theorems~\ref{thm:conv:rrtstar}--\ref{thm:conv:informed}.
An earlier version of this proof appeared in \cite{gammell_arxiv14}.

\begin{lem:expect}[Expected next-iteration cost of minimum-path-length planning]
    The expected value of the next solution to a minimum-path-length problem, $\expect{\cnext}$, is bounded by
    \begin{equation*}
        \probBetter\frac{\dimension\ccur^2 + \cmin^2}{\left(\dimension + 1\right)\ccur} + \left(1-\probBetter\right)\ccur \leq \expect{\cnext} \leq \ccur,\revisit{eqn:lem:conv:expect}
    \end{equation*}
    where $\ccur$ is the current solution cost, $\cmin$ is the theoretical minimum solution cost, $\dimension$ is the state dimension of the planning problem, and $\probBetter = \prob{\xnew \in \fTrueSet}$ is the probability of adding a state that is a member of the omniscient set (i.e., that can belong to a better solution).
    While not explicitly shown, the subset, $\fTrueSet$, and the probability of improving the solution, $\probBetter$, are generally functions of the current solution cost.
    
    This lower bound is sharp over the set of all possible minimum-path-length planning problems and algorithm configurations and is exact for versions of \ac{RRTstar} with an infinite rewiring radius (i.e., $\maxEdge=\infty$, and $\rrrtstar=\infty$) searching an obstacle-free environment without constraints.
\end{lem:expect}%
\begin{proof}
    Proof of the upper bound is trivial.
    \ac{RRTstar} only accepts new solutions that improve its existing solution, assuring that the cost monotonically decreases,
    \begin{equation}\label{eqn:lem:infinite:expect:upper} 
        \cnext \leq \ccur.
    \end{equation}
    Proof of the lower bound comes from finding an exact expression for the expected value of the solution cost found in the absence of obstacles and constraints with an infinite rewiring neighbourhood.

    The expected solution cost of \ac{RRTstar} depends on the probability of sampling the omniscient set,
    \begin{align}\label{eqn:lem:infinite:expect:expectDefn}
        \expect{\cnext} ={} &\probBetter\expectst{\cnext}{\xnew \in\fTrueSet}\nonumber\\
         &{}+ \left(1-\probBetter\right)\expectst{\cnext}{\xnew \not\in\fTrueSet},
    \end{align}
    where $\probBetter = \prob{\xnew \in \fTrueSet}$.
    Adding a state from the omniscient set, $\fTrueSet$, is a necessary condition to improve the solution (Lemma~\ref{lem:necessary:exact}) and any other state will not change the solution cost, $\expectst{\cnext}{\xnew\not\in\fTrueSet} = \ccur$.
    This simplifies \eqref{eqn:lem:infinite:expect:expectDefn} to
    \begin{equation}\label{eqn:lem:infinite:expect:simpleDefn}
        \expect{\cnext} = \probBetter\expectst{\cnext}{\xnew \in\fTrueSet} + \left(1-\probBetter\right)\ccur.
    \end{equation}
    The costs of solutions found by adding states inside the omniscient are bounded from below by the optimal path through the newly added state,
    \begin{equation}\label{eqn:lem:infinite:expect:expect1}
        \expectst{\cnext}{\xnew\in\fTrueSet} \geq \expectst{\fTrue{\xnew}}{\xnew\in\fTrueSet},
    \end{equation}
    where $\fTrue{\statex}$ is the cost of the optimal path from the start to the goal constrained to pass through a state, $\statex$.
    With a uniform sample distribution over $\fTrueSet$ the right-hand side of \eqref{eqn:lem:infinite:expect:expect1} becomes 
    \begin{equation*}
        \expectst{\fTrue{\xnew}}{\xnew\in\fTrueSet} = \frac{1}{\lebesgue{\fTrueSet}}\int_{\fTrueSet} \fTrue{\xnew}dV.
    \end{equation*}
    
    When \ac{RRTstar} uses an infinite rewiring radius it attempts connections between every new state and the start and goal.
    In the absence of obstacles and constraints these paths will be feasible and represent the optimal solutions using the state.
    This makes the expected value of this best-case configuration of \ac{RRTstar} equivalent to the expected optimal solution cost in the absence of obstacles,
    \begin{equation}\label{eqn:lem:infinite:expect:expect2}
        \expectst{\cnext}{\xnew\in\fTrueSet}^* \equiv \expectst{\fTrue{\xnew}}{\xnew\in\fTrueSet}.
    \end{equation}
    The lower bound provided by \eqref{eqn:lem:infinite:expect:expect1} is therefore sharp over the set of all possible planning problems and algorithm configurations.
    
    In this absence of obstacles and constraints, the optimal solution using any state is given by \eqref{eqn:fBelow} and the omniscient set is the prolate hyperspheroid, $\fTrueSet\equiv\fBelowSet\equiv\phsSet$.
    The measure of the omniscient set, $\lebesgue{\fTrueSet}=\phsMeasure$, is given by \eqref{eqn:phsMeasure}.
    This allows \eqref{eqn:lem:infinite:expect:expect2} to be written as 
    \begin{align}\label{eqn:lem:infinite:expect:expect3}
        \expectst{\cnext}{\xnew\in\fTrueSet}^* = &\frac{1}{\phsMeasure}\int_{\phsSet}\left( \norm{\statex - \xstart}{2}
                                                   \vphantom{+ \norm{\xgoal - \statex}{2}}\right.\nonumber\\
                                                  &\qquad\quad\;\; \left.\vphantom{\norm{\statex - \xstart}{2}}
                                                  {}+ \norm{\xgoal - \statex}{2}\right)dV.
    \end{align}

    The prolate hyperspheroidal coordinates, $\mu, \nu, \psi_1, \ldots, \psi_{\dimension-2}$,
    \begin{align*}
        x_1 &= a\cosh\mu\cos\nu,\\
        x_2 &= a\sinh\mu\sin\nu\cos\psi_1,\nonumber\\
        x_3 &= a\sinh\mu\sin\nu\sin\psi_1\cos\psi_2,\nonumber\\
        &\vdots\nonumber\\
        x_{n-1} &= a\sinh\mu\sin\nu\sin\psi_1\sin\psi_2\ldots\sin\psi_{n-3}\cos\psi_{n-2},\nonumber\\
        x_{n} &= a\sinh\mu\sin\nu\sin\psi_1\sin\psi_2\ldots\sin\psi_{n-3}\sin\psi_{n-2},\nonumber
    \end{align*}
    and the parameterization $a = 0.5\cmin$, simplifies \eqref{eqn:fBelow} to
    \begin{equation}\label{eqn:lem:infinite:expect:elliptical}
        \fTrue{\statex} = \cmin\cosh{\mu}.
    \end{equation}

    Substituting \eqref{eqn:lem:infinite:expect:elliptical} and the prolate hyperspheroidal differential volume,
    \begin{align*}
        dV = a^\dimension\left(\sinh^2\mu + \sin^2\nu\right)& \sinh^{\dimension-2}\mu\sin^{\dimension-2}\nu\sin^{\dimension-3}\psi_1\ldots\\
        &\;\;\sin\psi_{\dimension-3}\,d\mu\,d\nu\,d\psi_1\,\ldots\,d\psi_{\dimension-2},
    \end{align*}
    into \eqref{eqn:lem:infinite:expect:expect3} results in
    \begin{align}\label{eqn:lem:infinite:expect:expect4}
        \expectSymb&\expectstBrackets{\cnext}{\xnew\in\fTrueSet}^* = \frac{\cmin^{\dimension+1}}{2^\dimension \phsMeasure}\int_{\mu = 0}^{\mu_\iter}\int_{\nu = 0}^{\pi}\int_{\psi_1 = 0}^{\pi}\nonumber\\
        &\qquad\ldots\int_{\psi_{\dimension-3} = 0}^{\pi}\int_{\psi_{\dimension-2} = 0}^{2\pi} \left(\sinh^2\mu + \sin^2\nu \right)\nonumber\\
        &\qquad\qquad\sinh^{\dimension-2}\mu\cosh\mu\sin^{\dimension-2}\nu\sin^{\dimension-3}\psi_1\nonumber\\
        &\qquad\qquad\ldots \sin\psi_{\dimension-3}\,d\mu\,d\nu\,d\psi_1,\ldots d\psi_{\dimension-2},
    \end{align}
    where the integration limit for $\mu$ is derived from \eqref{eqn:lem:infinite:expect:elliptical} as
    \begin{equation}\label{eqn:lem:infinite:expect:cosh_lim}
         \cosh\mu_\iter \coloneqq \frac{\ccur}{\cmin}.
     \end{equation}
    
    Integrating \eqref{eqn:lem:infinite:expect:expect4} requires applying a series of identities, first
    \begin{align*}
        \left(\dimension-1\right)\unitBall{\dimension-1} \equiv \int_{\psi_1 = 0}^{\pi} \ldots &  \int_{\psi_{\dimension-3} = 0}^{\pi} \int_{\psi_{\dimension-2} = 0}^{2\pi}\sin^{\dimension-3}\psi_1 \nonumber\\
        &\ldots \sin\psi_{\dimension-3}\,d\psi_1 \ldots \,d\psi_{\dimension-2},
    \end{align*}
    simplifies \eqref{eqn:lem:infinite:expect:expect4} to
    \begin{align}\label{eqn:lem:infinite:expect:expect5}
        \expectst{\cnext}{\xnew\in\fTrueSet}^*& = \frac{\left(\dimension-1\right)\cmin^{\dimension+1}\unitBall{\dimension-1}}{2^\dimension \phsMeasure}\nonumber\\
        &\int_{\mu = 0}^{\mu_i}\int_{\nu = 0}^{\pi}\left(\sinh^2\mu + \sin^2\nu \right)\nonumber\\
        &\sinh^{\dimension-2}\mu\cosh\mu\sin^{\dimension-2}\nu
        \,d\mu\,d\nu.
    \end{align}
    Next, the definite integral of the product of powers of $\sin$ and $\cos$,
    \begin{equation*}
        \int_0^\pi \sin^{2m-1}\theta \cos^{2n-1}\theta \,d\theta \equiv \betaFunc{m}{\dimension},
    \end{equation*}
    where $\betaFunc{\cdot}{\cdot}$ is the beta function,
    \begin{equation*}
        \betaFunc{m}{\dimension} \coloneqq \int_{0}^{1}t^{m-1}\left(1 - t\right)^{\dimension-1}\,dt,
    \end{equation*}
    is used to evaluate the integral over $\nu$ in \eqref{eqn:lem:infinite:expect:expect5}, giving
    \begin{align}\label{eqn:lem:infinite:expect:expect6}
        \expectSymb&\expectstBrackets{\cnext}{\xnew\in\fTrueSet}^* = \frac{\left(\dimension-1\right)\cmin^{\dimension+1}\unitBall{\dimension-1}}{2^\dimension \phsMeasure}\nonumber\\
        &\qquad\left(
            \betaFunc{\frac{\dimension-1}{2}}{\frac{1}{2}}
            \int_{\mu = 0}^{\mu_i}
            \sinh^\dimension\mu\cosh\mu
            \,d\mu\right.\nonumber\\
        &\qquad\left. {}+
            \betaFunc{\frac{\dimension+1}{2}}{\frac{1}{2}}
            \int_{\mu = 0}^{\mu_i}
            \sinh^{\dimension-2}\mu\cosh\mu
            \,d\mu
        \right).
    \end{align}
    The identity,
    \begin{equation*}
        \betaFunc{m+1}{\dimension} \equiv \frac{m}{m+\dimension}\betaFunc{m}{\dimension},
    \end{equation*}
    and the recursive nature of the $\dimension$-dimensional unit ball,
    \begin{equation*}
        \unitBall{\dimension} \equiv \betaFunc{\frac{\dimension+1}{2}}{\frac{1}{2}}\unitBall{\dimension-1},
    \end{equation*}
    simplifies \eqref{eqn:lem:infinite:expect:expect6} to
    \begin{align}\label{eqn:lem:infinite:expect:expect7}
        \expectSymb&\expectstBrackets{\cnext}{\xnew\in\fTrueSet}^* \hspace{-0.25ex}=\hspace{-0.25ex} \frac{\cmin^{\dimension+1}\unitBall{\dimension}}{2^\dimension \phsMeasure}\nonumber
        \left(
            \dimension
            \int_{\mu = 0}^{\mu_i}
            \hspace{-0.5ex}\sinh^\dimension\mu\cosh\mu
            \,d\mu\right.\nonumber\\
        &\quad\qquad\qquad\left. {}+
            \left(\dimension-1\right)
            \int_{\mu = 0}^{\mu_i}
            \sinh^{\dimension-2}\mu\cosh\mu
            \,d\mu
        \right).
    \end{align}
    The indefinite integral,
    \begin{equation*}
        \int\sinh^m\theta\cosh\theta\,d\theta \equiv \frac{\sinh^{m+1}\theta}{m+1},
    \end{equation*}
    is then used to evaluate \eqref{eqn:lem:infinite:expect:expect7}, giving
    \begin{align}\label{eqn:lem:infinite:expect:expect8}
        \expectSymb&\expectstBrackets{\cnext}{\xnew\in\fTrueSet}^* = \frac{\cmin^{\dimension+1}\unitBall{\dimension}}{2^\dimension \phsMeasure}\nonumber\\
        &\qquad\qquad\qquad
        \left(
            \frac{\dimension}{\dimension+1}\sinh^{\dimension+1}\mu_i
        +
            \sinh^{\dimension-1}\mu_i
        \right).
    \end{align}
    Using \eqref{eqn:phsMeasure} to expand the measure $\phsMeasure$ in \eqref{eqn:lem:infinite:expect:expect8} cancels the measure of the unit $\dimension$-ball, giving 
    \begin{align}\label{eqn:lem:infinite:expect:expect9}
        \expectSymb&\expectstBrackets{\cnext}{\xnew\in\fTrueSet}^* = \frac{\cmin^{\dimension+1}}{\ccur\left(\ccur^2 - \cmin^2\right)^{\frac{\dimension - 1}{2}}}\nonumber\\
        &\qquad\qquad\qquad
        \left(
            \frac{\dimension}{\dimension+1}\sinh^{\dimension+1}\mu_i
        +
            \sinh^{\dimension-1}\mu_i
        \right).
    \end{align}
    Using the relationship
    \begin{equation*}
        \cosh{\mu} = b \iff \sinh{\mu} = \sqrt{b^2 - 1},
    \end{equation*}
    some algebraic manipulation, and \eqref{eqn:lem:infinite:expect:cosh_lim} finally simplifies \eqref{eqn:lem:infinite:expect:expect9} to
    \begin{equation*}
        \expectst{\cnext}{\xnew\in\fTrueSet}^* = \frac{\dimension\ccur^2 + \cmin^2}{\left(\dimension+1\right)\ccur},
    \end{equation*}
    an exact value for the best-case expected solution cost of \ac{RRTstar}.
    This result allows \eqref{eqn:lem:infinite:expect:simpleDefn} to be written as the sharp bound,
    \begin{equation*}
        \expect{\cnext} \geq \probBetter\frac{\dimension\ccur^2 + \cmin^2}{\left(\dimension + 1\right)\ccur} + \left(1-\probBetter\right)\ccur,
    \end{equation*}
    which when combined with \eqref{eqn:lem:infinite:expect:upper} proves Lemma~\ref{lem:conv:expect}.
\end{proof}

\section{Proofs of Theorems~\ref{thm:conv:rrtstar}--\ref{thm:conv:informed}}\label{appx:infinite:rate}
This section restates and proves Theorems~\ref{thm:conv:rrtstar}--\ref{thm:conv:informed}.

\subsection{Proof of Theorem~\ref{thm:conv:rrtstar}}\label{appx:infinite:rate:rrtstar}
\begin{thm:conv_rrtstar}[Sublinear convergence of \acs{RRTstar} in holonomic minimum-path-length planning]
    \ac{RRTstar} converges \emph{sublinearly} towards the optimum of holonomic minimum-path-length planning problems,
    \begin{equation*}
        \expect{\rrtstarRate} = 1.\revisit{eqn:thm:conv:rrtstar}
    \end{equation*}
    
    For simplicity, this statement is limited to holonomic planning but it can be extended to specific constraints by expanding Lemma~\ref{lem:necessary:exact:sample}.
\end{thm:conv_rrtstar}
\begin{proof}
    The expected rate of convergence (Definition~\ref{defn:conv}) of \ac{RRTstar} is
    \begin{equation}\label{eqn:thm:conv:rrtstar:defn1}
        \expect{\rrtstarRate} = \expect{\limitoinf\frac{\ccur - \copt}{\cprev - \copt}},
    \end{equation}
    since $\forall\iter,\, \ccur\geq\copt$.
    As \ac{RRTstar} almost-surely converges asymptotically to the optimum, this sequence also almost-surely converges to a finite value, $0 \leq \rrtstarRate \leq 1$,
    \begin{equation*}
        \prob{\limitoinf \frac{\ccur - \copt}{\cprev - \copt} = \rrtstarRate} = 1.
    \end{equation*}
    By Lebesgue's dominated convergence theorem this allows the expectation operator to be brought inside the limit of \eqref{eqn:thm:conv:rrtstar:defn1}, giving
    \begin{equation}\label{eqn:thm:conv:rrtstar:defn2}
        \expect{\rrtstarRate} = \limitoinf\frac{\expect{\ccur} - \copt}{\cprev - \copt},
    \end{equation}
    since $\ccur$ is the only random variable at iteration $\iter$.

    Lemma~\ref{lem:conv:expect} provides sharp bounds for the expected solution cost at any iteration, $\expect{\ccur}$, with the lower-bound corresponding to an infinite rewiring radius in the absence of obstacles and constraints.
    Substituting this lower bound and that $\copt=\cmin$ in the absence of obstacles into \eqref{eqn:thm:conv:rrtstar:defn2} and simplifying gives an expression for the expected \emph{best-case} convergence rate,
    \begin{equation*}
        \expect{\rrtstarRate^*} = 1 +  \frac{1}{\left(\dimension+1\right)}
                                        \limitoinf \frac{
                                                            \probBetter\left(\cmin^2 - \cprev^2\right)
                                                        }
                                                        {
                                                            \left(\cprev^2 - \cmin\cprev\right)
                                                        },
    \end{equation*}
    such that $\expect{\rrtstarRate^*} \leq \expect{\rrtstarRate}$ is a sharp bound over all possible planning problems and algorithm configurations.
    Applying l'H\^{o}pital's rule \cite{hopitals_rule} with respect to $\cprev$ gives
    \begin{align}\label{eqn:thm:conv:rrtstar:lhopital}
        \expect{\rrtstarRate^*} = 1 &{}+ \frac{1}{\left(\dimension+1\right)}\nonumber\\
                                        &\quad
                                        \limitoinf \frac{
                                                             \frac{\partial\probBetter}{\partial\cprev}\left(\cmin^2 - \cprev^2\right)
                                                               - 2\probBetter\cprev
                                                        }
                                                        {
                                                           \left(2\cprev-\cmin\right)
                                                        }.
    \end{align}
    
    As iterations go to infinity the probability of adding a sample in $\fTrueSet$ becomes the probability of sampling it (Lemma~\ref{lem:necessary:exact:sample}).
    The lower bound from Lemma~\ref{lem:conv:expect} is for an obstacle-{} and constraint-free problem and therefore the informed set is the omniscient set, $\fTrueSet \equiv \fBelowSet$, and the probability of sampling it is given by \eqref{eqn:sampleProb} with a partial derivative of
    \begin{align*}
        \frac{\partial\probBetter}{\partial\cprev} = \frac{\pi^{\frac{\dimension}{2}}}{2^\dimension\gammaFunc{\frac{\dimension}{2} + 1}\lebesgue{\samplingSet}}
                                                              &\left(\cprev^2 - \cmin^2\right)^{\frac{\dimension-1}{2}}\\
                                                              &\left(1 + \frac{\left(\dimension-1\right)\cprev^2}{\left(\cprev^2-\cmin^2\right)} \right).
    \end{align*}
    
    Almost-sure convergence to $\cmin$ implies $\limitoinf\cprev=\cmin$ and therefore $\limitoinf\probBetter=0$ and $\limitoinf\frac{\partial\probBetter}{\partial\cprev}=0$, making \eqref{eqn:thm:conv:rrtstar:lhopital},
    \begin{equation*}
        \expect{\rrtstarRate^*} = 1.
    \end{equation*}
    As by definition the expected rate of convergence of \ac{RRTstar} is bounded by,
    \begin{equation*}
        \expect{\rrtstarRate^*} \leq \expect{\rrtstarRate} \leq 1,
    \end{equation*}
    this result proves Theorem~\ref{thm:conv:rrtstar}.
\end{proof}

\subsection{Proof of Theorem~\ref{thm:conv:reject}}\label{appx:infinite:rate:reject}
\begin{thm:conv_reject}[Linear convergence of \acs{RRTstar} with adaptive rectangular rejection sampling in holonomic minimum-path-length planning]
    \ac{RRTstar} with adaptive rectangular rejection sampling converges at best \emph{linearly} towards the optimum of holonomic minimum-path-length planning problems but factorially approaches sublinear convergence with increasing state dimension,
    \begin{equation*}
        1 - \frac{\pi^{\frac{\dimension}{2}}}{\left(\dimension+1\right)2^{\dimension-1}\gammaFunc{\frac{\dimension}{2}+1}} \leq \expect{\rejectRate} \leq 1. \revisit{eqn:thm:conv:reject}
    \end{equation*}
    
    For simplicity, this statement is limited to holonomic planning but it can be extended to specific constraints by expanding Lemma~\ref{lem:necessary:exact:sample}.
\end{thm:conv_reject}
\begin{proof}
    Proof of Theorem~\ref{thm:conv:reject} follows that of Theorem~\ref{thm:conv:rrtstar} but with the probability of adding a new state from $\fTrueSet$ instead calculated from \eqref{eqn:sampleProb} using \eqref{eqn:thm:curse:tightMeasure}, as
    \begin{equation}\label{eqn:thm:conv:reject:pdf}
        \pdfSymb_{\solutionCost} \leq \frac{\pi^{\frac{\dimension}{2}}}{2^\dimension\gammaFunc{\frac{\dimension}{2} + 1}}.
    \end{equation}
    As $\frac{\partial\probBetter}{\partial\cprev}=0$, \eqref{eqn:thm:conv:rrtstar:lhopital} becomes
    \begin{equation}\label{eqn:thm:conv:reject:lhopital}
        \expect{\rejectRate^*}  = 1 - \frac{1}{\left(\dimension+1\right)}
                                        \limitoinf\frac{
                                                          2\probBetter\cprev
                                                       }
                                                       {
                                                          \left(2\cprev-\cmin\right)
                                                       }.
    \end{equation}
    Noting that almost-sure convergence to $\cmin$ implies $\limitoinf\cprev=\cmin$ and substituting \eqref{eqn:thm:conv:reject:pdf} into \eqref{eqn:thm:conv:reject:lhopital} results in
    \begin{align*}
        \expect{\rejectRate^*} &= 1 - \frac{2\probBetter}{\left(\dimension+1\right)},\\
                             & \geq 1 - \frac{\pi^{\frac{\dimension}{2}}}{\left(\dimension+1\right)2^{\dimension-1}\gammaFunc{\frac{\dimension}{2} + 1}}.
    \end{align*}
    As by definition the expected rate of convergence of \ac{RRTstar} with rectangular rejection sampling is bounded by,
    \begin{equation*}
        \expect{\rejectRate^*} \leq \expect{\rejectRate} \leq 1.
    \end{equation*}
    This result proves Theorem~\ref{thm:conv:reject} with sharp bounds over all possible holonomic planning problems and algorithm configurations.
\end{proof}

\subsection{Proof of Theorem~\ref{thm:conv:informed}}\label{appx:infinite:rate:informed}
\begin{thm:conv_informed}[Linear convergence of Informed \acs{RRTstar} in holonomic minimum-path-length planning]
    Informed \acs{RRTstar} converges at best \emph{linearly} towards the optimum of holonomic minimum-path-length planning problems,
    \begin{equation*}
         \frac{n-1}{n+1} \leq \expect{\informedRate} \leq 1,\revisit{eqn:thm:conv:informed}
    \end{equation*}
    where the lower-bound occurs exactly with an infinite rewiring neighbourhood in the absence of obstacles and constraints.
    
    For simplicity, this statement is limited to holonomic planning but it can be extended to specific constraints by expanding Lemma~\ref{lem:necessary:exact:sample}.
\end{thm:conv_informed}
\begin{proof}
    Proof of Theorem~\ref{thm:conv:informed} follows that of Theorem~\ref{thm:conv:rrtstar} but with a unity probability of adding a new state from $\fTrueSet$.
    From \eqref{eqn:thm:conv:rrtstar:lhopital}, the convergence rate of Informed \acs{RRTstar} is then,
    \begin{equation*}
        \expect{\informedRate^*} = 1 - \frac{1}{\left(\dimension+1\right)}
                                        \limitoinf\frac{
                                                            2\cprev
                                                       }
                                                       {
                                                            \left(2\cprev-\cmin\right)
                                                       }.
    \end{equation*}
    As almost-sure convergence to $\cmin$ implies $\limitoinf\cprev = \cmin$, this gives,
    \begin{equation*}
        \expect{\informedRate^*} = \frac{\dimension-1}{\dimension+1}.
    \end{equation*}
    As by definition the expected rate of convergence of \ac{RRTstar} with rectangular rejection sampling is bounded by,
    \begin{equation*}
        \expect{\informedRate^*} \leq \expect{\informedRate} \leq 1.
    \end{equation*}
    This result proves Theorem~\ref{thm:conv:informed} with sharp bounds over all possible holonomic planning problems and algorithm configurations.
\end{proof}

\section*{Acknowledgment}
The authors would like to thank the editorial board and reviewers for their helpful comments.
We would also like to thank Laszlo-Peter Berczi for discussions on precision and recall, Christopher Dellin and Michael Koval for discussions and insight on the sampling of arbitrarily overlapping shapes, Jennifer King and Clinton Liddick for help running the \acs{HERB} experiments, and Paul Newman for providing the time to finish this manuscript at the University of Oxford.

%
%
%
%
%
%
%


%
%

%
\begin{IEEEbiography}[{\includegraphics[width=1in,height=1.25in,clip,keepaspectratio]{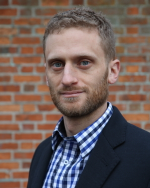}}]{Jonathan D.\ Gammell} is a Departmental Lecturer in Robotics with the Oxford Robotics Institute (ORI) at the University of Oxford, Oxford, United Kingdom.
He performed this work at the Autonomous Space Robotics Lab at the University of Toronto, Toronto, Canada during his Ph.\ D.\ degree.
He is interested in developing conceptually sound approaches to the fundamental problems of real-world autonomous robotics.
\end{IEEEbiography}

\begin{IEEEbiography}[{\includegraphics[width=1in,height=1.25in,clip,keepaspectratio]{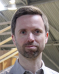}}]{Timothy D.\ Barfoot} is a Professor at the University of Toronto Institute for Aerospace Studies (UTIAS). He holds the Canada Research Chair (Tier II) in Autonomous Space Robotics and works in the area of guidance, navigation, and control of mobile robots in a variety of applications.
He is interested in developing methods to allow mobile robots to operate over long periods of time in large-scale, unstructured, three-dimensional environments, using rich onboard sensing (e.g., cameras and laser rangefinders) and computation.

Dr. Barfoot took up his position at UTIAS in May 2007, after spending four years at MDA Space Missions, where he developed autonomous vehicle navigation technologies for both planetary rovers and terrestrial applications such as underground mining.
He sits on the editorial boards of the International Journal of Robotics Research and the Journal of Field Robotics.
He recently served as the General Chair of Field and Service Robotics (FSR) 2015, which was held in Toronto.
\end{IEEEbiography}
\begin{IEEEbiography}[{\includegraphics[width=1in,height=1.25in,clip,keepaspectratio]{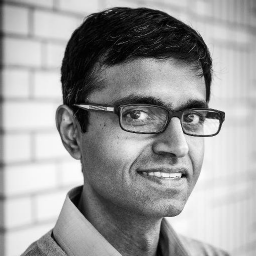}}]{Siddhartha S.\ Srinivasa} is the Boeing Endowed Professor at the School of Computer Science and Engineering, University of Washington. He works on robotic manipulation, with the goal of enabling robots to perform complex manipulation tasks under uncertainty and clutter, with and around people. To this end, he founded the Personal Robotics Lab in 2005. Dr.\ Srinivasa is also passionate about building end-to-end systems (HERB, ADA, HRP3, CHIMP, Andy, among others) that integrate perception, planning, and control in the real world. Understanding the interplay between system components has helped produce state of the art algorithms for robotic manipulation, motion planning, object recognition and pose estimation (MOPED), and dense 3D modelling (CHISEL, now used by Google Project Tango), and mathematical models for Human-Robot Collaboration.
\end{IEEEbiography}
\end{document}